\title{%
    Do place cells dream of conditional probabilities?\\
    Learning Neural Nystr\"om representations
}
\author{%
  Mariano Tepper \\
  Intel Labs \\
  Hillsboro, OR 97124 \\
  \texttt{mariano.tepper@intel.com} \\
}
\begin{document}
	
\maketitle

\begin{abstract}
    We posit that hippocampal place cells encode information about future locations under a transition distribution observed as an agent explores a given (physical or conceptual) space. The encoding of information about the current location, usually associated with place cells, then emerges as a necessary step to achieve this broader goal.
    We formally derive a biologically-inspired neural network from Nystr\"om kernel approximations and empirically demonstrate that the network successfully approximates transition distributions. The proposed network yields representations that, just like place cells, soft-tile the input space with highly sparse and localized receptive fields.
    Additionally, we show that the proposed computational motif can be extended to handle supervised problems, creating class-specific place cells while exhibiting low sample complexity.
\end{abstract}

\section{Introduction}
\label{sec:intro}

Neuroscientists have observed that the receptive fields of many neurons are localized in and effectively tile the parameter space they represent. For example, a V1 neuron responds to input localized in visual space and orientation \cite{hubelReceptiveFieldsBinocular1962}, an auditory neuron responds to input localized in frequency space \cite{knudsenCentersurroundOrganizationAuditory1978}, and a hippocampal place cell is active in a particular spatial location \cite{okeefeHippocampusCognitiveMap1978}.

Following~\cite{stachenfeldDesignPrinciplesHippocampal2014}, we posit that hippocampal place fields' purpose is to encode predictions about future locations under the transition distribution observed as an agent explores a given (physical or conceptual) space; encoding information about the current location is needed to achieve this broader goal. For example, the successor representation (SR)~\cite{dayanImprovingGeneralizationTemporal1993}, commonly used in reinforcement learning to model the agent's transition distribution, has an eigendecomposition akin to grid cells~\cite{stachenfeldDesignPrinciplesHippocampal2014}. Low-rank approximations are appealing as they embody the efficiency/accuracy dilemma when learning representations.
This brings forward the main question of this work: Can place cells also arise from low-rank approximations of a transition distribution? We show that this is indeed possible by introducing a biologically-inspired neural network that leverages the low-rank Nystr\"om method~\cite{williamsUsingNystromMethod2001}. 

In the machine learning literature, approximating an input conditional probability with an output kernel is a recurring theme. Among other key exemplars~\cite[e.g.,][]{bojanowskiEnrichingWordVectors2017,globersonEuclideanEmbeddingCooccurrence2007}, we have SNE~\cite{hintonStochasticNeighborEmbedding2003} and t-SNE~\cite{vandermaatenVisualizingHighdimensionalData2008} for data visualization, and word2vec~\cite{mikolovDistributedRepresentationsWords2013} and GLOVE~\cite{penningtonGloveGlobalVectors2014} to produce vector embeddings for words.
In all cases, the input conditional probability models the transition from a given datum to any other datum in the dataset.
While in different settings these input distributions take different forms, our formulation can work with any of them. Next, we summarize some of these alternatives. 

\textbf{Row-normalized kernels.}
Often, in many forms of machine learning, the input conditional probability distribution takes the form of a row-normalized kernel, i.e.,
\begin{equation}
    p_{\text{in}} (\vect{y} | \vect{x}) \coloneqq
    \tfrac{
        K_{\text{in}} (\vect{x}, \vect{y})
    }{
        \int K_{\text{in}} (\vect{x}, \vect{z})
        \, d\vect{z}
    }
    .
    \label{eq:kernel_input_proba}
\end{equation}
For computational simplicity, the denominator is usually computed only over the observed data $\vect{z} \in \{ \vect{x}_i \}_{i=1}^{n}$.
This form has shown great success for nonlinear dimensionality reduction~\cite{belkinLaplacianEigenmapsDimensionality2003,coifmanGeometricDiffusionsTool2005,hintonStochasticNeighborEmbedding2003,vandermaatenVisualizingHighdimensionalData2008}.

\textbf{Semidefinite programs.}
Peng et al.~\cite{pengApproximatingKmeanstypeClustering2007} introduced a semidefinite relaxation of k-means clustering which yields the same solution as the non-convex original formulation for datasets with well segregated clusters.
For dataset $\mat{X} \in \Real^{d \times n}$ with $n$ datapoints in $d$ dimensions, the formulation is
\begin{equation}
    \mat{Q}^{*} = \argmax_{\mat{Q} \in \Real^{n \times n}}  \traceone{\transpose{\mat{X}} \mat{X} \mat{Q}}
	\quad \text{s.t.} \quad
	\begin{gathered}
	    \mat{Q} \psd \mat{0} ,\quad
	    \mat{Q} \vect{1} = \vect{1} ,\quad
    	\traceone{\mat{Q}} = k  ,\quad
    	\mat{Q} \geq \mat{0} .
	\end{gathered}
    \label[problem]{eq:sdp-kmeans}
\end{equation}
Alternatively, the trace constraint can be replaced by $\diag{\mat{Q}} = \tfrac{k}{n} \vect{1}$.
Unexpectedly, when data contains (greater than zero-dimensional) manifolds, the semidefinite program's solution, $\mat{Q}^*$, captures such geometrical structures~\cite{tepperClusteringSemidefinitelyNot2018}. 
In this setting, $\mat{Q}^*$ is effectively a kernel that is automatically learned from data~\cite{tepperClusteringSemidefinitelyNot2018}. Under the constraints $\mat{Q} \vect{1} = \vect{1}$ and $\mat{Q} \geq \mat{0}$, we can consider $\mat{Q}^{*}$ as a conditional probability of reaching, from any starting data point, any other data point.

\textbf{Successor representations.}
Here, we consider an agent interacting with its environment in a
sequential manner. Starting from a state $s_0 \in \set{S}$, at each
step $t$ the agent takes an action $a_t \in \set{A}$ following some policy $\pi$, to which the environment responds with a state $s_{t+1} \in \set{S}$ according to a transition probability function $p(s_{t+1} | s_{t}, a_{t})$.
Then, the SR~\cite{dayanImprovingGeneralizationTemporal1993} is $\mat{\Psi} (s, s') = \mathbb{E}_{\pi, p} \left[ \sum_{t=0}^{\infty} \alpha^t \indicator{s_t = s'} \big| s_0 = s \right]$, where $\indicator{\cdot}$ is the indicator function and $\alpha$ encodes future discounts.
By linearity of expectation, $\sum_{s'} \mat{\Psi} (s, s') = (1 - \alpha)^{-1}$.
We define the conditional probability $p_{\text{in}}$ of landing in state $s'$ given the current state $s$ as
\begin{equation}
    \textstyle
    p_{\text{in}} (s' | s) \coloneqq \left( 1 - \alpha \right)
    \mathbb{E}_{\pi, p} \left[ \sum_{t=0}^{\infty} \alpha^t \indicator{s_t = s} \big| s_0 = s \right] .
    \label{eq:successor_representation_input_proba}
\end{equation}

\subsection{Related work}

Our work takes inspiration from convolutional kernel networks (CKN)~\cite{mairalConvolutionalKernelNetworks2014,mairalEndtoendKernelLearning2016}. CKNs replace the matrix-vector multiplications used in convolutional networks by kernel feature maps and are used for supervised classification. Their feature maps use Nystr\"om-like approximations. CKNs are also related to radial basis function networks~\cite{broomheadRadialBasisFunctions1988} and self-organizing maps~\cite{kohonenSelforganizedFormationTopologically1982}.

Our model also shares some similarities with the bag-of-features (BoF) approach, as place cells can be interpreted as soft-quantizers of their input space.
Passalis et al.~\cite{passalisLearningBagoffeaturesPooling2017} proposed a convolutional neural network that incorporates a BoF layer composed of $\ell_1$-normalized neurons with RBF receptive fields. They use this model for supervised classification. 

Sengupta et al.~\cite{senguptaManifoldtilingLocalizedReceptive2018} show that localized receptive fields emerge in similarity-preserving
networks of rectifying neurons. These networks learn to represent low-dimensional manifolds populated by sensory inputs and yield localized receptive fields tiling these manifolds.

\textbf{Contributions.}
In \cref{sec:neustrom}, we show that place cells can arise from low-rank approximations of conditional probabilities. We do so by leveraging the Nystr\"om method within a siamese network architecture. We streamline Nystr\"om approximations by making use of key kernel properties, resulting in a biologically-inspired model with reduced computational cost compared to classical Nystr\"om. Additionally, in \cref{sec:optimization} we handle the optimization problem in a biologically-inspired and yet computationally efficient way.

In \cref{sec:supervised}, we show that once place cells are learned in an unsupervised fashion, supervised learning becomes easier, similarly to BoF that classically facilitates supervised learning. In contrast to these methods, we show that the very same siamese architecture can be re-used, causing the emergence of goal/class specific place cells while using the same computational motif.

In \cref{sec:episodic}, we show that the proposed formulation does not need to have explicit access to $p_{\text{in}}$. Using successor representations as an example, we derive an algorithm for episodic computing that drawing samples from $p_{\text{in}}$ (which can be done without actually knowing $p_{\text{in}}$).

In \cref{sec:results}, we present experimental results.
We provide some concluding remarks in \cref{sec:conclusions}.

\section{Neural Nystr\"om representations}
\label{sec:neustrom}

The Kullback-Leibler (KL) divergence is popular in the machine learning literature to compare a distribution and some learned approximation~\cite{bojanowskiEnrichingWordVectors2017,globersonEuclideanEmbeddingCooccurrence2007,hintonStochasticNeighborEmbedding2003,leeAlgorithmsNonnegativeMatrix2000,mikolovDistributedRepresentationsWords2013,vandermaatenVisualizingHighdimensionalData2008,yangKullbackLeiblerDivergenceNonnegative2011}.
Given an input conditional probability $p_{\text{in}}$, we seek the conditional probability $p_{\text{out}}$ that minimizes the KL divergence. Let $p_{\text{out}}$ be defined on a space $\set{V}$ such that $\vect{v}_{\vect{x}} \in \set{V}$ is computed from $\vect{x}$ (in our work, using a neural network). Often, the objective is to find the mapping $\vect{v}$ itself. Our loss function is then
\begin{equation}
	\mathcal{L}
	= 
	\iint p_{\text{in}} (\vect{y} | \vect{x})
	\log
	\tfrac{
	    p_{\text{in}} (\vect{y} | \vect{x})
	}{
	    p_{\text{out}} (\vect{v}_{\vect{y}} | \vect{v}_{\vect{x}})
	}
	\, d\vect{x} d\vect{y}
	= 
	- \iint p_{\text{in}} (\vect{y} | \vect{x})
	\log p_{\text{out}} (\vect{v}_{\vect{y}} | \vect{v}_{\vect{x}})
	\, d\vect{x} d\vect{y}
	.
	\label{eq:kernel_kl-divergence}
\end{equation}
Our main assumption is that $p_{\text{in}}$ is dominated by local interactions and far-away interactions are negligible (a common assumption in the literature, e.g., in t-SNE~\cite{vandermaatenVisualizingHighdimensionalData2008} and word2vec~\cite{mikolovDistributedRepresentationsWords2013}). We can then represent the joint distribution $p_{\text{out}}$ with a kernel $K$, i.e.,
$p_{\text{out}} (\vect{v}_{\vect{x}} , \vect{v}_{\vect{y}}) \propto K(\vect{v}_{\vect{x}}, \vect{v}_{\vect{y}})$.
For clarity of exposition, we simply consider the RBF kernel
$
	K(\vect{v}_{\vect{x}}, \vect{v}_{\vect{y}}) = \exp (-\norm{\vect{v}_{\vect{x}} - \vect{v}_{\vect{y}}}{2}^2 )
$ ,
although there are many suitable choices and our formulations can handle them seamlessly.
From the definition of a conditional probability, i.e., $p_{\text{out}} (\vect{v}_{\vect{x}} , \vect{v}_{\vect{y}}) = p_{\text{out}} (\vect{v}_{\vect{y}} | \vect{v}_{\vect{x}}) \cdot \int p_{\text{out}} (\vect{v}_{\vect{x}} , \vect{v}_{\vect{z}}) \, d\vect{v}_{\vect{z}}$, we can write
\begin{equation}
	\mathcal{L}
	= 
	- \iint p_{\text{in}} (\vect{y} | \vect{x})
	\log
	\tfrac{
	    K(\vect{v}_\vect{x}, \vect{v}_\vect{y})
	}{
	    \int K(\vect{v}_{\vect{x}}, \vect{v}_{\vect{z}})
        \, d\vect{v}_{\vect{z}}
	}
	\, d\vect{x} d\vect{y}
	.
	\label{eq:KL_loss}
\end{equation}
So far, the only parameters of the model are the ones used to compute $\vect{v}_{\vect{x}}$ from $\vect{x}$.

\subsection{Nystr\"om kernel approximation}

While we seek to approximate $p_{\text{in}}$ with $K$, we simultaneously seek a low-rank representation of $K$, which will provide computational efficiency.
The Nystr\"om method~\cite{williamsUsingNystromMethod2001} has proven successful for  this task and we use it as a starting point for our approach.
The Nystr\"om approximation is (see \cref{sec:nystrom} for a detailed justification)
\begin{equation}
    K(\vect{v}_{\vect{x}}, \vect{v}_{\vect{y}})
    \approx
    \transpose{\vect{k}_{\mat{W}, \vect{v}_{\vect{x}}}} \mat{K}_{\mat{W}, \mat{W}}^{-1} \vect{k}_{\mat{W}, \vect{v}_{\vect{y}}}
    ,
\end{equation}
where $\mat{W} = [\vect{w}_1, \dots, \vect{w}_r]$ is a collection of landmark points and
\begin{align}
	\vect{k}_{\mat{W}, \vect{v}}
	&= 
	\begin{bmatrix}
    	K(\vect{w}_1, \vect{v}) \\
        \vdots \\
        K(\vect{w}_r, \vect{v})
    \end{bmatrix}
    \in \Real_+^{r} ,
    &
    \mat{K}_{\mat{W}, \mat{W}}
    &=
	\begin{bmatrix}
    	K(\vect{w}_1, \vect{w}_1) & \cdots & K(\vect{w}_1, \vect{w}_r) \\
        \vdots & \ddots & \vdots \\
        K(\vect{w}_r, \vect{w}_1) & \cdots & K(\vect{w}_r, \vect{w}_r)
    \end{bmatrix}
    \in \Real_+^{r \times r}
    .
    \label{eq:nystrom_matrices}
\end{align}
We assume that $\mat{K}_{\mat{W}, \mat{W}}$ is invertible (otherwise, we take a pseudo-inverse).
Traditionally, when the set of vectors $\vect{v}_{\vect{x}}$ is fixed, placing the landmark points $\mat{W}$ wisely is key to the method's success.
Using $\mat{K}_{\mat{W}, \mat{W}}^{-1} = \mat{K}_{\mat{W}, \mat{W}}^{-1/2} \mat{K}_{\mat{W}, \mat{W}}^{-1/2}$,
we have
\begin{equation}
    K(\vect{v}_{\vect{x}}, \vect{v}_{\vect{y}})
    \approx
    \transpose{\vect{f}_{\vect{x}}} \vect{f}_{\vect{y}}
    ,
    \quad\quad
    \text{where}
    \quad\quad
    \vect{f}_{\vect{x}}
    \coloneqq
    \mat{K}_{\mat{W}, \mat{W}}^{-1/2} \vect{k}_{\mat{W}, \vect{v}_{\vect{x}}} .
    \label{eq:feature_nystrom}
\end{equation}
The computation of feature $\vect{f}_{\vect{x}}$ acts as a network with 3 blocks:
(1) \textbf{Embedding:} Given input $\vect{x}$, compute an embedding vector $\vect{v}_{\vect{x}}$ (this sub-network may contain multiple layers).
(2) \textbf{Kernel layer:} given input $\vect{v}_{\vect{x}}$, produce output $\vect{a}_{\vect{x}} = \vect{k}_{\mat{W}, \vect{v}_{\vect{x}}}$.
(3) \textbf{Fully-connected layer:} given input $\vect{a}_{\vect{x}}$, produce output $\vect{f}_{\vect{x}} = \mat{K}_{\mat{W}, \mat{W}}^{-1/2} \vect{a}_{\vect{x}}$.
The kernel and fully-connected layers share weights $\mat{W}$.
Now, plugging \cref{eq:feature_nystrom} into \cref{eq:KL_loss} we get
\begin{equation}
	\mathcal{L}
	=
	- \iint p_{\text{in}} (\vect{y} | \vect{x})
	\log
	\tfrac{
	    \transpose{\vect{f}_{\vect{x}}} \vect{f}_{\vect{y}}
	}{
	    \int \transpose{\vect{f}_{\vect{x}}} \vect{f}_{\vect{z}}
        \, d\vect{z}
	}
	\, d\vect{x} d\vect{y} .
\end{equation}
All model parameters ($\mat{W}$ and the embedding) can be updated using backpropagation~\cite{mairalEndtoendKernelLearning2016}.
In \cref{sec:optimization}, we discuss how to handle the partition function, i.e., the integral $\int \transpose{\vect{f}_{\vect{x}}} \vect{f}_{\vect{z}} \, d\vect{z}$. 

The Nystr\"om-based network architecture just presented has some limitations both from the computational and neuroscience viewpoints. Computing the inverse square root of an $r \times r$ matrix is expensive at $O(r^3)$ and needs to be handled with care to avoid numerical instability during backpropagation. From a neuroscience perspective, it is not biologically plausible to have different neurons sharing weights, as the update operations for the synaptic weights become necessarily non-local.

\subsection{Introducing Neural Nystr\"om representations}

We now present a method that overcomes the limitations of the Nystr\"om formulation.
First, we replace $\mat{K}_{\mat{W}, \mat{W}}^{-1/2}$ by a regular fully connected layer (not sharing weights with the kernel layer).
Let us denote these new weights by $\mat{M}$.
Notice that if $\mat{M}$ is a diagonal nonnegative matrix, the approximation $\mat{M} \vect{k}_{\mat{W}, \vect{v}_{\vect{x}}}$ becomes a Gaussian quadrature.

Next, we introduce nonlinearities that stem from key kernel characteristics.
For commonly used kernels, we have $K(\vect{v}_{\vect{x}}, \vect{v}_{\vect{y}}) \geq 0$ (an additional undesirable feature of the Nystr\"om representation is that there are no guarantees that $\transpose{\vect{f}_{\vect{x}}} \vect{f}_{\vect{y}} \geq \vect{0}$). Furthermore, we can exert control on the diagonal of the kernel approximation from the value of $K(\vect{v}_{\vect{x}}, \vect{v}_{\vect{x}})$. With many of the commonly used kernels, we have $K(\vect{v}_{\vect{x}}, \vect{v}_{\vect{x}}) = 1$ but this is not necessarily always the case. For example, the kernel $K(\vect{v}_{\vect{x}}, \vect{v}_{\vect{y}}) = \exp( \transpose{\vect{v}_{\vect{x}}} \vect{v}_{\vect{y}})$ is popular in word embedding models. Thus, we add the constraints
\begin{align}
    \mat{M} \vect{k}_{\mat{W}, \vect{v}_{\vect{x}}} &\geq \vect{0}
    ,
    &
    \norm{\mat{M} \vect{k}_{\mat{W}, \vect{v}_{\vect{x}}}}{2}^2 &= K(\vect{v}_{\vect{x}}, \vect{v}_{\vect{x}})
    .
\end{align}
These constraints (nonnegativity, fixed diagonal, and control of the partition function) are used  in \cref{eq:sdp-kmeans}.
They also recently appeared in a biologically-plausible neural network~\cite{senguptaManifoldtilingLocalizedReceptive2018} that learn manifold-tiling localized receptive fields from upstream network activity. Nonnegativity has a long use in neuroscience, supported by the nonnegativity of firing rates. Normalization also appears frequently in neuroscience~\cite[e.g.,][]{carandiniNormalizationCanonicalNeural2012} and is commonly considered as an inhibitory process~\cite{pehlevanClusteringNeuralNetwork2017,senguptaManifoldtilingLocalizedReceptive2018}.

Let $\left[ \cdot \right]_+$ denote the ReLU function. We then obtain the Neural Nystr\"om network architecture
\begin{align}
    \vect{g}_{\vect{x}}
    =
    \sigma ( \mat{M} \vect{k}_{\mat{W}, \vect{v}_{\vect{x}}}, \sqrt{K(\vect{v}_{\vect{x}}, \vect{v}_{\vect{x}})} )
    ,
    \label{eq:neustrom_net}
    \\
    \text{where,}\quad
    \sigma \left( \vect{a}, \lambda \right)
    =
    \tfrac{
        \lambda
    }{
        \norm{\left[ \vect{a} \right]_+}{2}
    } \left[ \vect{a} \right]_+
    .
    \label{eq:neustrom_nonlinearity}
\end{align}
In summary, the Neural Nystr\"om network architecture is similar to the one presented in the previous subsection, but with a few key differences. The \textbf{fully connected layer} (layer 3) does not share weights with the kernel layer. We add an additional layer with the nonlinearity defined in \cref{eq:neustrom_nonlinearity}.
The resulting Neural Nystr\"om architecture is depicted in \cref{fig:neustrom_architecture}, with the new loss function being
\begin{equation}
   	- \iint 
   	p_{\text{in}} (\vect{y} | \vect{x})
   	\log
   	\tfrac{
   	    \transpose{\vect{g}_{\vect{x}}} \vect{g}_{\vect{y}}
   	}{
   	    \int \transpose{\vect{g}_{\vect{x}}} \vect{g}_{\vect{z}}
        \, d\vect{z}
   	}
   		\, d\vect{x} d\vect{y} .
   	\label{eq:neustrom_loss}
\end{equation}

\begin{figure}
    \centering
    
    \begin{subfigure}{.57\textwidth}
        \includegraphics[width=\linewidth]{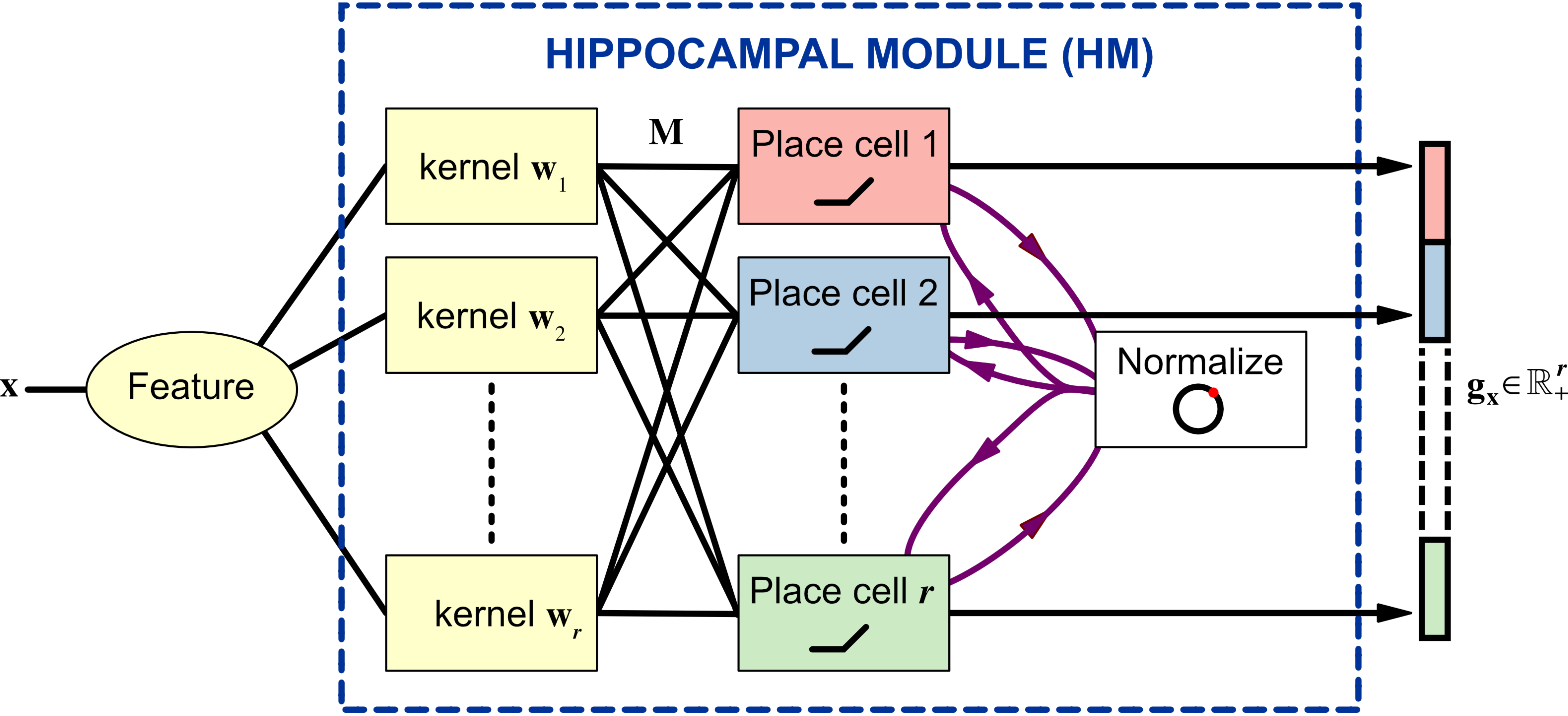}

        \caption{Neural Nystr\"om architecture.}
        \label{fig:neustrom_arch_hip_module}
    \end{subfigure}
    \hfill
    \begin{subfigure}{.37\textwidth}
        \includegraphics[width=\linewidth]{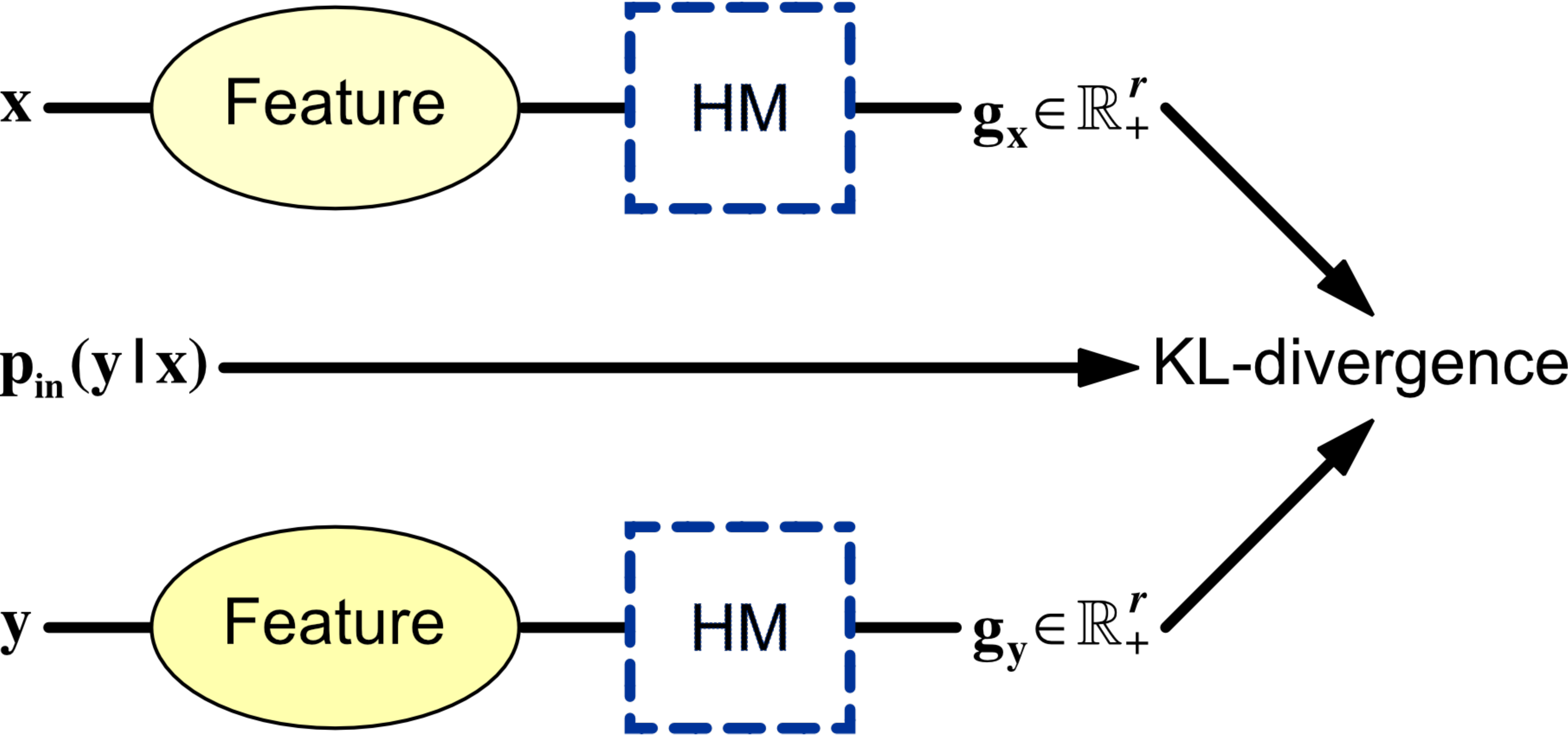}
        \caption{Siamese network architecture.}
        \label{fig:neustrom_arch_siamese}
    \end{subfigure}

    
    \caption{
        Neural Nystr\"om architecture, see \cref{eq:neustrom_net,eq:neustrom_loss}.
        The purple arrows to and from the Normalize block in (\subref{fig:neustrom_arch_hip_module}) express that normalization takes inputs from and produces an output distributed to every place cell.
        For simplicity, we omit the partition function from (\subref{fig:neustrom_arch_siamese}), see \cref{sec:optimization}.}
    \label{fig:neustrom_architecture}
\end{figure}


In \cref{sec:optimization}, we propose an efficient method to compute the computationally demanding partition function $\int \transpose{\vect{g}_{\vect{x}}} \vect{g}_{\vect{z}} \, d\vect{z}$.
We model the embedding $\vect{v}_{\vect{x}}$ with a neural network, as detailed in \cref{sec:embedding}. We point out that our goal is not data visualization. Thus, the embedding dimensionality is much larger than 2 and we regard it as a free parameter.
We update all model parameters using backpropagation.

\textbf{Emergence of place cells.}
Each component $K(\vect{w}_k, \vect{v}_{\vect{x}})$ of $\vect{k}_{\mat{W}, \vect{v}_{\vect{x}}}$ in \cref{eq:nystrom_matrices} looks like the receptive field of a place cell. However, there are key differences.
First, notice its receptive fields (e.g., Gaussian) are non-sparse. Additionally, depending on the scale of the cloud of embedding vectors $\vect{v}_{\vect{x}}$, it may either not be local at all or too local.
Contrarily, as shown in \cref{sec:results}, the proposed Neural Nystr\"om representations in \cref{eq:neustrom_net} are highly sparse and localized in their input space, just like place cells.
It is the fully connected layer and our nonlinearities that ``fix'' these ``mistakes'': they create place cells, one for each component of $\vect{g}_{\vect{x}}$. We point out that this is not trivial at all as, for example, a linear combination of Gaussian-like receptive fields will not be Gaussian-like in general. The degree of locality of our place cells is directly determined by the locality of $p_{\text{in}}$ and our quest to approximate it. Finally, our place cells soft-tile the embedding (i.e., cover it with overlapping soft-clusters), as experimentally shown in \cref{sec:results}.

\section{Stochastic optimization for finite-sized datasets}
\label{sec:optimization}

We now present an optimization method that reveals further connections with the neuroscience literature, while being computationally efficient.
Of course, negative sampling~\cite{mikolovDistributedRepresentationsWords2013} or noise contrastive estimation~\cite{gutmannNoisecontrastiveEstimationNew2010} are valid alternatives.

We assume that we are provided with a dataset $\{ \vect{x}_i \}_{i=1}^{n}$. We will make the common simplification of normalizing our kernel only over the observed data.
Let us, for brevity, notate $\vect{g}_i = \vect{g}_{\vect{x}_i}$.
Plugging the Neural Nystr\"om approximation into our (now discretized) objective function, we get
\begin{equation}
   	\mathcal{L}
   	= 
   	- \sum_{ij}
   	p_{\text{in}} (\vect{x}_j | \vect{x}_i)
   	\log
   	\tfrac{
   	    \transpose{\vect{g}_i} \vect{g}_j
   	}{
   	    \transpose{\vect{g}_i} \vect{c}
   	}
   	,
   	\label{eq:neustrom_loss_finite}
\end{equation}
where $\vect{c} = \sum_{i=0}^{n} \vect{g}_i$ summarizes the contribution of all points to the partition function. This situation is encountered in neuroscience, regarding $\vect{c}$ as the activity of an accumulator neuron~\cite{pehlevanClusteringNeuralNetwork2017}.

This situation occurs in online algorithms which have dependencies between the computations for different data points. One such example is online dictionary learning~\cite{mairalOnlineLearningMatrix2010}, where the dictionary update depends on all previously observed points. 
We regard $\vect{c}$ as a summary and consider that its summands cannot be individually recovered. We thus perform no backpropagation through $\vect{c}$.

A simple strategy for small datasets is, for each $i$, to keep track of old values of $\vect{g}_i$ and, update $\vect{c}$ by subtracting the old value and adding the new one.
\cref{algo:neustrom_algorithm_fixed_large} implements a similar idea, more suitable for large but finite-sized datasets: we remove the information from $\vect{c}$ that is older than two epochs~\cite{mairalOnlineLearningMatrix2010}. Since within each epoch we may encounter each datapoint $\vect{x}_i$ multiple times, we make sure to only include it in the summation once per epoch.

\section{Supervised multi-task learning}
\label{sec:supervised}

Here, we assume that we have successfully learned Neural Nystr\"om representations, which encode information about conditional probabilities. We now observe a relatively small number of samples that have (classification) labels associated to them. Each new such observation will be considered a new task, that we number from $1$ to $k$.
For each task, it is easy to translate these labels into a new conditional probability.
Essentially, for the $k$-th task,
\begin{equation}
    p^{(k)} (\vect{y} | \vect{x}) = \tfrac{1}{\#\operatorname{class}(\vect{x})} \indicator{\operatorname{class}(\vect{y}) = \operatorname{class}(\vect{x})}
    .
    \label{eq:supervised_input_proba}
\end{equation}

Having tiled the manifold in which the embedding vectors $\vect{v}_{\vect{x}}$ lie with place cells, subdividing the manifold into classes is a relatively simple task. As long as the frequency of class transitions is not significantly higher than the place cell width, this can almost be posed as assigning place cells to classes. Of course, class transitions will not necessarily align with place cell transitions. To cope with this misalignment, we will add another place cell layer to our network and obtain an augmented network defined by $\vect{h}^{(k)}_{\vect{x}} = \sigma ( \mat{M}^{(k)} \vect{g}_{\vect{x}}, 1 )$, where the nonlinearity $\sigma$ is defined in \cref{eq:neustrom_nonlinearity}, and all the parameters used to compute $\vect{g}_{\vect{x}}$ are now fixed.
The only parameter to be learned is $\mat{M}^{(k)}$. The neuron corresponding to each component of $\vect{h}^{(k)}_{\vect{x}}$ forms class-specific place cells, as shown in \cref{sec:results}. We re-use the loss in \cref{eq:neustrom_loss}, replacing $p_{\text{in}} (\vect{y} | \vect{x})$ and $\vect{g}_{\vect{x}}$ by $p^{(k)} (\vect{y} | \vect{x})$ and $\vect{h}^{(k)}_{\vect{x}}$, respectively.

This problem setup differs from traditional supervised learning. Here, we have access to a large amount of unlabeled data, for which we build our  Neural Nystr\"om representations. Then, a small amount of labeled data is presented and we leverage the learned representations to solve a (possibly multi-class) classification problem.

We can interpret this augmented network structure as transfer learning. We first build the Neural Nystr\"om representations in an unsupervised fashion (from structure of the data itself) and then re-utilize these layers as a foundation for a lean supervised layer. The re-usability of the exact same computational motif (i.e., learning mechanism) for supervised and unsupervised tasks, coupled with transfer learning, is appealing as a biologically inspired model.

\begin{figure}
    \centering
    
    \includegraphics[width=.97\linewidth]{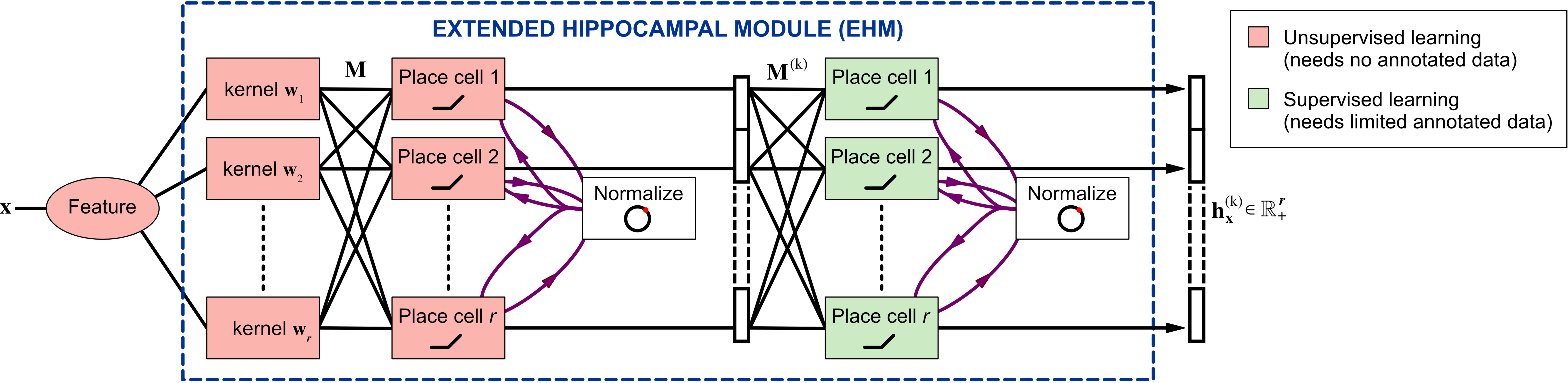}
        
    \caption{Augmented Neural Nystr\"om architecture for supervised learning. The red neurons are trained in an unsupervised fashion to represent conditional probability distributions. They are then used as the building blocks for a lean classification layer, in green.}
    \label{fig:neustrom_architecture_extended}
\end{figure}

The proposed method, as hinted above, only requires a few samples per place cell to correctly approximate the conditional probability in \cref{eq:supervised_input_proba}. This is easy to see in the assign-place-cells-to-classes scenario, where only one sample is sufficient to assign each cell. For more complicated boundaries, the number of samples needed increases gracefully, as shown in \cref{sec:results}.

\section{Episodic online computing}
\label{sec:episodic}

\begin{figure}
\begin{minipage}[t]{.50\textwidth}
\begin{algorithm2e}[H]
    \caption{Stochastic optimization for finite-sized dataset $\{ \vect{x}_i \}_{i=1}^{n}$ (see \cref{sec:supervised}).}
    \label{algo:neustrom_algorithm_fixed_large}

    \begin{small}
        $\vect{c} \gets \vect{0}$;\quad
        $\vect{c}' \gets \vect{0}$%
        \tcp*{no backprop}
        
        \ForEach{epoch}{
        
            $(\forall i) \, \operatorname{updated}(i) \gets 0$\;
            
            \ForEach{$i,j$ such that $p_{\text{in}} (\vect{x}_j | \vect{x}_i) > 0$}{
                Get $\vect{g}_i, \vect{g}_j$ from $\vect{x}_i, \vect{x}_j$%
                \tcp*{\cref{eq:neustrom_net}}

                \If{$\operatorname{updated}(i) = 0$}{
                    $\operatorname{updated}(i) \gets 1$\;
                    
                    $\vect{c} \gets \vect{c} + \vect{g}_i$%
                    \tcp*{no backprop}
                    $\vect{c}' \gets \vect{c}' + \vect{g}_i$%
                    \tcp*{no backprop}
                }
            
                Update parameters using the loss in \cref{eq:neustrom_loss_finite}\;
            }
            $\vect{c} \gets \vect{c}'$;\quad
            $\vect{c}' \gets \vect{0}$%
            \tcp*{no backprop}
        }
    \end{small}
\end{algorithm2e}
\end{minipage}%
\hfill%
\begin{minipage}[t]{.49\textwidth}
\begin{algorithm2e}[H]
    \caption{Stochastic optimization for episodic computing (see \cref{sec:episodic}).}
    \label{algo:neustrom_algorithm_streaming}

    \begin{small}
        $\vect{c} \gets \vect{0}$%
        \tcp*{no backprop}
    
        \ForEach{episode}{
            Select $\vect{x}_{0}$ and
            sample trajectory $\tau_{\pi, p} = [ \vect{x}_{0}, \vect{x}_{1}, \dots, \vect{x}_{T} ]$\;
    
            $\vect{c} \gets \beta^{(t)} \vect{c} + \vect{g}_{\vect{x}_{0}}$%
            \tcp*{no backprop}
            \label{algo:line:c_update}
    
            \ForEach{$t = 1, \dots, T$}{
                Get $\vect{g}_{\vect{x}_{0}}, \vect{g}_{\vect{x}_{t}}$ from $\vect{x}_0, \vect{x}_t$%
                \tcp*{\cref{eq:neustrom_net}}
                
                Update parameters using the loss
                \begin{equation*}
                    \mathcal{L}_{0t}
                    =
                   	- \gamma^t
                   	\log
                   	\left(
                   	    \transpose{\vect{g}_{\vect{x}_{0}}} \vect{g}_{\vect{x}_{t}}
                   	/
                   	    \transpose{\vect{g}_{\vect{x}_{0}}} \vect{c}
                   	\right)
                   	;
                \end{equation*}
            }
        }
    \end{small}
\end{algorithm2e}
\end{minipage}
\end{figure}

A fair critique to the proposed method is: how does the hippocampus obtain $p_{\text{in}} (\vect{y} | \vect{x})$? We show, using the successor representation (SR) as an example, that such information is not necessary, as long as one has access to samples from such distribution. 

SR was briefly introduced in \cref{sec:intro} with \cref{eq:successor_representation_input_proba}.
Plugging \cref{eq:successor_representation_input_proba} into \cref{eq:neustrom_loss}, we get
\begin{equation}
	\mathcal{L}
	\propto 
	- \int
	\sum_{\substack{
	    \tau_{\pi, p} = [ \vect{x}_{0}, \vect{x}_{1}, \dots, \vect{x}_{T} ]
	    \\
	    \vect{x}_{0} = \vect{x}
	}}
	\sum_{t=0}^{T} \alpha^t
	\log
	\tfrac{
	    \transpose{\vect{g}_\vect{x}} \vect{g}_{\vect{x}_{t}}
	}{
	    \int \transpose{\vect{g}_\vect{x}} \vect{g}_{\vect{z}}
        \, d\vect{v}_{\vect{z}}
	}
	\, d\vect{x}
	.
    \label{eq:SR_neustrom_online}%
\end{equation}
See \cref{sec:online_step_by_step} for a step-by-step derivation of this result.
A similar approach is used in word embedding methods~\cite{mikolovDistributedRepresentationsWords2013}, where the trajectory is a word context (a portion of text) and $\alpha=1$.

\cref{algo:neustrom_algorithm_streaming} depicts an algorithm implementing online learning using \cref{eq:SR_neustrom_online}. An agent following a certain policy generates, along the way, trajectories of samples $\vect{x}_t$; we think of $\vect{x}_t$ as observations. From these observations, and their distance in time, we learn Neural Nystr\"om representations that reflect the transition distribution between them. As a side effect, place cells are created, soft-tiling the cloud of embedding vectors $\vect{v}_{\vect{x}}$.

Following \cref{sec:optimization}, we replace $\int \vect{g}_{\vect{z}} \, d\vect{v}_{\vect{z}}$ by an accumulator neuron $\vect{c}$. At each iteration, new information is added to $\vect{c}$. For finite-sized datasets, we forcefully remove older information. This is not possible in the online case (as the dataset is not finite-sized anymore) but we can still ``forget'' old information by gradually downscaling it, see \cref{algo:line:c_update} in \cref{algo:neustrom_algorithm_streaming}.
For example, we can use
$\beta^{(t)} = \left( 1 - 1 / t \right)^\rho$ where $\rho$ is a hyperparameter~\cite{mairalOnlineLearningMatrix2010}.


    
    
    

Of course, the underlying assumption here is that the perceptual proximity between two samples $\vect{x}_{t}, \vect{x}_{t'}$, e.g., $\norm{\vect{x}_{t} - \vect{x}_{t'}}{2}$, is correlated with  their temporal proximity $|t - t'|$. However, in most natural scenarios, this assumption does not present a conceptual problem.

\cref{algo:neustrom_algorithm_streaming} clearly shows that we do not need to explicitly know $p_{\text{in}} (\vect{y} | \vect{x})$ as long as we can sample from it. We leave for future work the application of \cref{algo:neustrom_algorithm_streaming} within a reinforcement learning context.

\section{Experimental results}
\label{sec:results}

In general, for simplicity, we compute the input conditional probabilities as a row-normalized RBF kernel, see \cref{eq:kernel_input_proba}. Implementation specifications and details are provided in \cref{sec:implementation}.

We show with two synthetic examples that Neural Nystr\"om representations successfully approximate the input conditional probabilities, see \cref{fig:synthetic_unsupervised}. In both cases, the receptive field of each place cell is a localized sparse vector.
An additional result, showing that Neural Nystr\"om representations work in high-dimensional spaces is provided in \cref{fig:teapot}, \cref{sec:additional_results} (because of space constraints, some figures are relegated to the appendices). Furthermore, \cref{fig:teapot_nomad} in \cref{sec:additional_results} shows the same example but using \cref{eq:sdp-kmeans} to generate the input conditional probabilities.

\begin{figure}[p]
    \centering

    \begin{minipage}[t]{.03\textwidth}
        \vspace{-38pt}
        \textbf{A}
    \end{minipage}%
    \hspace{4pt}%
    \begin{minipage}[t]{.9\textwidth}
    \begin{minipage}{.75\textwidth}
        \includegraphics[width=\linewidth]{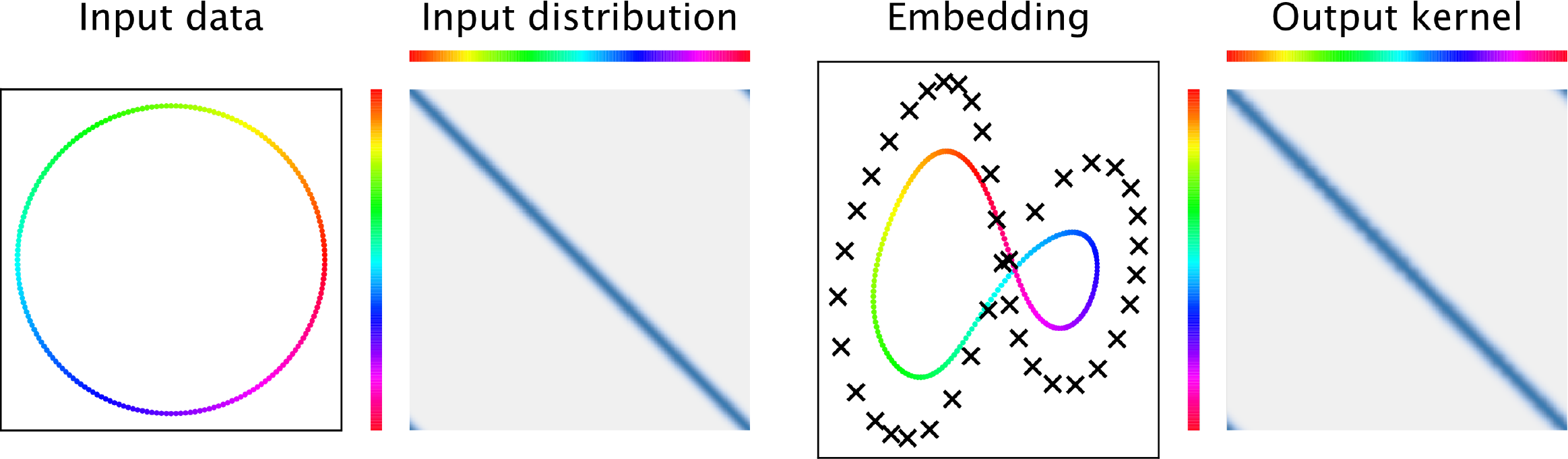}
    \end{minipage}%
    \hfill%
    \begin{minipage}{.24\textwidth}
        \includegraphics[width=\linewidth]{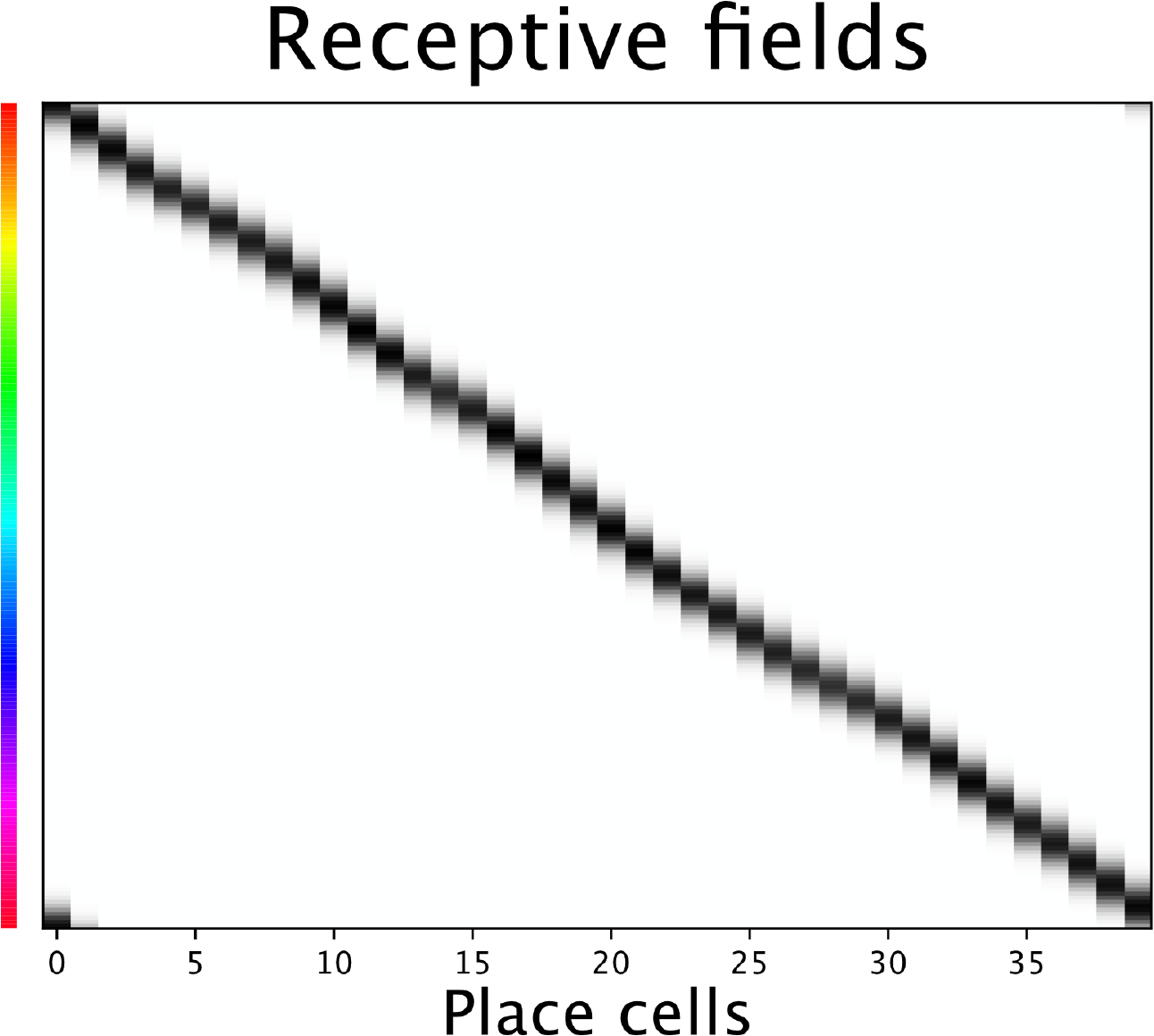}
    \end{minipage}%

    \vspace{4pt}
    
    \begin{tabu} to \textwidth {@{\hspace{0pt}} *{9}{X[c,m] @{\hspace{6pt}}} X[c,m] @{\hspace{0pt}}}
        \includegraphics[width=\linewidth]{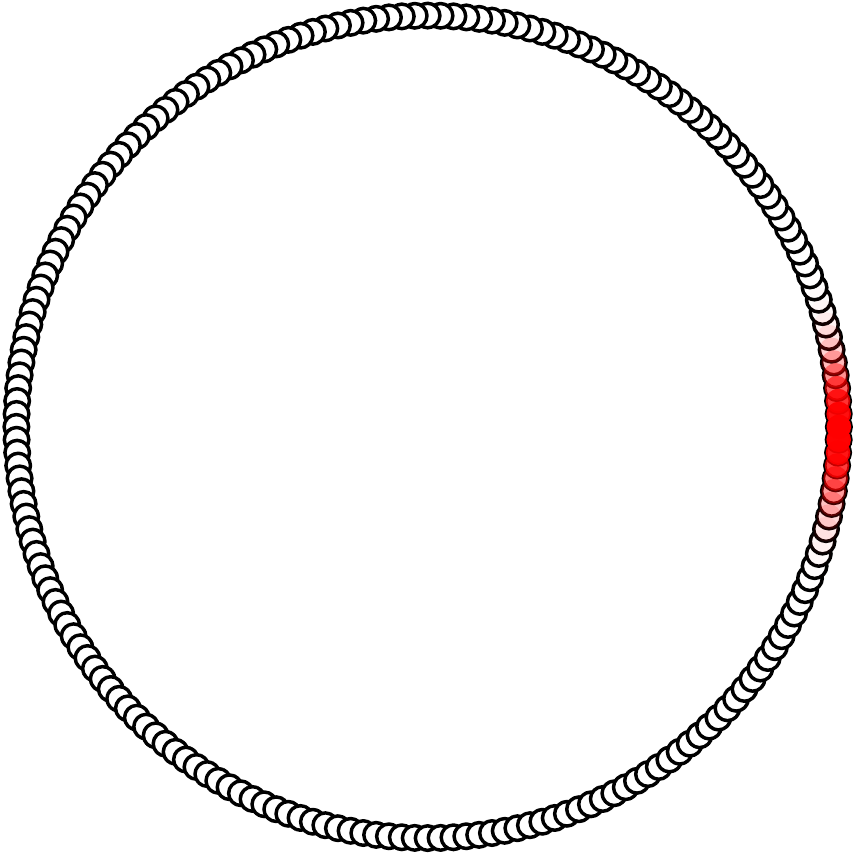}
        &
        \includegraphics[width=\linewidth]{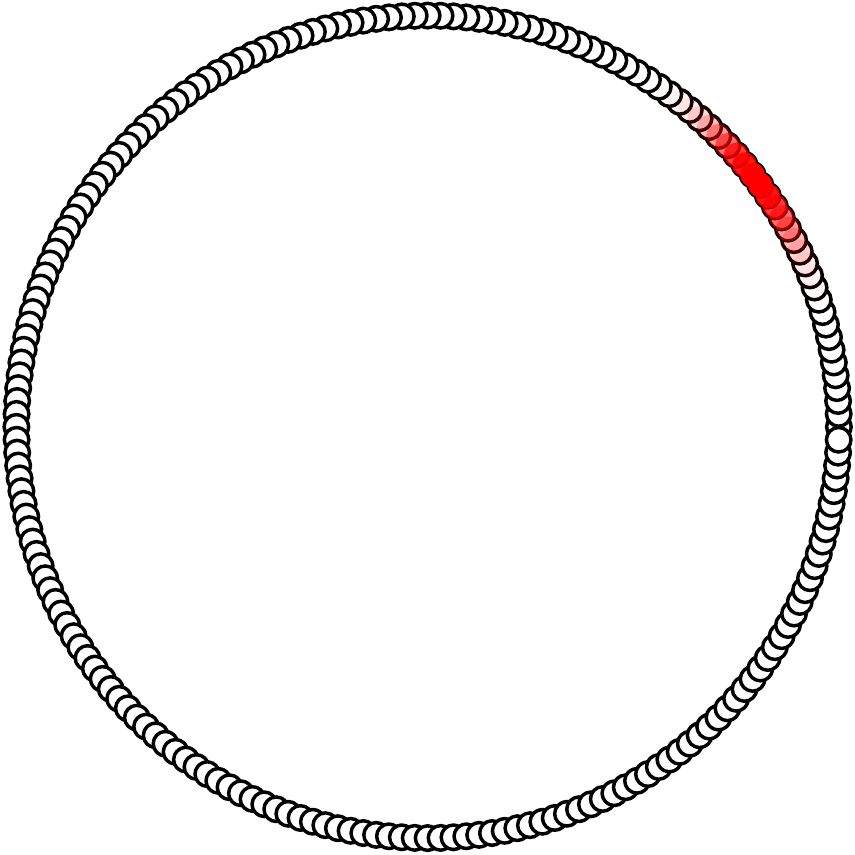}
        &
        \includegraphics[width=\linewidth]{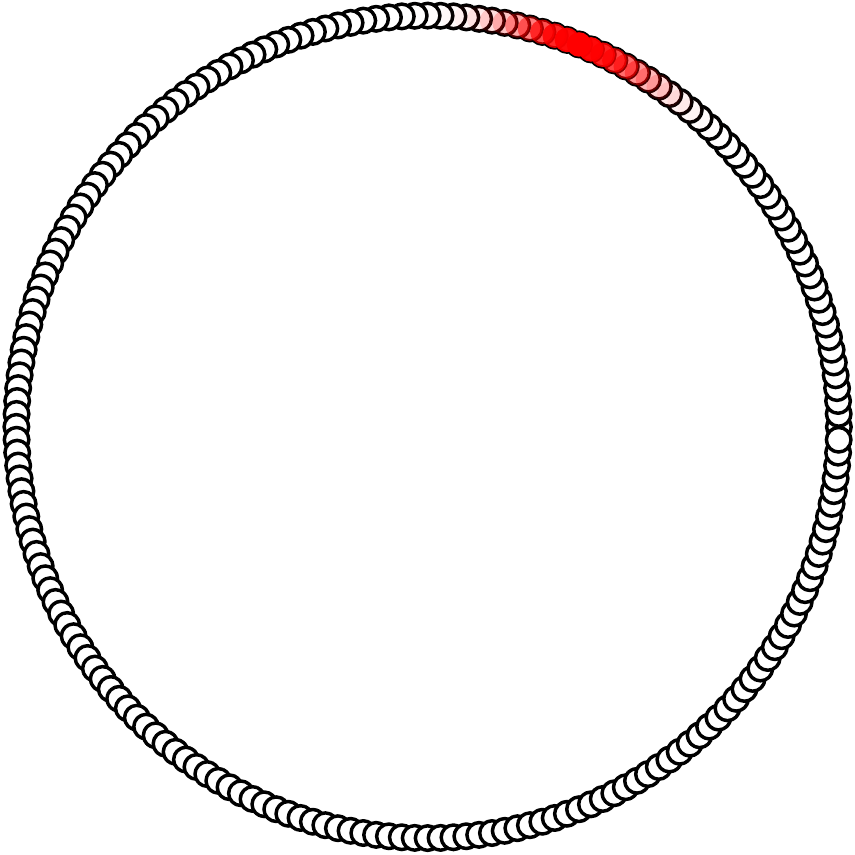}
        &
        \includegraphics[width=\linewidth]{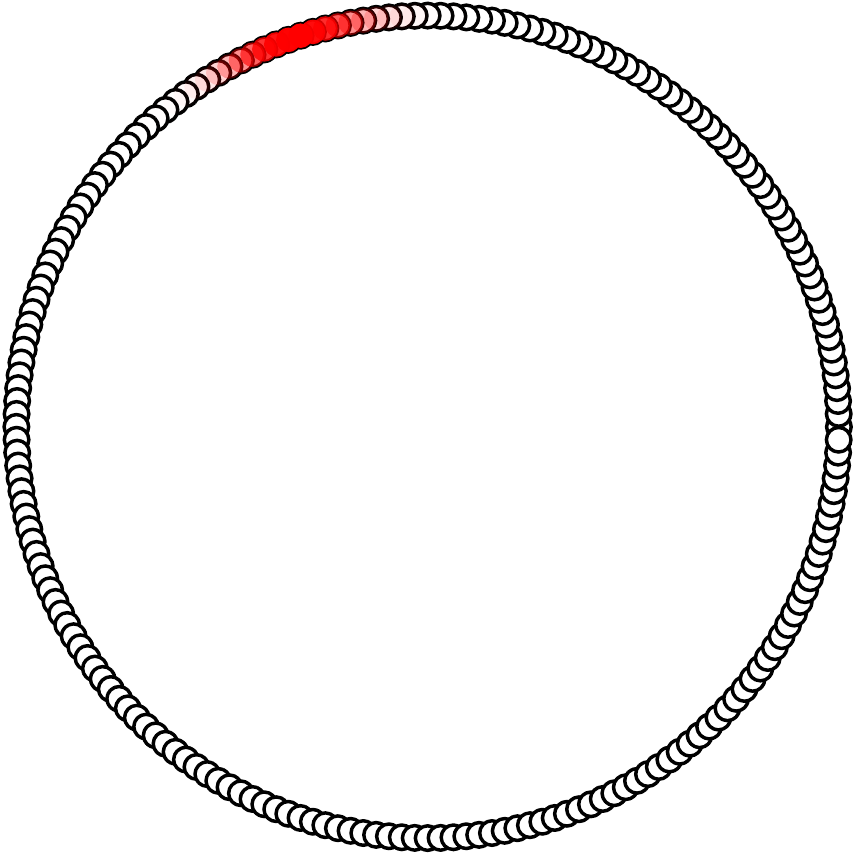}
        &
        \includegraphics[width=\linewidth]{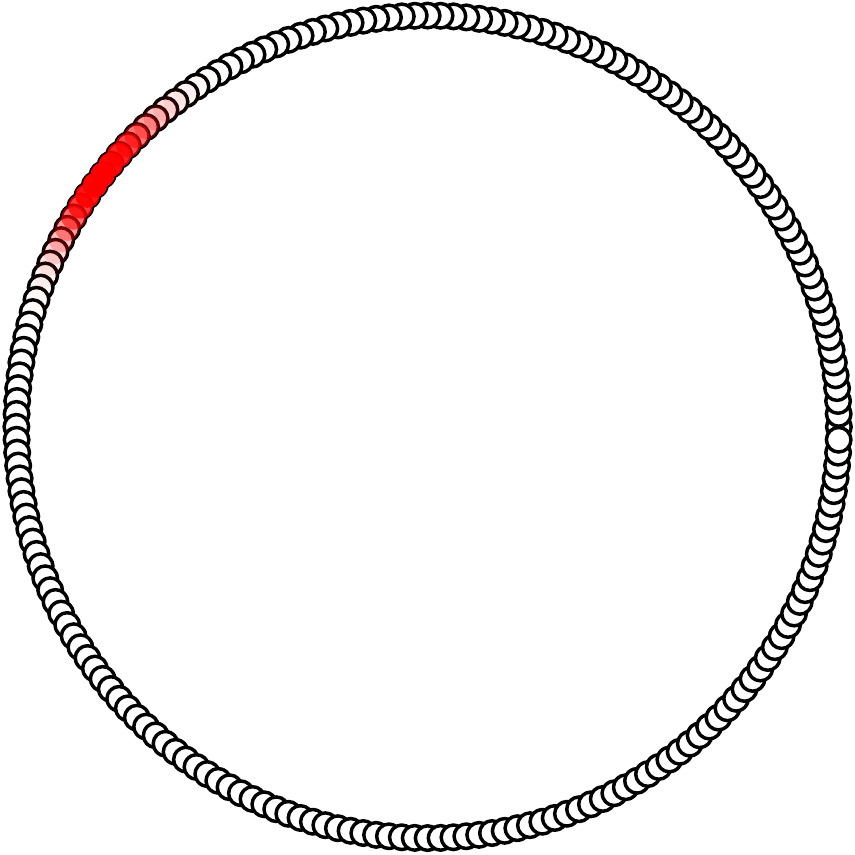}
        &
        \includegraphics[width=\linewidth]{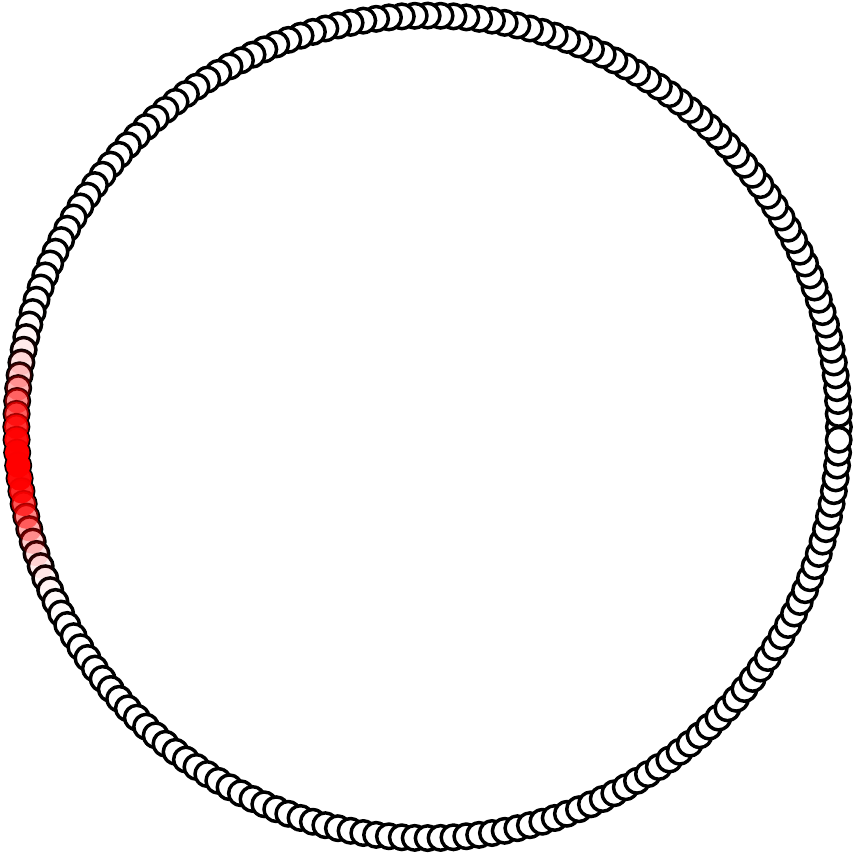}
        &
        \includegraphics[width=\linewidth]{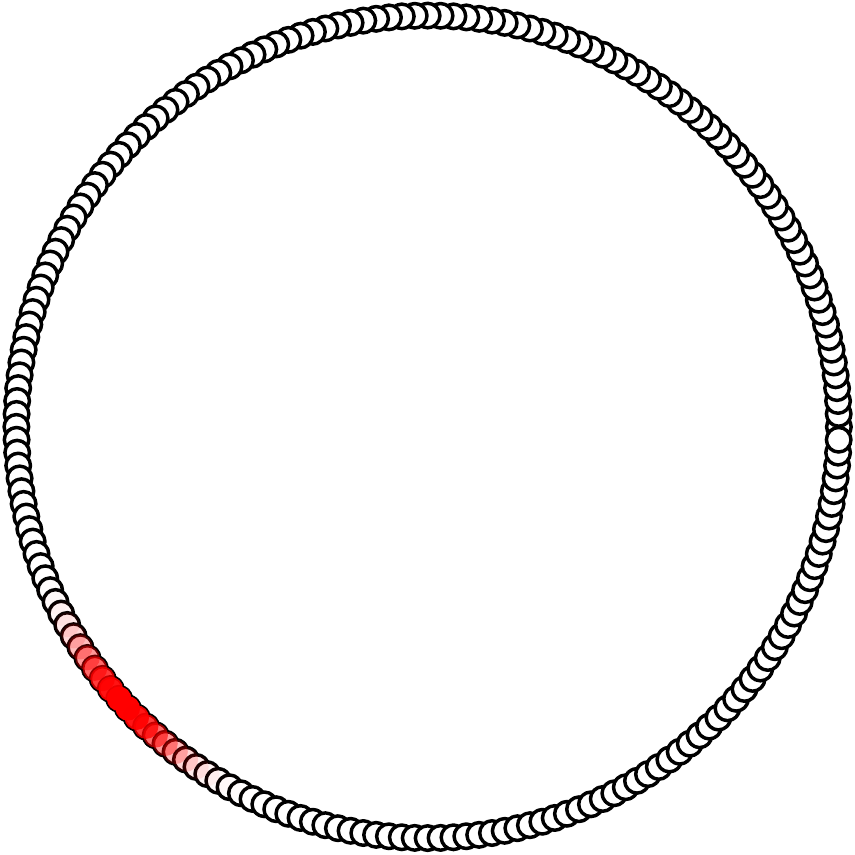}
        &
        \includegraphics[width=\linewidth]{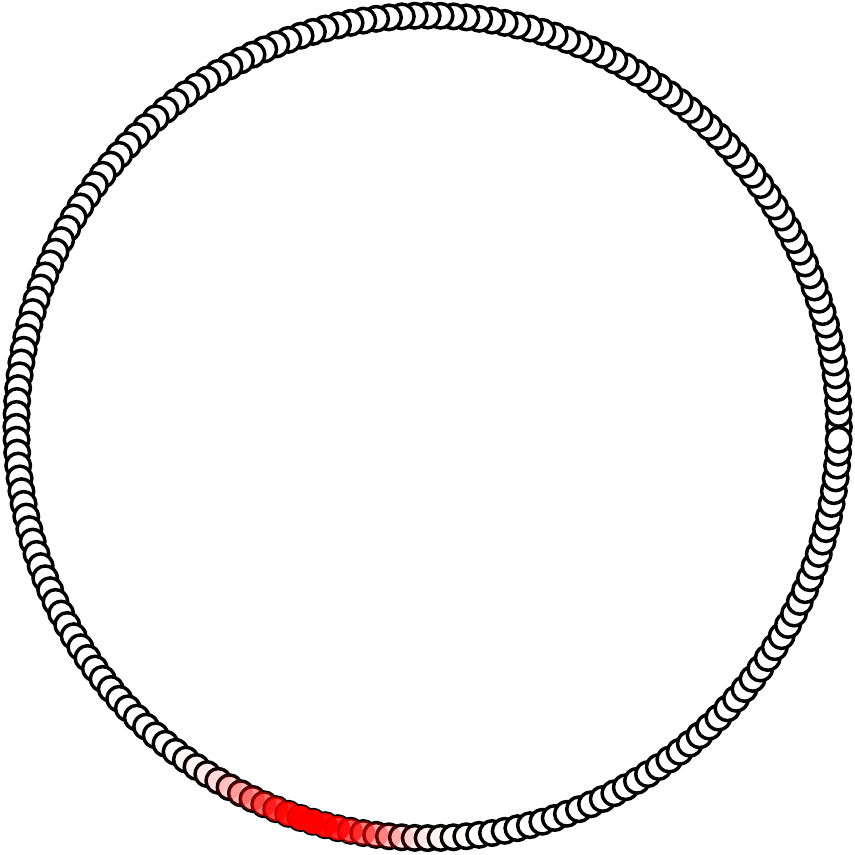}
        &
        \includegraphics[width=\linewidth]{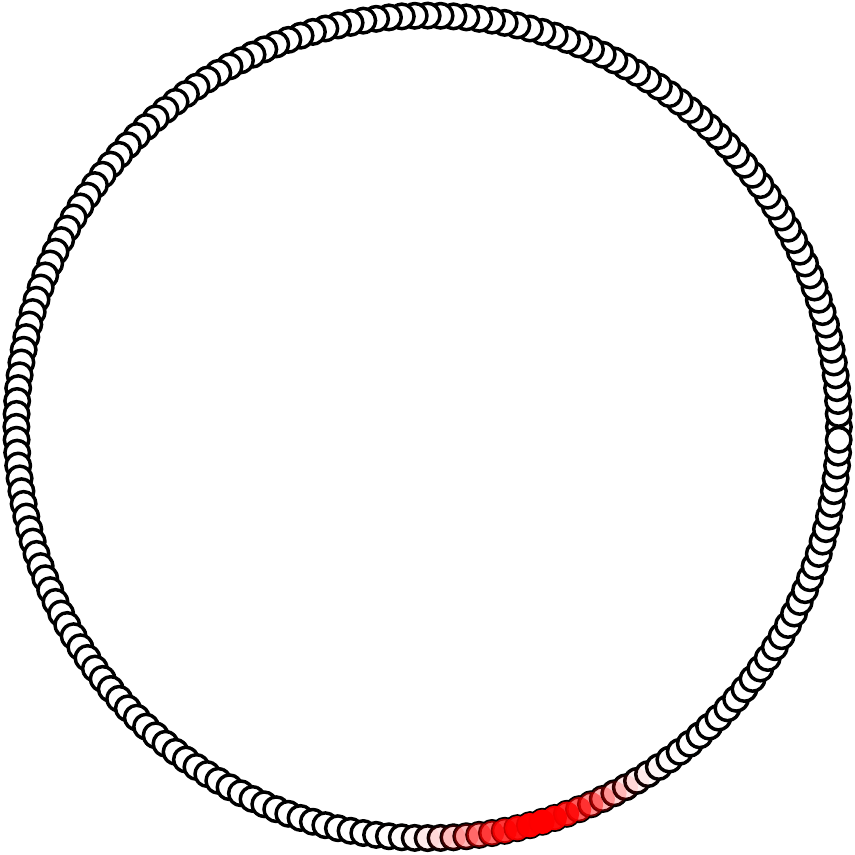}
        &
        \includegraphics[width=\linewidth]{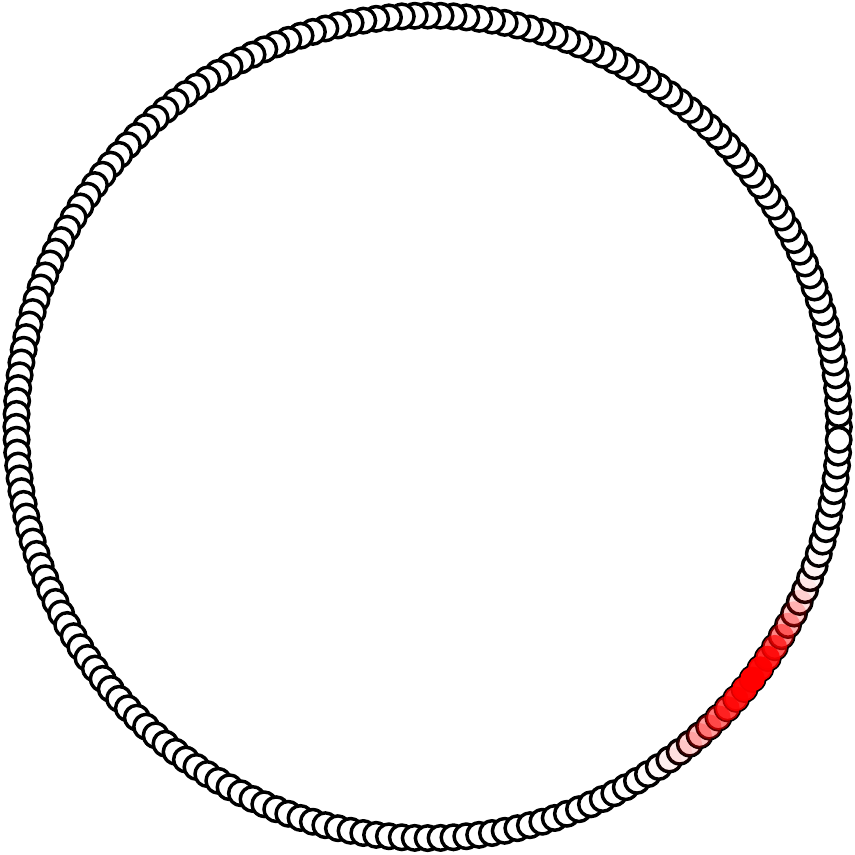}
        \\
    \end{tabu}
    \end{minipage}%
    
    \vspace{4pt}
    
    \begin{minipage}[t]{.03\textwidth}
        \vspace{-53pt}
        \textbf{B}
    \end{minipage}%
    \hspace{4pt}%
    \begin{minipage}{.9\textwidth}
    \begin{minipage}{.75\textwidth}
        \includegraphics[width=\linewidth]{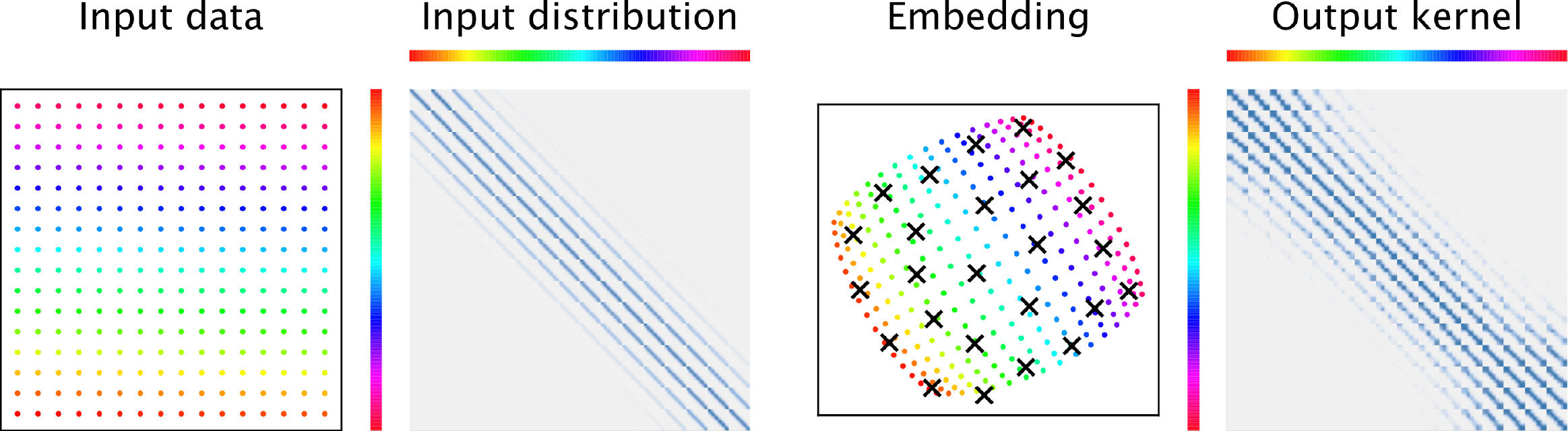}
    \end{minipage}%
    \hfill%
    \begin{minipage}{.23\textwidth}
        \includegraphics[width=\linewidth]{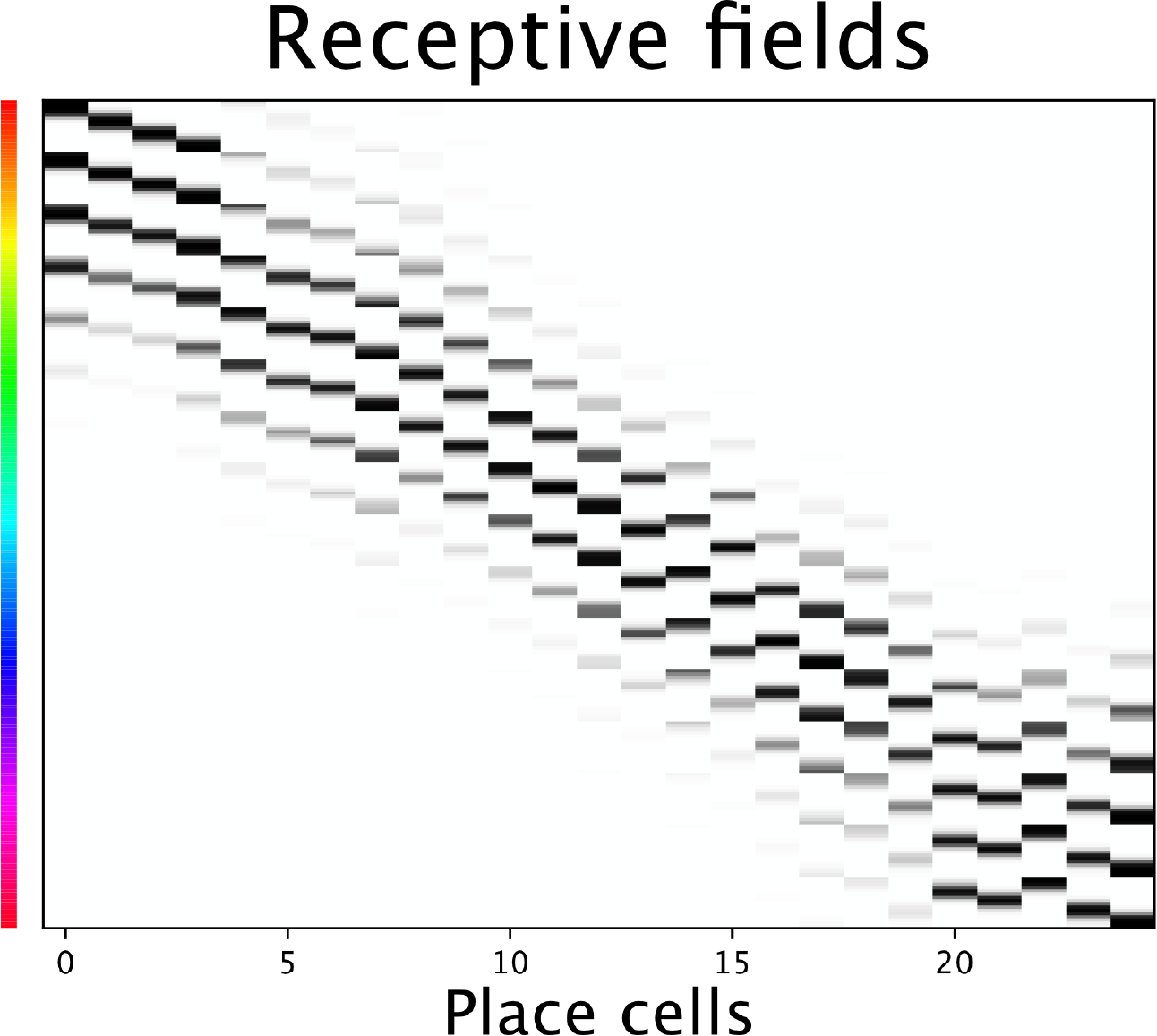}
    \end{minipage}%
    
    \vspace{4pt}
    
    \begin{tabu} to \textwidth {@{\hspace{0pt}} *{9}{X[c,m] @{\hspace{6pt}}} X[c,m] @{\hspace{0pt}}}
        \includegraphics[width=\linewidth]{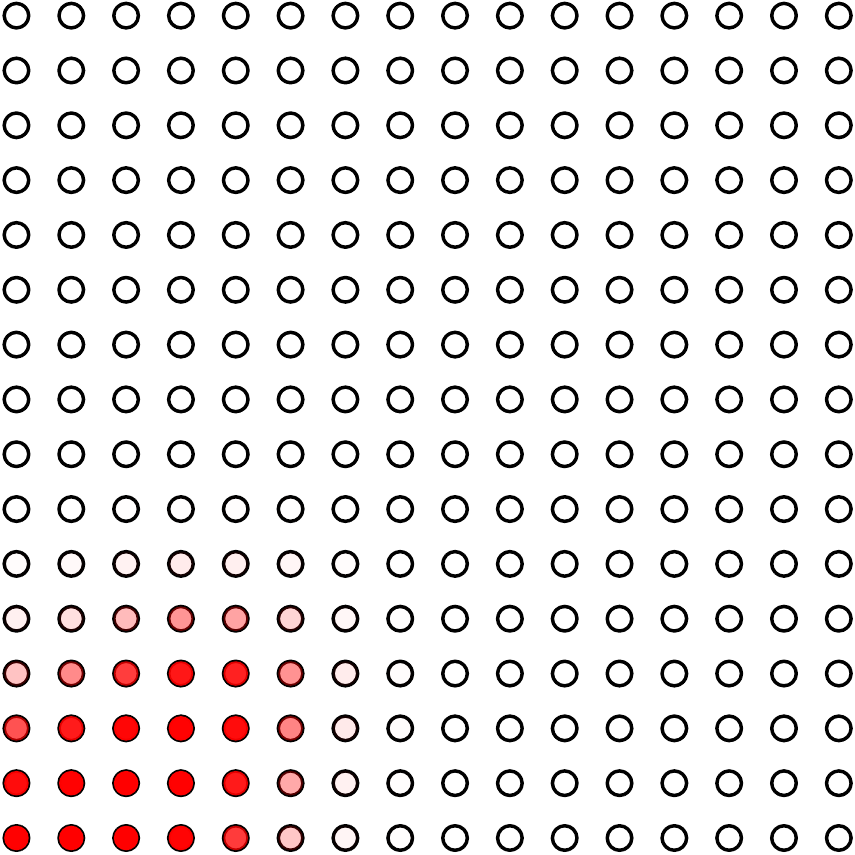}
        &
        \includegraphics[width=\linewidth]{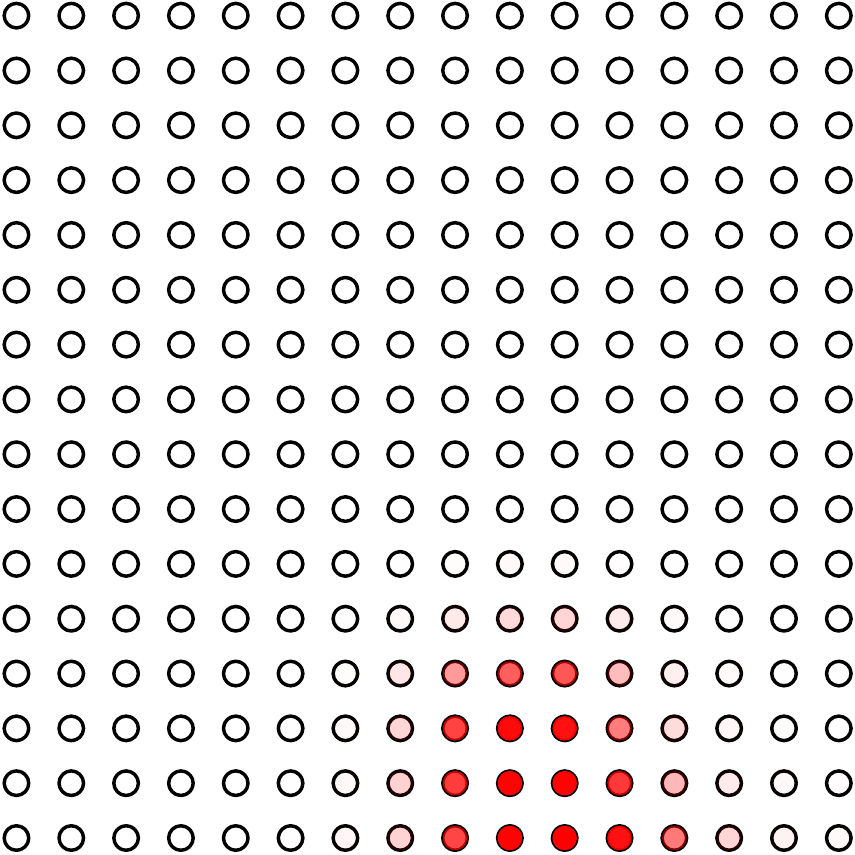}
        &
        \includegraphics[width=\linewidth]{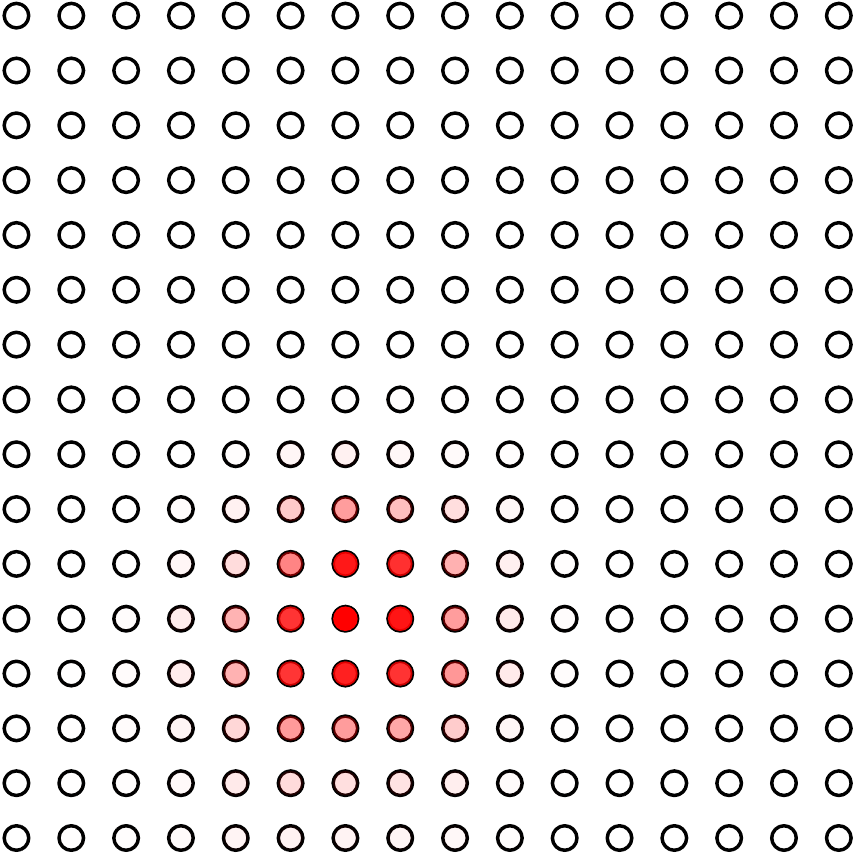}
        &
        \includegraphics[width=\linewidth]{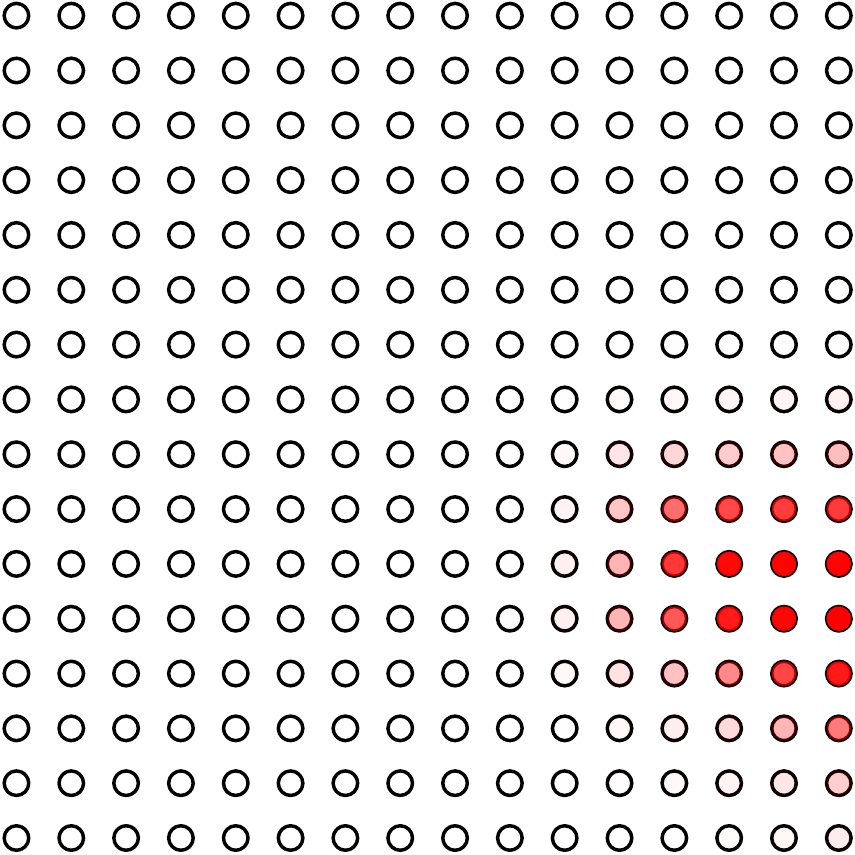}
        &
        \includegraphics[width=\linewidth]{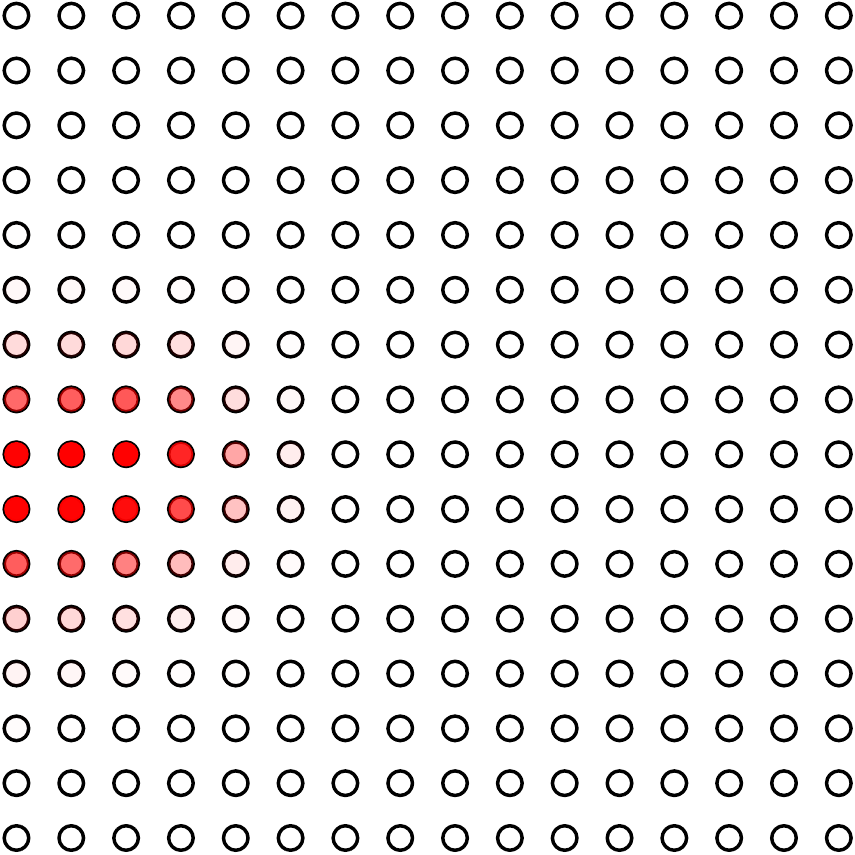}
        &
        \includegraphics[width=\linewidth]{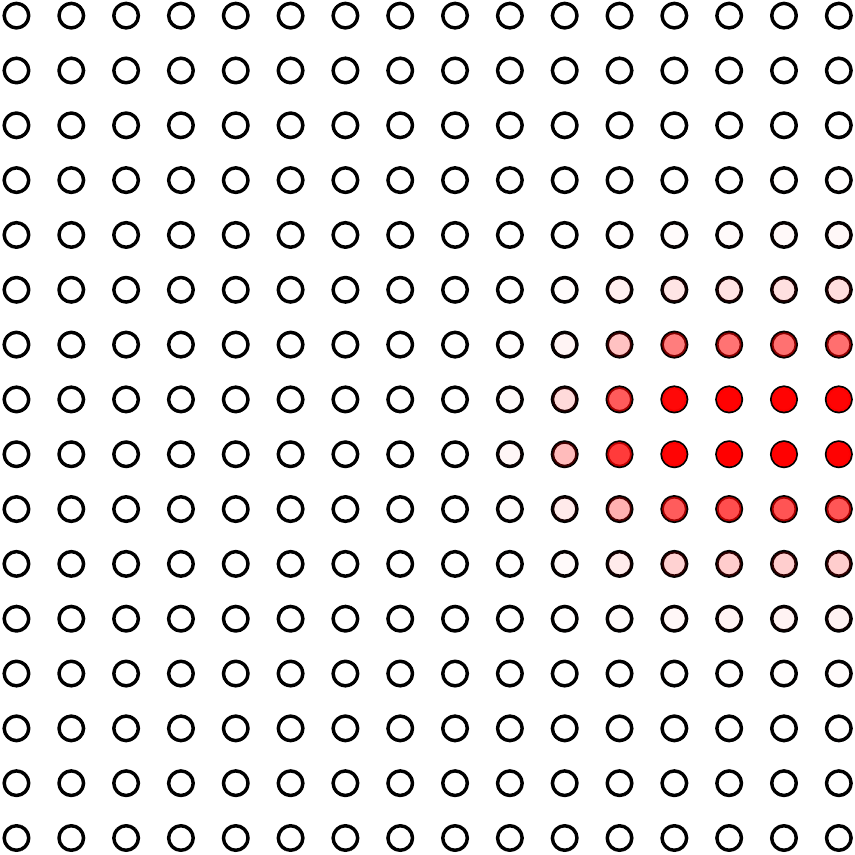}
        &
        \includegraphics[width=\linewidth]{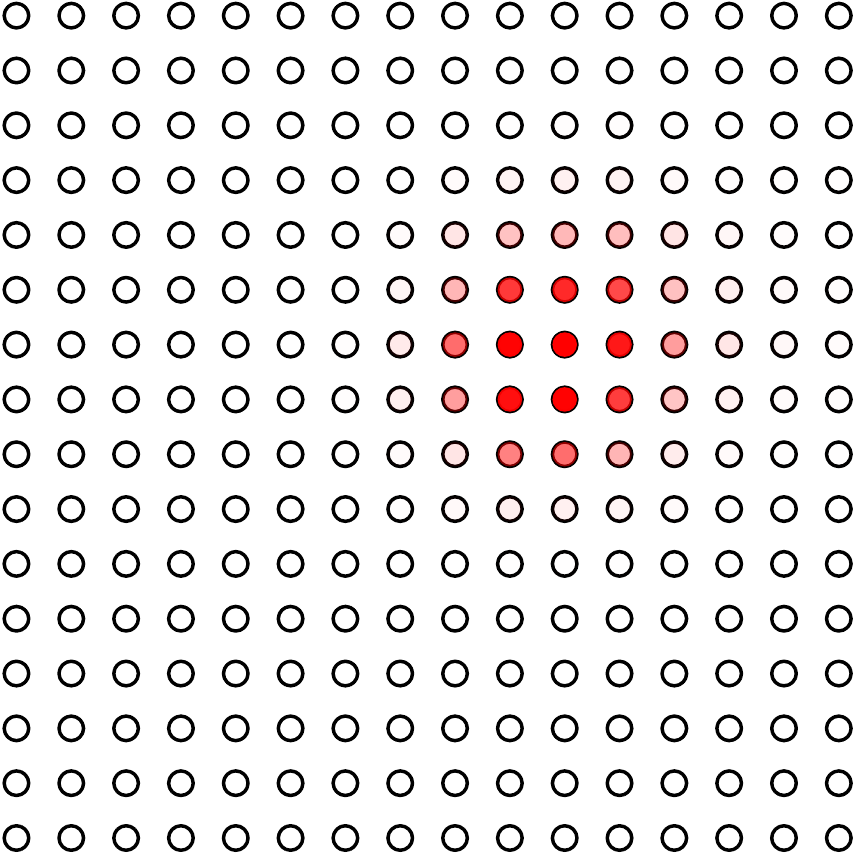}
        &
        \includegraphics[width=\linewidth]{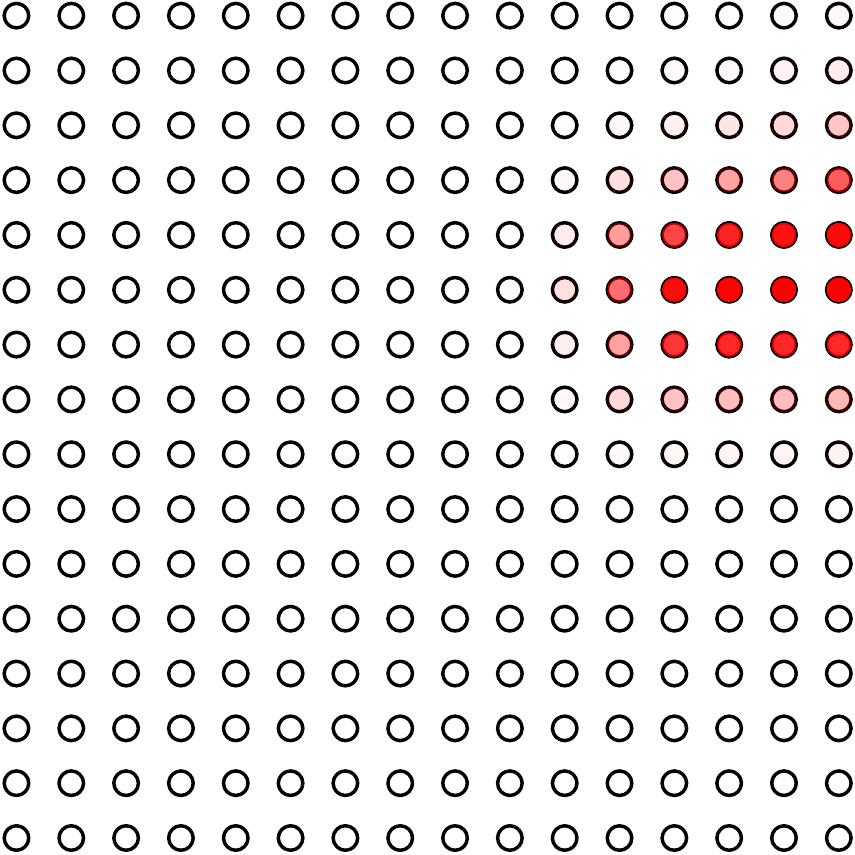}
        &
        \includegraphics[width=\linewidth]{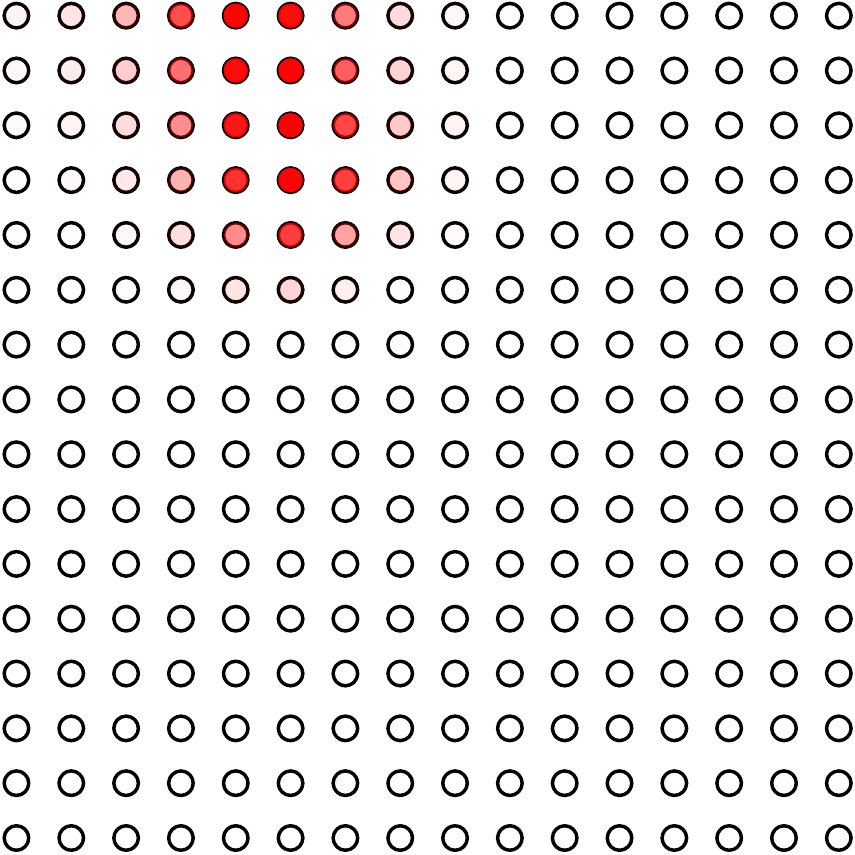}
        &
        \includegraphics[width=\linewidth]{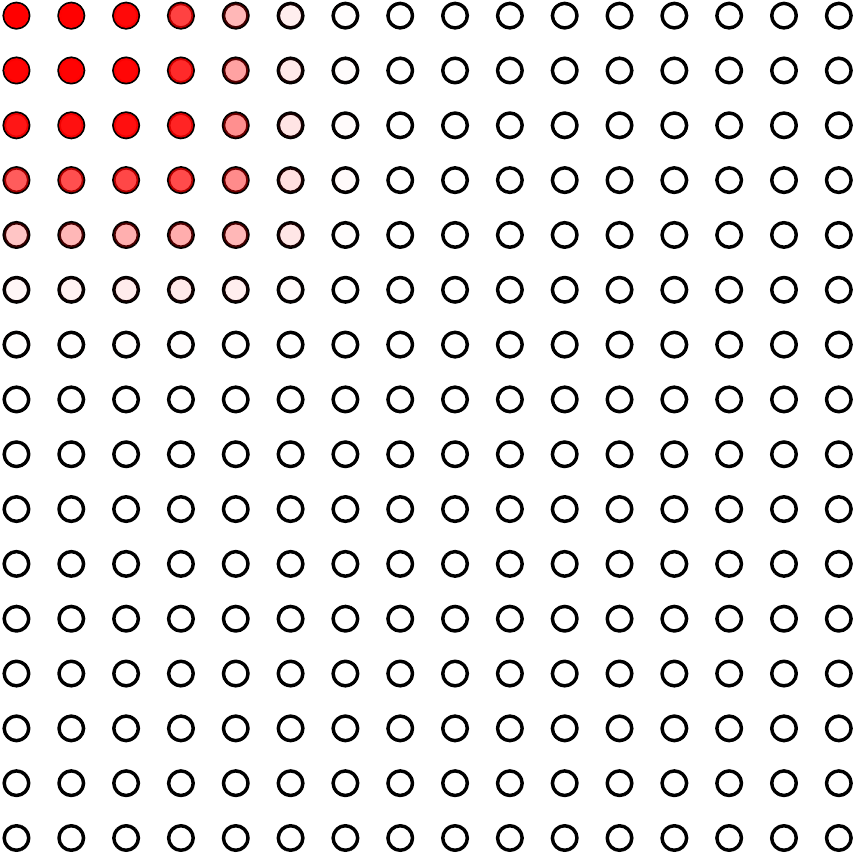}
        \\
    \end{tabu}
    \end{minipage}
    
    \caption{
        Synthetic examples show the emergence of place cells from Neural Nystr\"om representations.
        We show the two main principal components of the embedding points and of the landmarks (black crosses); this projection may twist the manifold.
        The output kernel and the receptive fields (RF) are $\mat{G} \transpose{\mat{G}}$ and $\transpose{\mat{G}}$, respectively, with $\mat{G} \coloneqq [\vect{g}_{\vect{x}_1}, \dots, \vect{g}_{\vect{x}_n}] \in \Real^{r \times n}$, see \cref{eq:neustrom_net} and \cref{fig:neustrom_arch_hip_module}. 
        The bottom row in A and B shows a few RF in red over the input points.
        Better viewed with zoom.
    }
    \label{fig:synthetic_unsupervised}
\end{figure}

When the input data lives in an union of disjoint manifolds, as in \cref{fig:circles}A, a similar pattern occurs \textit{separately} on each manifold. Moreover, Neural Nystr\"om's output kernel does not link points across manifolds, preserving manifold disentangling. 

As described in \cref{sec:supervised}, we leverage these unsupervised representations to perform supervised classification using only a relatively small percentage of annotated data.
Let $\mat{H}^{(k)} = [\vect{h}^{(k)}_{\vect{x}_1}, \dots, \vect{h}^{(k)}_{\vect{x}_n}] \in \Real^{r \times n}$ (recall $r$ is the number of landmarks). In \cref{fig:circles}B, the spectrum of $\mat{H}^{(k)}$ is dominated by only 2 components (in \cref{fig:circles_additional}, \cref{sec:additional_results}, we see the emergence of class-specific place cells). The supervised output kernel, $\mat{H}^{(k)} \transpose{\mat{H}^{(k)}}$, is successful in retrieving the outer product of the labels, even when trained using annotations on just 20\% of the available data.

\begin{figure}[p]
    \centering
    
    \begin{tabu} to .9\textwidth {@{\hspace{0pt}} X[2,c] @{\hspace{4pt}} X[30,c,m] @{\hspace{20pt}} X[2,c] @{\hspace{4pt}} X[20,c,m] @{\hspace{0pt}}}
        \vspace{-33pt}
        \textbf{A}
        &
        \includegraphics[width=\linewidth]{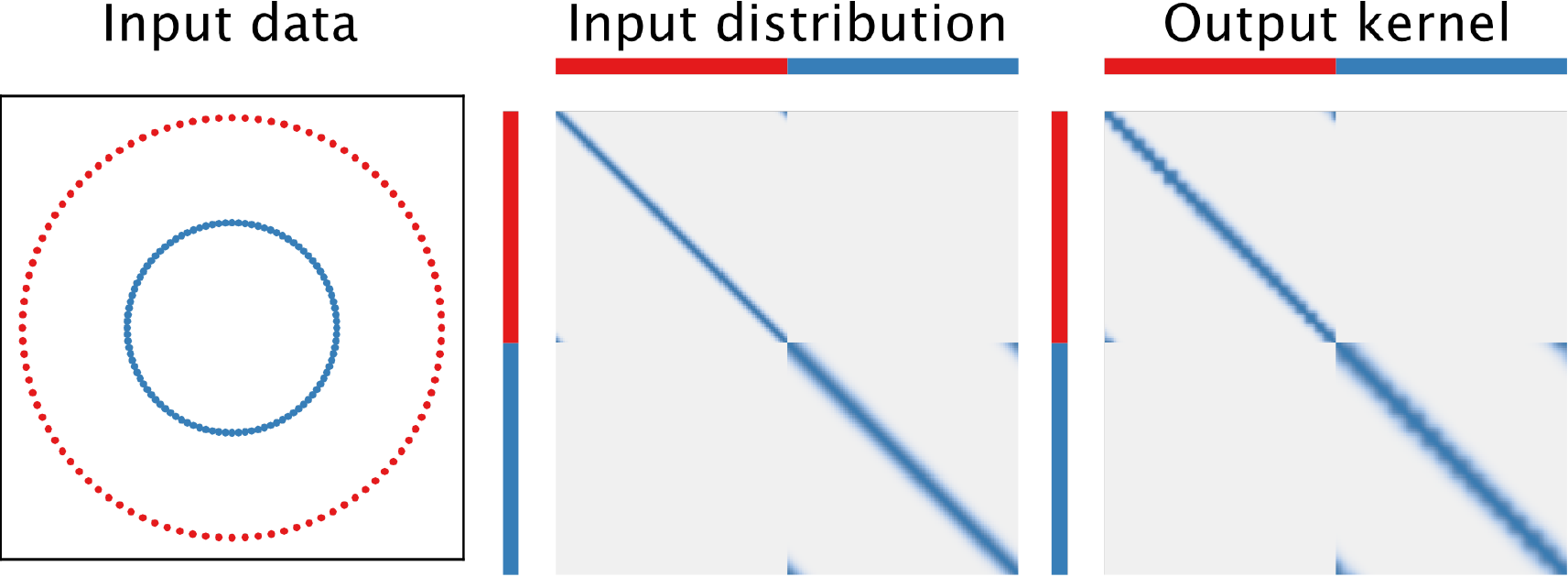}
        &
        \vspace{-33pt}
        \textbf{B}
        &
        \includegraphics[width=\linewidth]{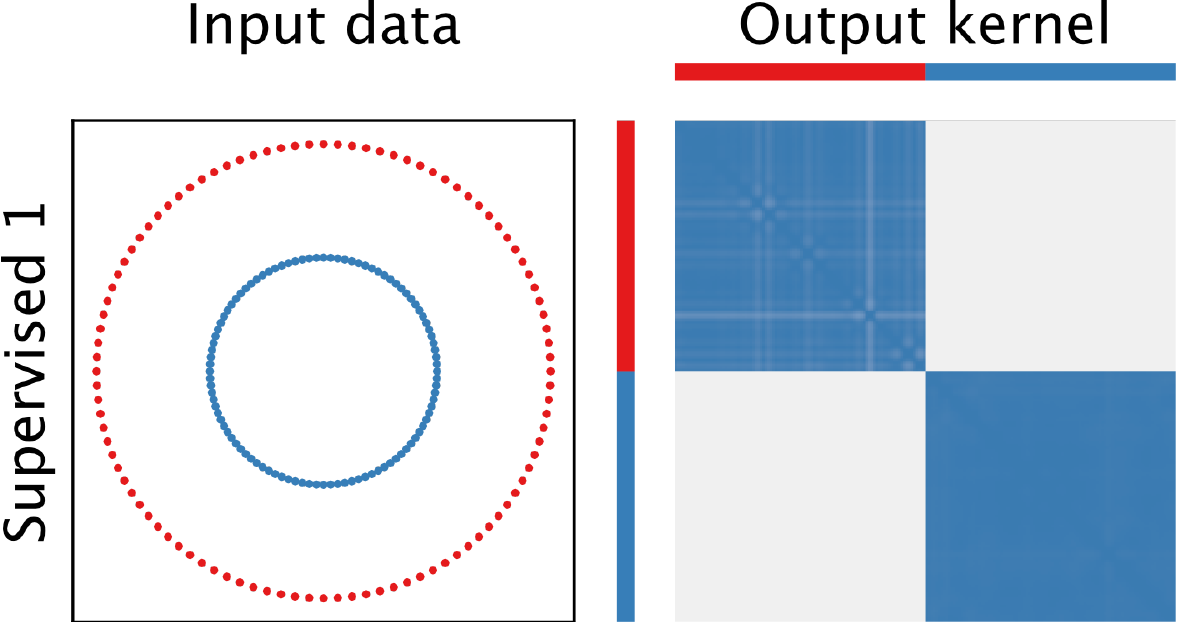}
        \\
        \\[-6pt]
        
        &
        \begin{minipage}{.44\linewidth}
        \includegraphics[width=\linewidth]{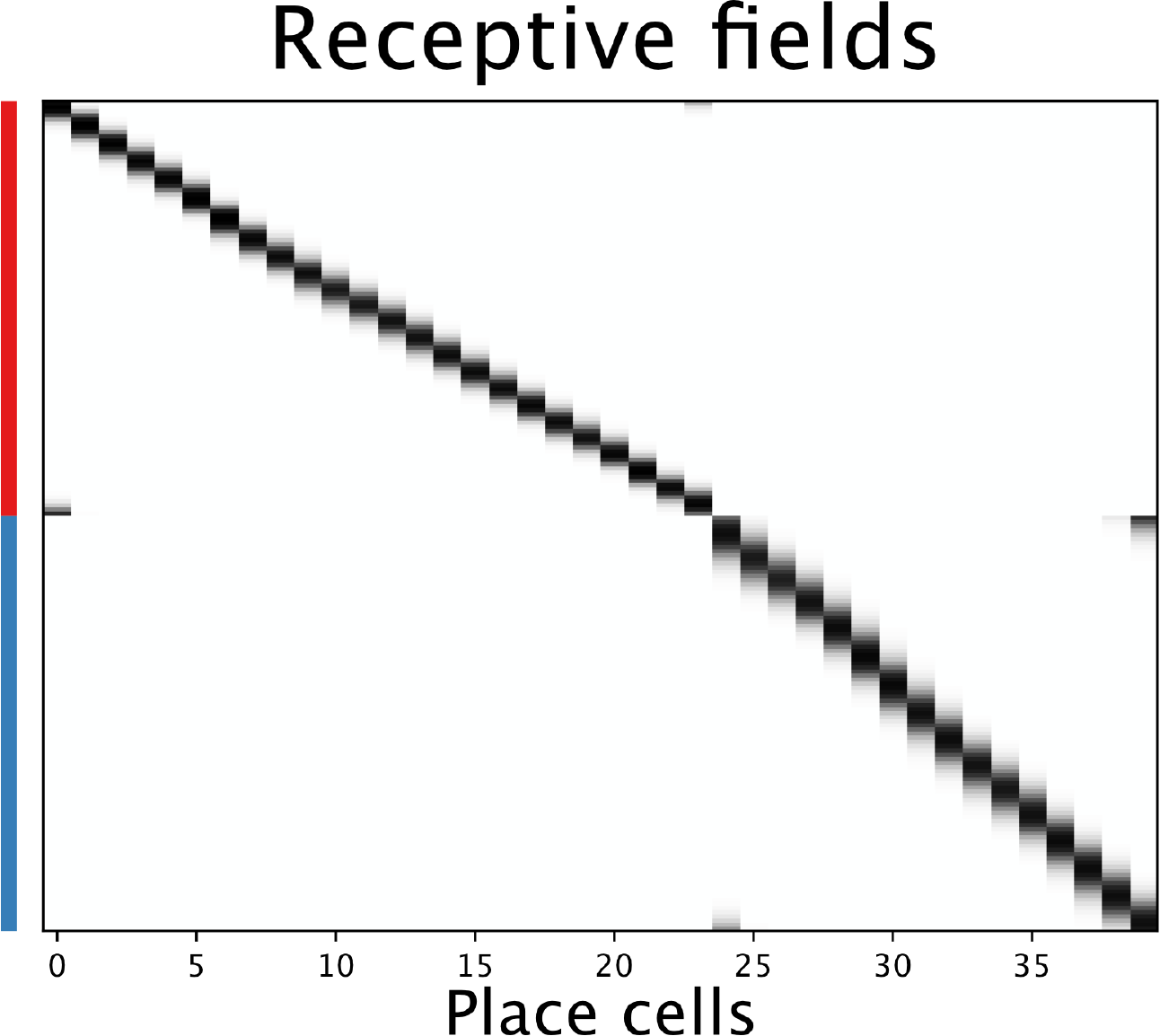}
        \end{minipage}%
        \hfill%
        \begin{minipage}{.52\linewidth}
        \begin{tabu} to \textwidth {@{\hspace{0pt}} *{2}{X[c,m] @{\hspace{2pt}}} X[c,m] @{\hspace{0pt}}}
            \includegraphics[width=\linewidth]{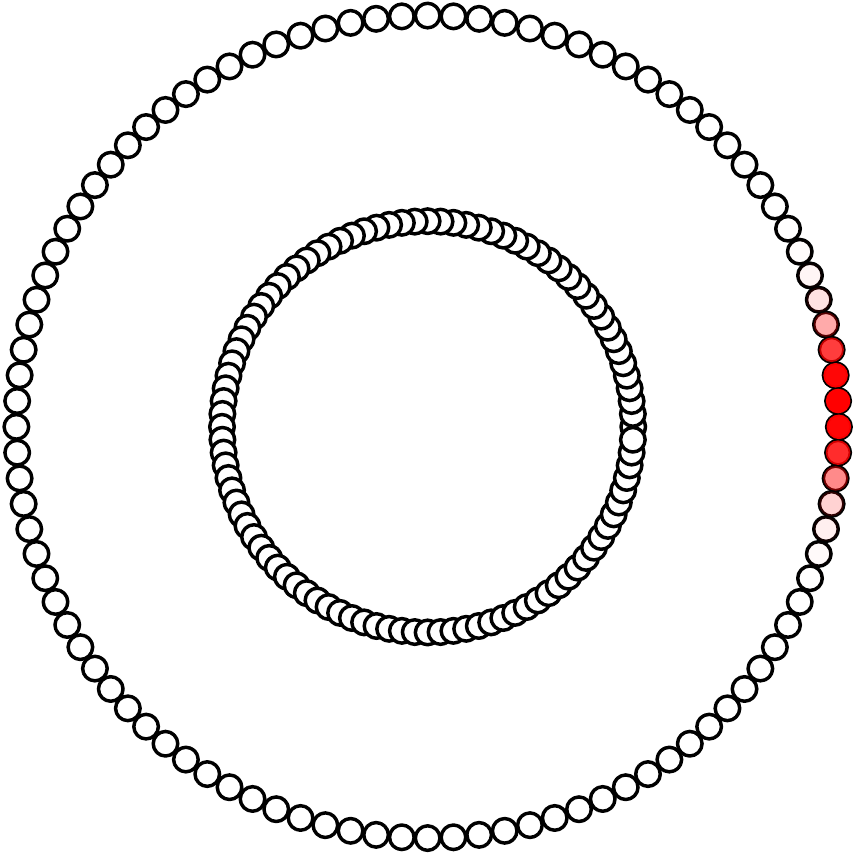}
            &
            \includegraphics[width=\linewidth]{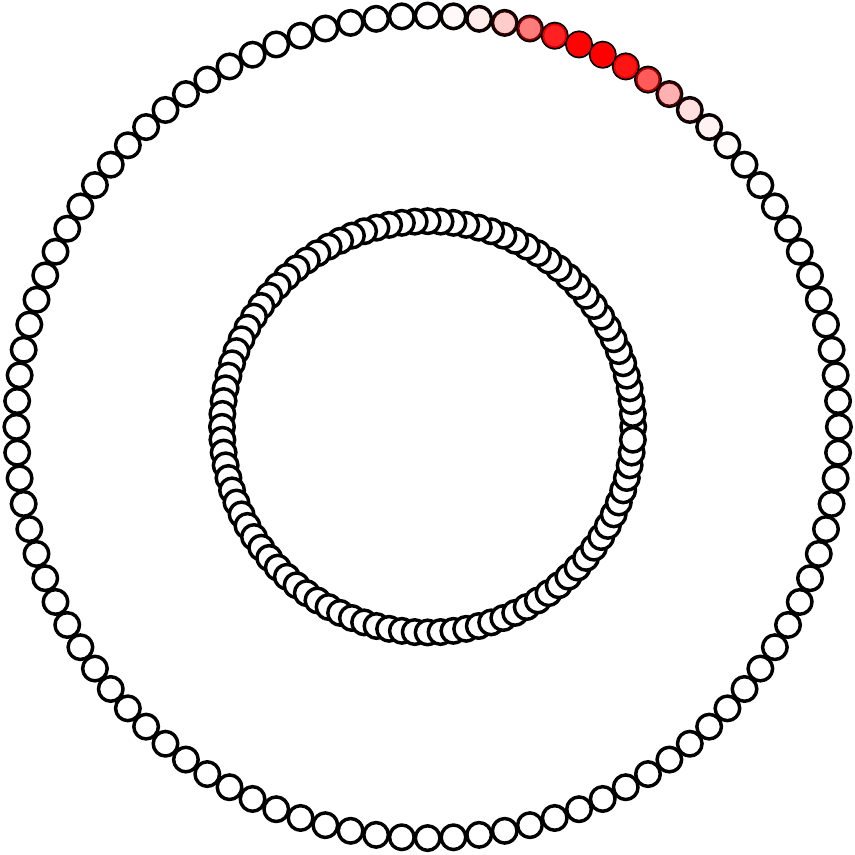}
            &
            \includegraphics[width=\linewidth]{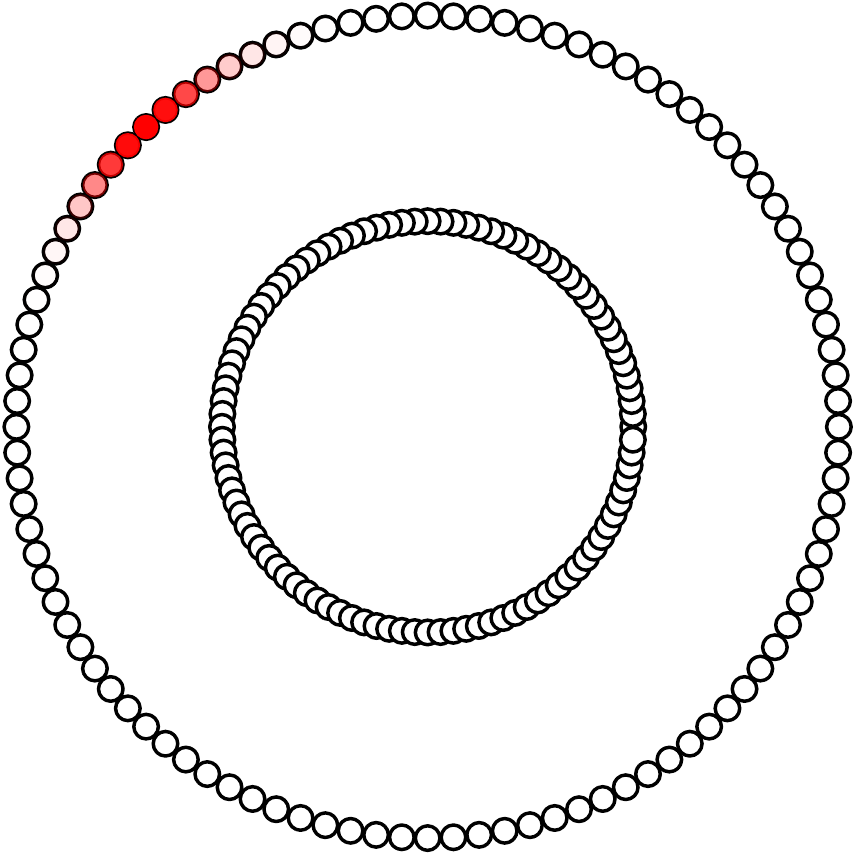}
            \\
            \\[-10pt]
            \includegraphics[width=\linewidth]{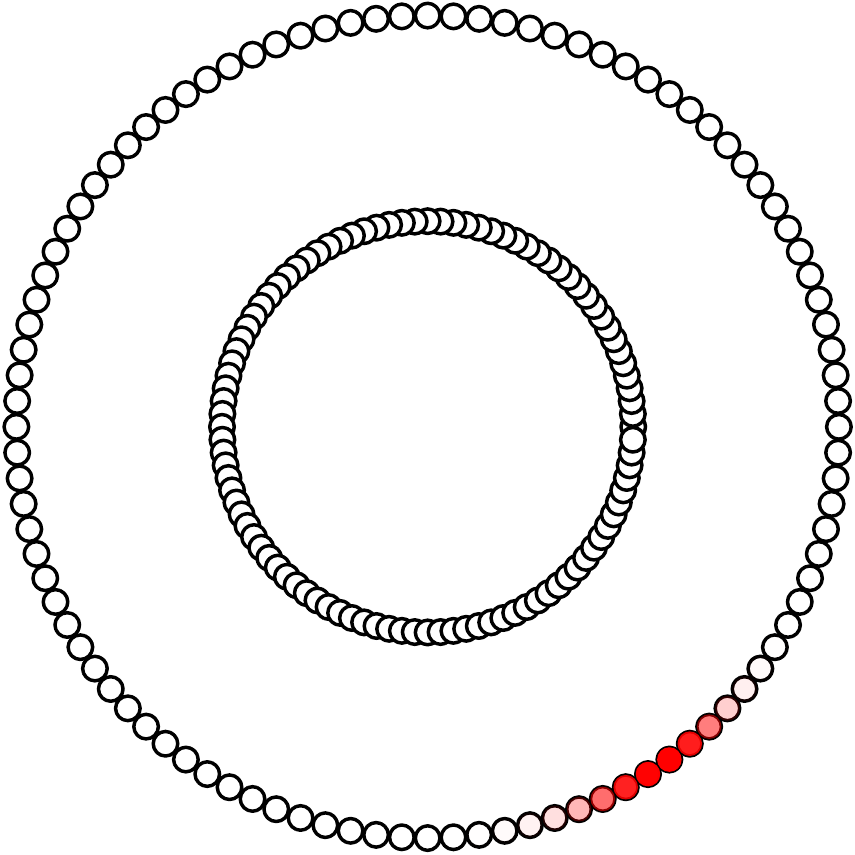}
            &
            \includegraphics[width=\linewidth]{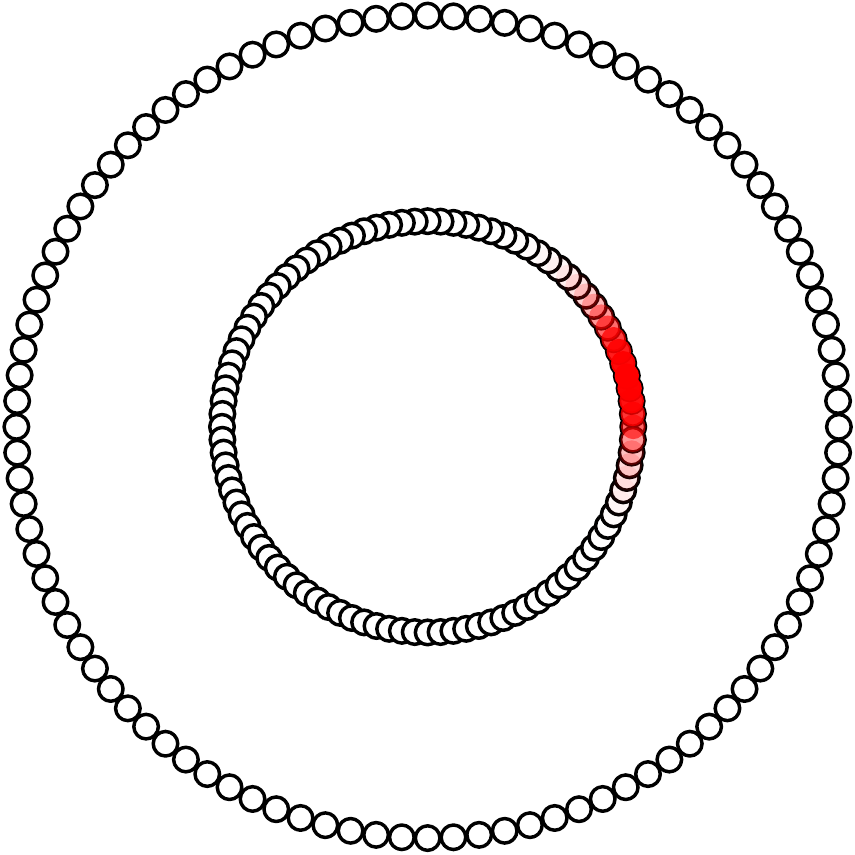}
            &
            \includegraphics[width=\linewidth]{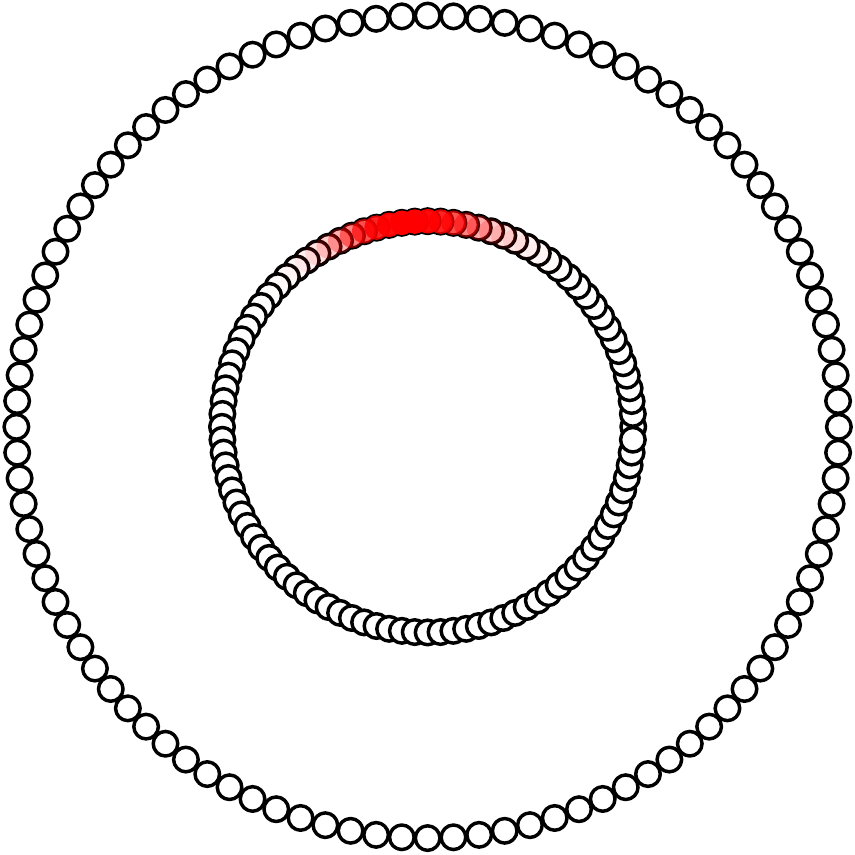}
            \\
        \end{tabu}
        \end{minipage}
        &
        &
        \includegraphics[width=\linewidth]{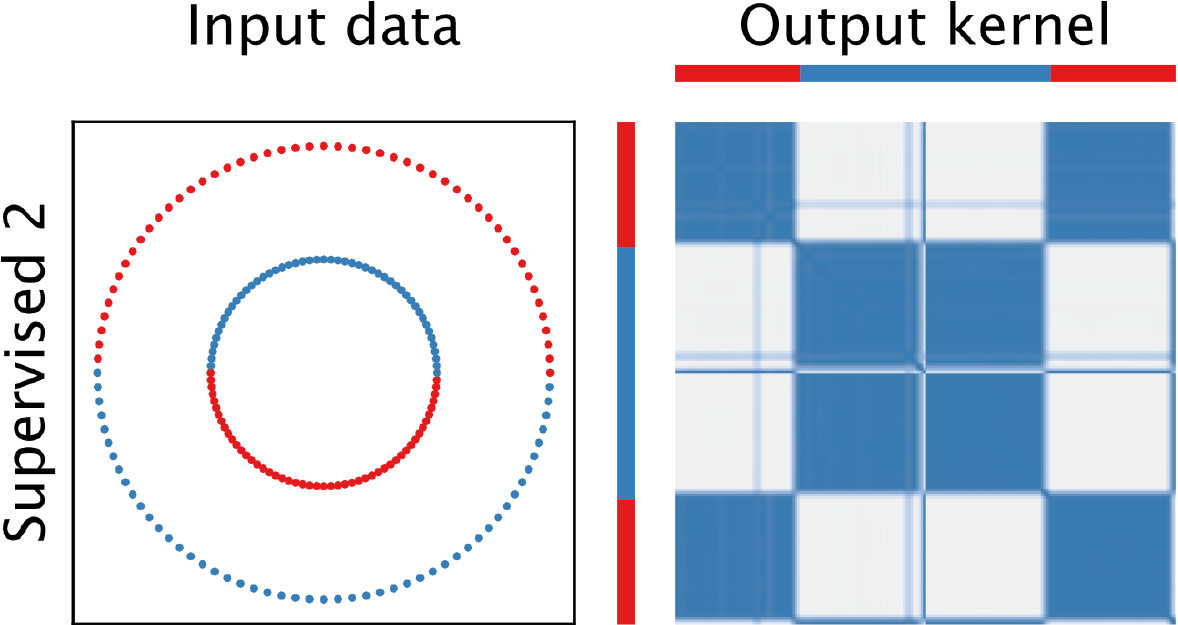}
        \\
    \end{tabu}
    
    \caption{
        (A) Neural Nystr\"om representations disentangle manifolds. See \cref{fig:synthetic_unsupervised} for a description of the individual plots. Better viewed with zoom.
        (B) Using these representations for supervised tasks, see \cref{sec:supervised}, we obtain good results under two sets of labels and with annotations on just $20\%$ of the available data.
        The output kernel is $\mat{H}^{(k)} \transpose{\mat{H}^{(k)}}$ with $\mat{H}^{(k)} \coloneqq [\vect{h}^{(k)}_{\vect{x}_1}, \dots, \vect{h}^{(k)}_{\vect{x}_n}] \in \Real^{r \times n}$, see \cref{fig:neustrom_architecture_extended}.
    }
    \label{fig:circles}
\end{figure}

We evaluate our supervised learning results in two ways. First, we compare the output kernel, $\mat{H}^{(k)} \transpose{\mat{H}^{(k)}}$, with the input conditional probability in \cref{eq:supervised_input_proba}, directly computing precision/recall gain (PRG) curves~\cite{flachPrecisionrecallgainCurvesPR2015} on these square matrices. Second, to compare predicted labels versus true labels, we take the matrix $\mat{H}^{(k)}$ and compute its nonnegative matrix factorization (NMF) with rank equal to the number of classes. Notice that, prior to this, the proposed method does not use the number of classes as (meta) parameter. We obtain our predicted labels by assigning each datapoint to its most dominant NMF component; we term this method NMF-HA.

In \cref{fig:supervised}, we show supervised learning results on two different datasets (output kernels and receptive fields in \cref{fig:digits_additional}, \cref{sec:additional_results}). For MNIST~\cite{lecunGradientbasedLearningApplied1998} (\cref{fig:supervised}B), we use $10^{4}$ randomly selected data points. For Digits\footnote{https://archive.ics.uci.edu/ml/datasets/Optical+Recognition+of+Handwritten+Digits} we use the subset provided in scikit-learn. To evaluate how much annotated data is needed during supervised learning, we use different amounts of training data to build the input probability in \cref{eq:supervised_input_proba}; we use the remainding annotated data for testing. In each case, we use 10 random splits to compute the mean and standard deviation of the PRG curves. Good results are observed, even when training with a small fraction of annotated data. In general, we observed that $\Theta(r)$ annotations are required for successful supervised learning.

We have already pointed out, in \cref{sec:supervised}, that our setting is different from traditional supervised classification: while we have access to abundant unlabeled data, labeled data is scarse. However, as a point of comparison, in \cref{fig:supervised} we show the results obtained with a kernel SVM classifier (using an RBF kernel). Whereas the SVM classifier takes advantage of knowing the number of classes, our method does not require such information. In \cref{fig:supervised}, we see that the proposed method outperforms the SVM classifier, when both are trained using 10\% of the data: all the SVM results lie under the Neural Nystr\"om PRG curve (better viewed with zoom).

\begin{figure}[p]
    \centering
    
    \begin{tabu} to .49\textwidth {@{\hspace{0pt}} X[5,c,m] @{\hspace{4pt}} X[80,c,m] @{\hspace{4pt}} X[70,c,m] @{\hspace{0pt}}}
        \vspace{-75pt}
        \textbf{A}
        &
        \includegraphics[width=\linewidth]{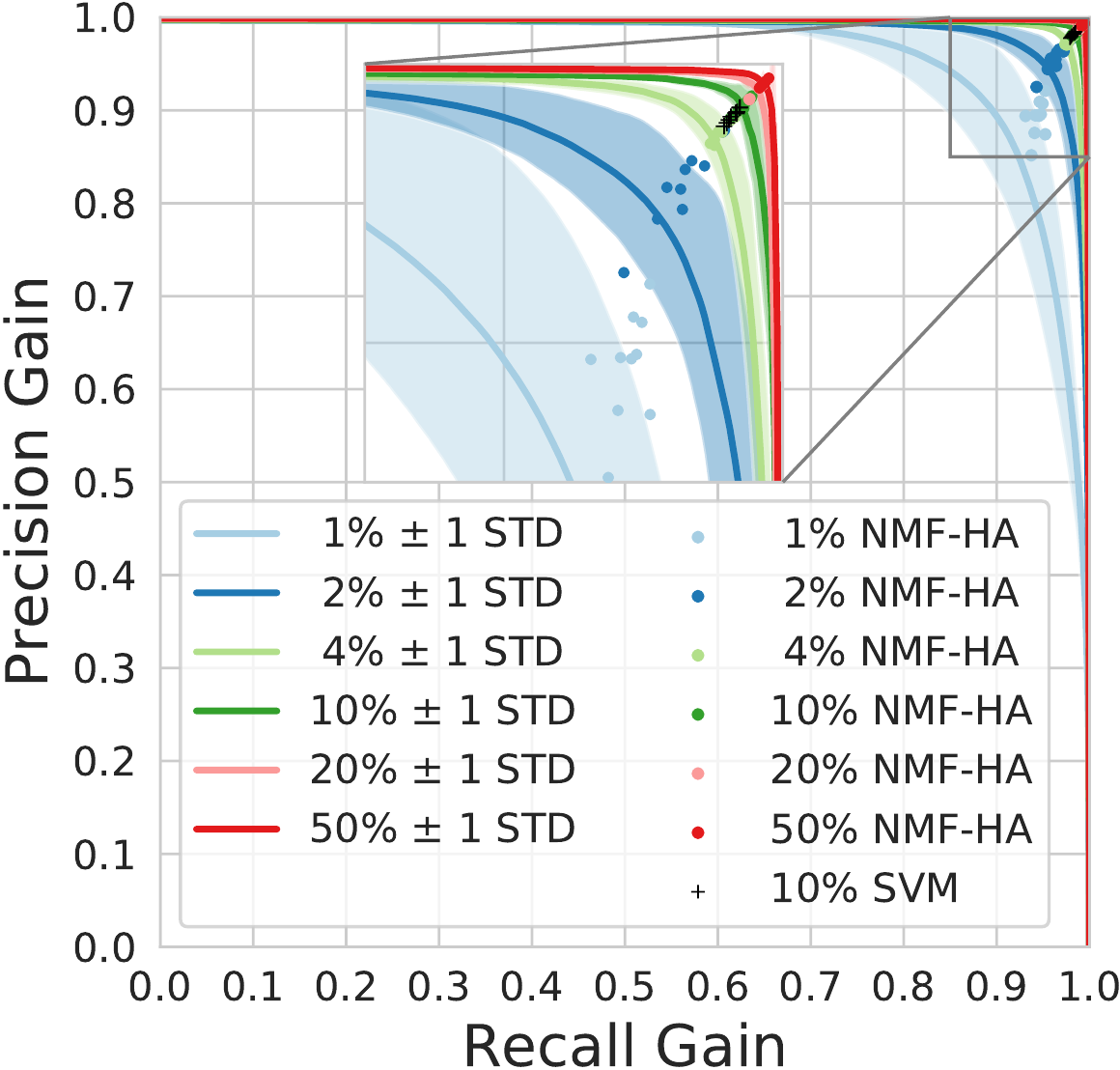}
        &
        \includegraphics[width=\linewidth]{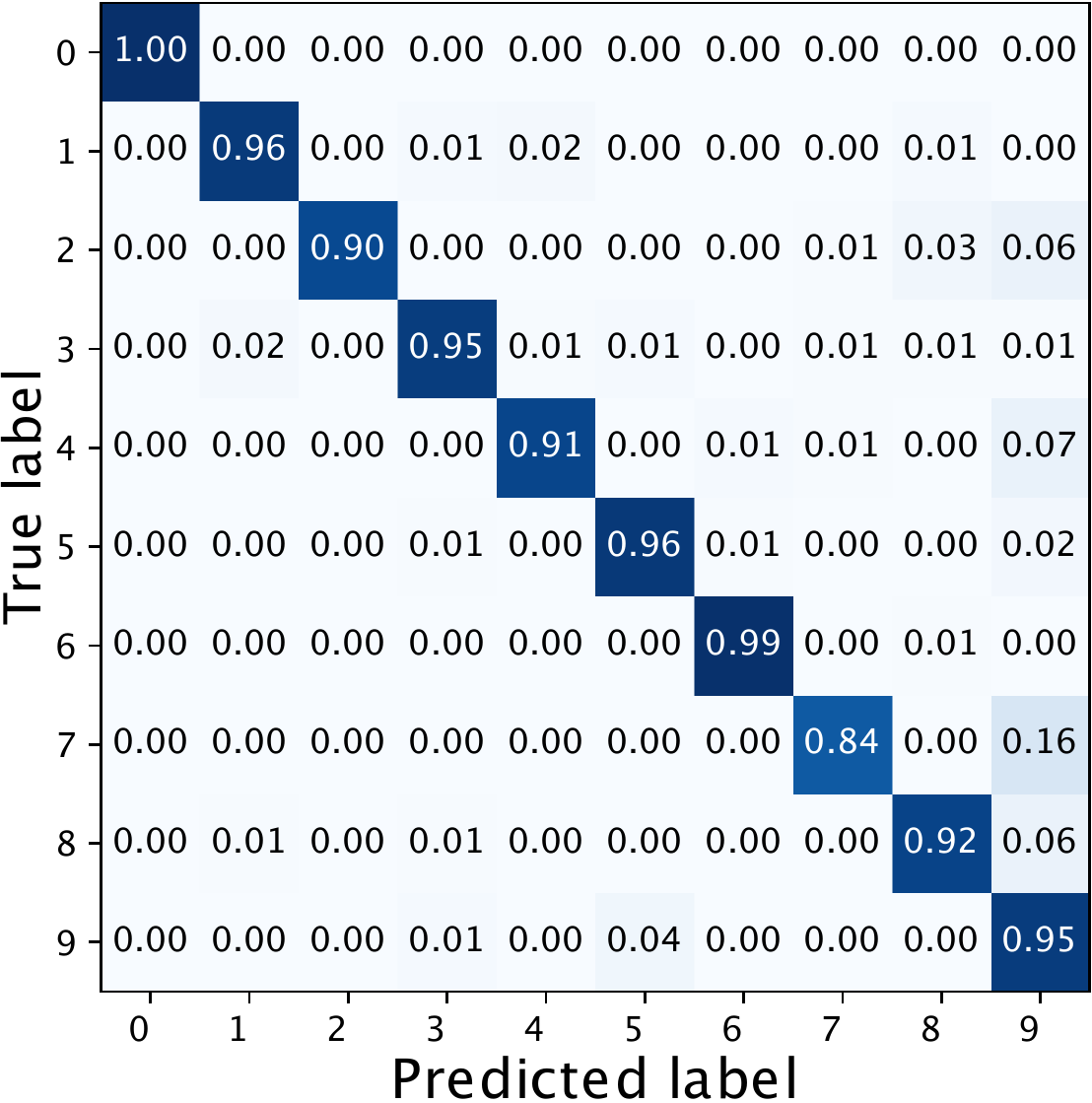}
        \\
    \end{tabu}%
    \hspace{6pt}%
    \begin{tabu} to .49\textwidth {@{\hspace{0pt}} X[5,c,m] @{\hspace{4pt}} X[80,c,m] @{\hspace{4pt}} X[70,c,m] @{\hspace{0pt}}}
        \vspace{-75pt}
        \textbf{B}
        &
        \includegraphics[width=\linewidth]{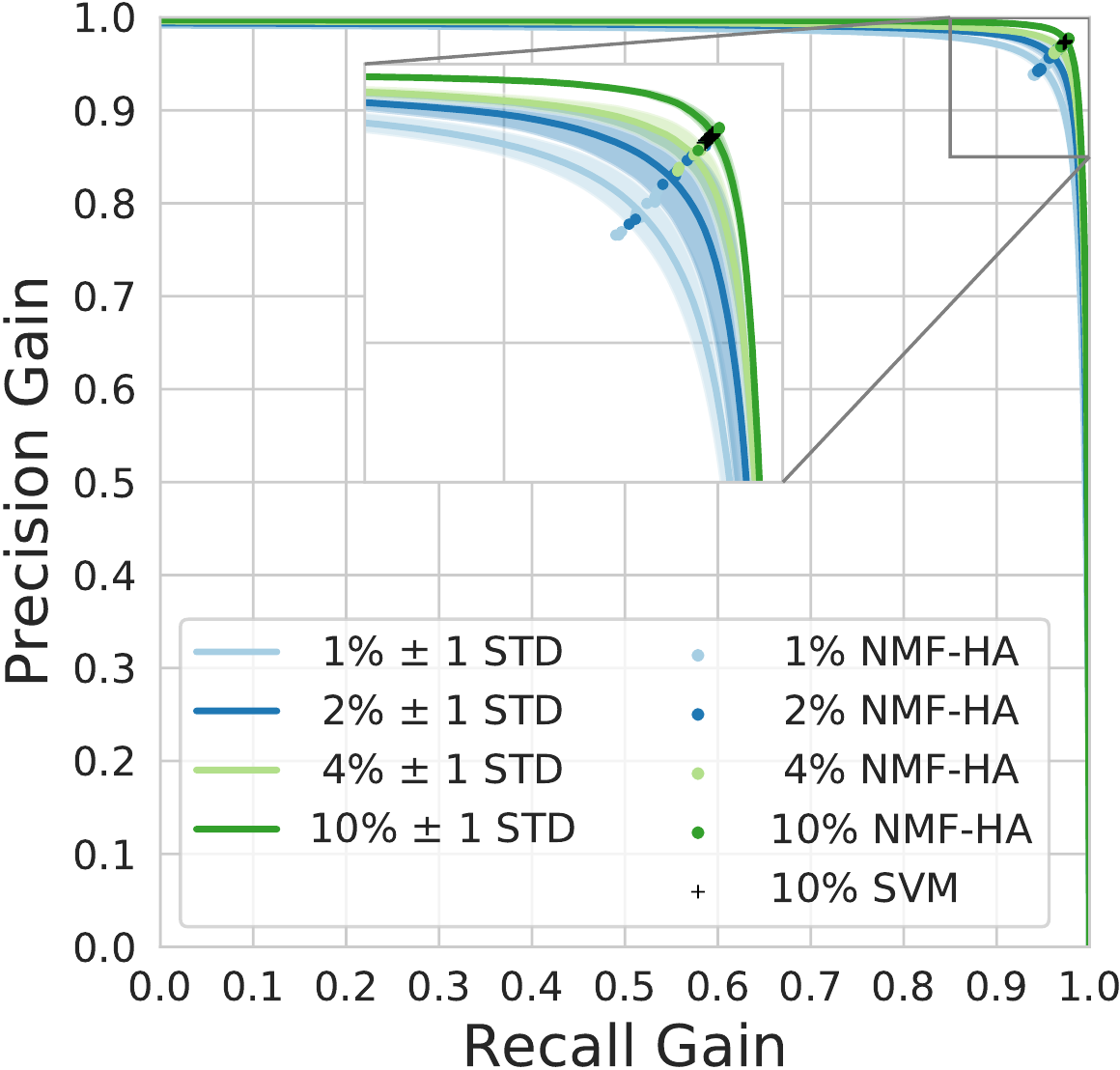}
        &
        \includegraphics[width=\linewidth]{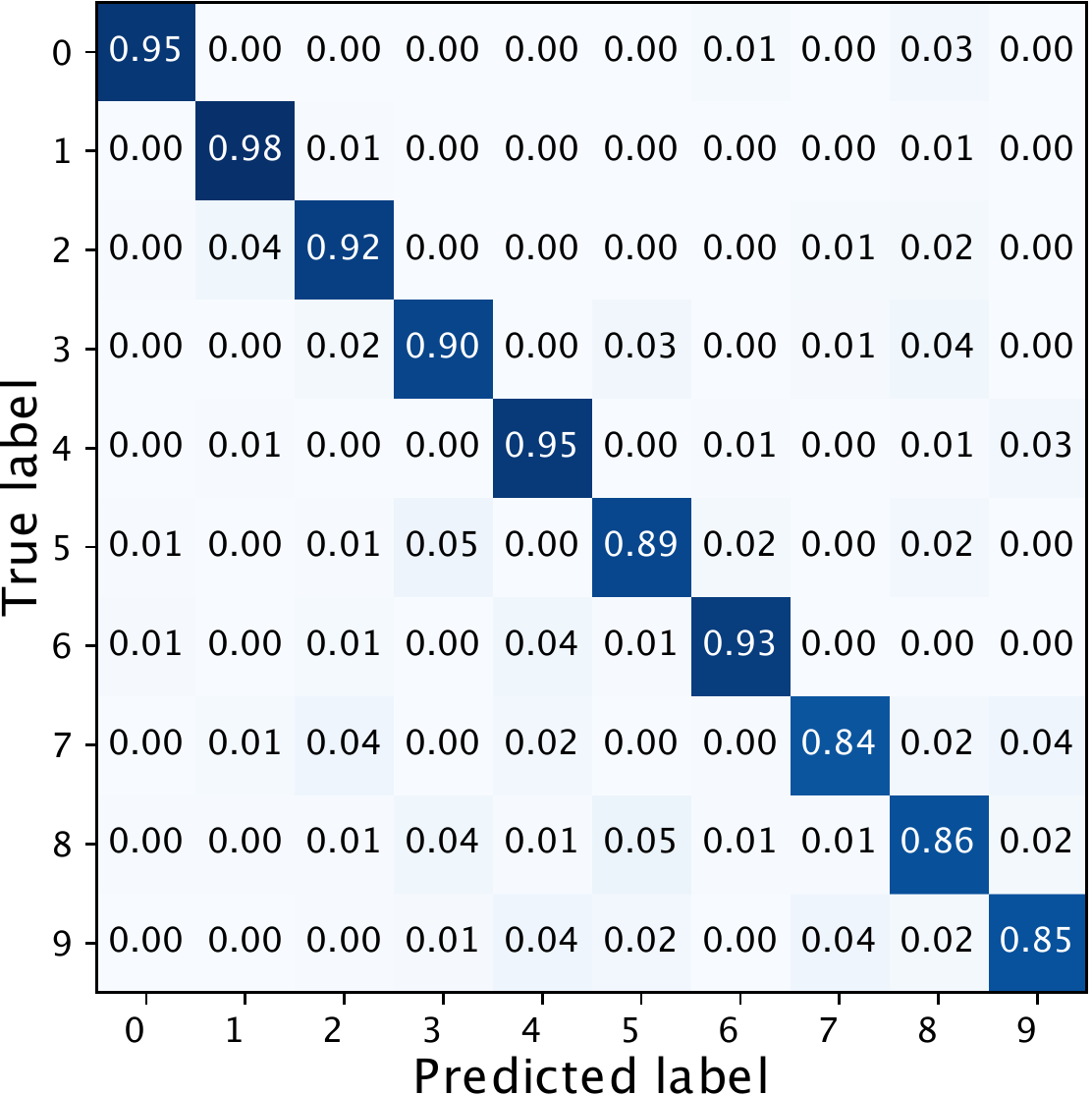}
        \\
    \end{tabu}
    
    \caption{Supervised classification with Neural Nystr\"om needs a limited amount of annotated data (indicated as a percentage) on the Digits (A) and MNIST (B) datasets.  See \cref{sec:results} for a description of how these results were obtained. For each percentage $\alpha$, we run 10 random trials and get 10 NMF-HA points.
    The confusion matrix corresponds to one of these trials. Better viewed with zoom.}
    \label{fig:supervised}
\end{figure}

\section{Conclusions}
\label{sec:conclusions}

We presented a biologically-inspired neural network, formally derived from the Nystr\"om method, to approximate conditional probabilities that arise as an agent explores a given space. The proposed network yields Neural Nystr\"om representations that soft-tile the input space with highly sparse receptive fields that are localized in the input space.
Our representations show that (1) place-cell-like neurons can encode information about conditional probabilities and (2) that the encoding of information about the current location (exhibiting localized and sparse receptive fields) is a necessary step to achieve this goal. Lastly, the proposed computational motif can be extended to handle supervised problems, creating class-specific place cells while exhibiting low sample complexity.

\textbf{Acknowledgements.} We would like to thank Mihai Capot\u{a}, Dmitri Chklovskii, Bryn Keller, Victor Minden, Anirvan Sengupta, Javier Turek, and Ted Willke for fruitful discussions and suggestions. We appreciate the availability of datasets provided by the UCI Machine Learning Repository~\cite{duaUCIMachineLearning2017}.


\bibliographystyle{abbrvnat}
\bibliography{neustrom}

\newpage

\appendix



\section{Nystr\"om derivation}
\label{sec:nystrom}

This section closely follows Appendix A of~\cite{mairalEndtoendKernelLearning2016} and is included here for completeness.
Consider an input space $\set{X}$ and a positive semidefinite kernel function $K : \set{X} \times \set{X} \rightarrow \Real$. Let $\set{H}$ be an associated reproducing kernel Hilbert space and $\phi : \set{X} \rightarrow \set{H}$ be a feature map such that for any $\vect{x}_i, \vect{x}_j \in \set{X}$, $K(\vect{x}_i, \vect{x}_j) = \langle \phi(\vect{x}_i), \phi(\vect{x}_j) \rangle_{\set{H}}$.. Given a set of $T$ input points $\vect{x}_1, \dots, \vect{x}_n \in \set{X}$, we define the kernel matrix $\mat{K} \in \Real^{n \times n}$ by $\mat{K}_{ij} = K(\vect{x}_i, \vect{x}_j)$.

We also consider the kernelized data matrix $\mat{K} = \transpose{\mat{\Phi}} \mat{\Phi}$, where $\mat{\Phi} \in \Real^{d \times n}$ is the matrix containing $\phi(\vect{x}_1), \dots, \phi(\vect{x}_n)$ as columns (note that $d$ might be infinite).

We want to approximate
\begin{equation}
	K(\vect{x}_i, \vect{x}_j) = \langle \phi(\vect{x}_i), \phi(\vect{x}_j) \rangle_{\set{H}} \approx \langle \varphi(\vect{x}_i), \varphi(\vect{x}_j) \rangle
\end{equation}

Let $\vect{w}_1, \dots, \vect{w}_R \in \set{X}$ be a collection of $R$ landmark points that we will use to build our approximation.
We will approximate $\langle \phi(\vect{x}_i), \phi(\vect{x}_j) \rangle_{\set{H}}$ by the dot product of their orthogonal projections $\vect{p}_i$ and $\vect{p}_j$ on $\set{P} = \operatorname{Span}(\phi(\vect{w}_1), \dots , \phi(\vect{w}_R))$.
The orthogonal projection $f_t$ is defined as
\begin{equation}
	\vect{p}_i = \argmin_{\vect{p} \in \set{P}} \norm{\phi(\vect{x}_i) - \vect{p}}{\set{H}}^2 .
\end{equation}
Equivalently,
\begin{equation}
	\vect{p}_i = \sum_{r=1}^{R} \left( \alpha_i^{\star} \right)_r \, \phi(\vect{w}_r)
    \quad\text{with}\quad
    \vect{\alpha}_i^{\star} \in \argmin_{\vect{\alpha} \in \Real^R} \norm{\phi(\vect{x}_i) - \sum_{r=1}^{R} \vect{\alpha}_r \, \phi(\vect{w}_r)}{\set{H}}^2 .
    \label{eq:nystrom_projection_span_p}
\end{equation}

\begin{equation}
	\vect{\alpha}_i^{\star} \in \argmin_{\vect{\alpha} \in \Real^R} 1 - 2 \transpose{\vect{\alpha}} \mat{K}_{\mat{W}, \vect{x}_i}
    + \transpose{\vect{\alpha}} \mat{K}_{\mat{W}, \mat{W}} \vect{\alpha} ,
    \label{eq:nystrom_projection_span_ridge_regression}
\end{equation}
where we used $\langle \phi(\vect{x}), \phi(\vect{x}) \rangle_{\set{H}} = 1$ and
\begin{align}
	\mat{K}_{\mat{W}, \vect{x}} &= 
	\begin{bmatrix}
    	\langle \phi(\vect{w}_1) , \phi(\vect{x}) \rangle_{\set{H}} \\
        \vdots \\
        \langle \phi(\vect{w}_R) , \phi(\vect{x}) \rangle_{\set{H}}
    \end{bmatrix}    
    =
	\begin{bmatrix}
    	K(\vect{w}_1, \vect{x}) \\
        \vdots \\
        K(\vect{w}_R, \vect{x})
    \end{bmatrix} ,
    \\
    \mat{K}_{\mat{W}, \mat{W}} &= 
    \begin{bmatrix}
    	\langle \phi(\vect{w}_1) , \phi(\vect{w}_1) \rangle_{\set{H}} & \cdots & \langle \phi(\vect{w}_1) , \phi(\vect{w}_R) \rangle_{\set{H}} \\
        \vdots & \ddots & \vdots \\
        \langle \phi(\vect{w}_R) , \phi(\vect{w}_1) \rangle_{\set{H}} & \cdots & \langle \phi(\vect{w}_R) , \phi(\vect{w}_R) \rangle_{\set{H}}
    \end{bmatrix}
    =
	\begin{bmatrix}
    	K(\vect{w}_1, \vect{w}_1) & \cdots & K(\vect{w}_1, \vect{w}_R) \\
        \vdots & \ddots & \vdots \\
        K(\vect{w}_R, \vect{w}_1) & \cdots & K(\vect{w}_R, \vect{w}_R)
    \end{bmatrix} .
\end{align}
Assuming that $\mat{K}_{\mat{W}, \mat{W}}$ is invertible, the solution of \cref{eq:nystrom_projection_span_ridge_regression} is
$
	\vect{\alpha}_i^{\star} = \mat{K}_{\mat{W}, \mat{W}}^{-1} \mat{K}_{\mat{W}, \vect{x}_i}
$.
Writing $\vect{p}_i$ in vector form, i.e.,
$\vect{p}_i =  \left[ \phi(\vect{w}_1), \dots. , \phi(\vect{w}_R) \right] \vect{\alpha}_i^{\star}$, we finally get
\begin{subequations}
\begin{align}
	\langle \vect{p}_i, \vect{p}_j \rangle_{\set{H}}
	&=
    \langle \left[ \phi(\vect{w}_1), \dots. , \phi(\vect{w}_R) \right] \vect{\alpha}^{\star}_i, \left[ \phi(\vect{w}_1), \dots. , \phi(\vect{w}_R) \right] \vect{\alpha}^{\star}_j \rangle_{\set{H}}
    \\
    &=
    \transpose{\vect{\alpha}^{\star}_i} \mat{K}_{\mat{W}, \mat{W}} \vect{\alpha}^{\star}_j
    \\
    &=
    \transpose{\mat{K}_{\mat{W}, \vect{x}_i}} \mat{K}_{\mat{W}, \mat{W}}^{-1}  \mat{K}_{\mat{W}, \mat{W}} \mat{K}_{\mat{W}, \mat{W}}^{-1} \mat{K}_{\mat{W}, \vect{x}_j}
    \\
    &=
    \langle \varphi(\vect{x}_i), \varphi(\vect{x}_j) \rangle
    ,
\end{align}
\end{subequations}
where
\begin{equation}
	\varphi(\vect{x}) = \mat{K}_{\mat{W}, \mat{W}}^{-1/2} \mat{K}_{\mat{W}, \vect{x}}
	.
\end{equation}
When $\mat{K}_{\mat{W}, \mat{W}}$ is not invertible or simply badly conditioned, it is common to use
\begin{equation}
	\varphi(\vect{x}) = (\mat{K}_{\mat{W}, \mat{W}} + \varepsilon \mat{I})^{-1/2} \mat{K}_{\mat{W}, \vect{x}} .
\end{equation}

\begin{proposition}[\cite{zhangImprovedNystromLowrank2008}]
	Given a data set $\vect{x}_1, \dots, \vect{x}_n \in \set{X}$, and the landmark set $\vect{w}_1, \dots, \vect{w}_R \in \set{X}$, the Nystr\"om reconstruction of the kernel entry $K(\vect{x}_i,\vect{x}_j)$ will be exact if there exist two landmark points $\vect{w}_r, \vect{w}_s$ such that $\vect{x}_i = \vect{w}_r$ and $\vect{x}_j = \vect{w}_s$.
\end{proposition}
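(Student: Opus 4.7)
The plan is to use the geometric characterization of the Nystr\"om approximation as an orthogonal projection onto the span of the feature maps of the landmarks, which the derivation in the appendix has already established. The identity $\langle \vect{p}_i, \vect{p}_j \rangle_{\set{H}} = \langle \varphi(\vect{x}_i), \varphi(\vect{x}_j) \rangle$ means the reconstruction error is entirely governed by how well $\phi(\vect{x}_i)$ and $\phi(\vect{x}_j)$ are captured by $\set{P} = \operatorname{Span}(\phi(\vect{w}_1), \dots, \phi(\vect{w}_R))$. So the proof reduces to exhibiting the trivial case in which both feature vectors already lie in $\set{P}$.

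Concretely, I would proceed in three short steps. First, assume $\vect{x}_i = \vect{w}_r$ for some $r$. Then $\phi(\vect{x}_i) = \phi(\vect{w}_r) \in \set{P}$, and by the defining minimization in \cref{eq:nystrom_projection_span_p}, the orthogonal projection satisfies $\vect{p}_i = \phi(\vect{x}_i)$ (with the optimal coefficient vector being $\vect{\alpha}_i^{\star} = \vect{e}_r$, the $r$-th canonical basis vector). Second, apply the same argument to $\vect{x}_j = \vect{w}_s$ to get $\vect{p}_j = \phi(\vect{x}_j)$. Third, substitute into the chain of equalities derived at the end of \cref{sec:nystrom} to conclude
\begin{equation*}
    \langle \varphi(\vect{x}_i), \varphi(\vect{x}_j) \rangle = \langle \vect{p}_i, \vect{p}_j \rangle_{\set{H}} = \langle \phi(\vect{x}_i), \phi(\vect{x}_j) \rangle_{\set{H}} = K(\vect{x}_i, \vect{x}_j),
\end{equation*}
so the approximation is exact.

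As a sanity check and alternative algebraic route, I would verify directly that $\mat{K}_{\mat{W}, \vect{w}_r}$ is the $r$-th column of $\mat{K}_{\mat{W}, \mat{W}}$, so $\mat{K}_{\mat{W}, \mat{W}}^{-1} \mat{K}_{\mat{W}, \vect{w}_r} = \vect{e}_r$ and symmetrically $\mat{K}_{\mat{W}, \mat{W}}^{-1} \mat{K}_{\mat{W}, \vect{w}_s} = \vect{e}_s$; plugging these into $\transpose{\mat{K}_{\mat{W}, \vect{x}_i}} \mat{K}_{\mat{W}, \mat{W}}^{-1} \mat{K}_{\mat{W}, \vect{x}_j}$ yields $\transpose{\vect{e}_r} \mat{K}_{\mat{W}, \mat{W}} \vect{e}_s = K(\vect{w}_r, \vect{w}_s) = K(\vect{x}_i, \vect{x}_j)$. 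This confirms exactness without appealing to the Hilbert-space picture.

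There is no real obstacle here: the statement is essentially the observation that idempotent projections act as the identity on their range, combined with the fact that the Nystr\"om formula is exactly such a projection. The only subtlety worth flagging is the invertibility of $\mat{K}_{\mat{W}, \mat{W}}$, which is hypothesized in the derivation; if one uses the regularized version $(\mat{K}_{\mat{W}, \mat{W}} + \varepsilon \mat{I})^{-1/2}$, exactness holds only in the limit $\varepsilon \to 0$, so the stated proposition implicitly refers to the unregularized, invertible case.
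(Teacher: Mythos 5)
Your proposal is correct, and it actually contains two complete arguments. Your primary route is geometric: since $\phi(\vect{x}_i)=\phi(\vect{w}_r)$ already lies in $\set{P}$, the orthogonal projection fixes it ($\vect{p}_i=\phi(\vect{x}_i)$, with $\vect{\alpha}_i^{\star}=\vect{e}_r$ unique when $\mat{K}_{\mat{W},\mat{W}}$ is invertible), and likewise for $\vect{x}_j$, so the projected inner product equals the true one. The paper instead proves the statement by the purely algebraic computation you relegate to a ``sanity check'': it writes $\mat{K}_{\mat{W},\vect{w}_r}$ and $\mat{K}_{\mat{W},\vect{w}_l}$ as $\mat{K}_{\mat{W},\mat{W}}$ times sampling (row/column-selection) matrices and cancels $\mat{K}_{\mat{W},\mat{W}}\mat{K}_{\mat{W},\mat{W}}^{-1}\mat{K}_{\mat{W},\mat{W}}=\mat{K}_{\mat{W},\mat{W}}$, arriving at $K(\vect{w}_r,\vect{w}_l)$ --- exactly your $\transpose{\vect{e}_r}\mat{K}_{\mat{W},\mat{W}}\vect{e}_s$ identity in different notation. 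The geometric version buys conceptual clarity (exactness is just idempotence of a projection on its range, and it immediately suggests the more general fact that the error is controlled by the projection residuals); the paper's version is shorter, self-contained, and matches the matrix form in which the feature map $\varphi$ is actually implemented. Your remark about the regularized inverse $(\mat{K}_{\mat{W},\mat{W}}+\varepsilon\mat{I})^{-1/2}$ only giving exactness as $\varepsilon\to 0$ is a correct and worthwhile caveat that the paper does not state.
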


\begin{proof}
    \begin{subequations}
	\begin{align}
    	\langle \varphi(\vect{x}_i), \varphi(\vect{x}_j) \rangle
        &=
        \langle \mat{K}_{\mat{W}, \mat{W}}^{-1/2} \mat{K}_{\mat{W}, \vect{x}_i}, \mat{K}_{\mat{W}, \mat{W}}^{-1/2} \mat{K}_{\mat{W}, \vect{x}_j} \rangle \\
        &=
        \transpose{\mat{K}_{\mat{W}, \vect{x}_i}} \mat{K}_{\mat{W}, \mat{W}}^{-1} \mat{K}_{\mat{W}, \vect{x}_j} \\
        &=
        \transpose{\mat{K}_{\mat{W}, \vect{w}_r}} \mat{K}_{\mat{W}, \mat{W}}^{-1} \mat{K}_{\mat{W}, \vect{w}_l} \\
        &= \mat{S}_r \mat{K}_{\mat{W}, \mat{W}} \mat{K}_{\mat{W}, \mat{W}}^{-1} \mat{K}_{\mat{W}, \mat{W}} \mat{S}_l \\
        &= K(\vect{w}_r,\vect{w}_l) \\
        &= K(\vect{x}_i,\vect{x}_j) ,
    \end{align}
    \end{subequations}
    where $\mat{S}_k$ and $\mat{S}_i$ are sampling matrices that select the $k$-th row and $i$-th column, respectively.
\end{proof}

\section{The embedding}
\label{sec:embedding}

The embedding could be pre-trained separately using \cref{eq:KL_loss}. Depending on the input $p_{\text{in}}$ and the chosen $K$, this optimization problem has different names: SNE~\cite{hintonStochasticNeighborEmbedding2003}, t-SNE~\cite{vandermaatenVisualizingHighdimensionalData2008}, word2vec~\cite{mikolovDistributedRepresentationsWords2013}, and GLOVE~\cite{penningtonGloveGlobalVectors2014}, to name a few examples.

However, because Neural Nystr\"om is fully differentiable, we jointly learn the embedding and the Neural Nystr\"om parameters. Any neural network can be used to compute the embedding. As an option, we found that Random Fourier features~\cite{rahimiRandomFeaturesLarge2007} (RFF, see \cref{sec:rff} for a description) can be used as a layer to compute the embedding. In this work, the embedding takes the form
\begin{subequations}
\begin{align}
    \vect{v}_{\vect{x}}^{(0)}
    &=
    \begin{cases}
        \operatorname{RFF}(\vect{x}) & \text{if using RFF} \\
        \vect{x} & \text{otherwise} \\
    \end{cases}
     \\
    \vect{v}_{\vect{x}}^{(\ell+1)} &= \operatorname{PReLU} \left( \mat{A}^{(\ell)} \vect{v}_{\vect{x}}^{(\ell)}  + \vect{b}^{(\ell)} \right)
    \label{eq:embedding}
\end{align}
\end{subequations}
Then, the set of parameters to be learned is: for Neural Nystr\"om, $\mat{M}, \mat{W}$  and, for the embedding, $(\forall \ell)\, \mat{A}^{(\ell)}, \vect{b}^{(\ell)}$, PReLU slope, and optionally RFF variance (see \cref{sec:rff}).

Since the focus of this work is not on the embedding itself, we make no claims on the biological plausibility of an embedding of this shape, nor try to map it to brain structures. We just use a simple architecture to show that we can simultaneously learn the embedding and the hippocampal module.

\section{Random Fourier Features}
\label{sec:rff}

\begin{theorem}[Bochner~\cite{rudinFourierAnalysisGroups1990}]
    A continuous kernel $K(\vect{x}, \vect{y}) = K(\vect{x} - \vect{y})$ on $\Real^d$ is positive definite if and only if $K(\delta)$ is the Fourier transform of a non-negative measure.
\end{theorem}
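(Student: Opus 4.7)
The plan is to prove both directions of Bochner's theorem, with the forward direction following by a one-line Fubini computation and the reverse direction via a regularization-plus-compactness argument in the Fourier domain.

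For the easy direction, assume $K(\delta) = \int_{\Real^d} e^{-2\pi i \langle \delta, \xi \rangle} \, d\mu(\xi)$ for a non-negative finite Borel measure $\mu$. For arbitrary points $x_1, \ldots, x_n \in \Real^d$ and coefficients $c_1, \ldots, c_n \in \mathbb{C}$, Fubini gives
\[
\sum_{j,k} c_j \overline{c_k} \, K(x_j - x_k) = \int_{\Real^d} \Bigl| \sum_{j} c_j e^{-2\pi i \langle x_j, \xi \rangle} \Bigr|^2 d\mu(\xi) \geq 0,
\]
so $K$ is positive definite.

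For the converse, I would first record elementary consequences of positive definiteness: $K(0) \geq 0$, $K(-x) = \overline{K(x)}$, and $|K(x)| \leq K(0)$ (from the $2 \times 2$ Gram matrix test), and hence uniform continuity of $K$ by an $\varepsilon$-$\delta$ argument based on continuity at the origin. Next I regularize, defining $K_\varepsilon(x) = K(x) e^{-\varepsilon \|x\|^2}$ for each $\varepsilon > 0$. Since the Gaussian is the Fourier transform of a positive measure, it is positive definite; as the pointwise product of two positive-definite functions (Schur product theorem on Gram matrices), $K_\varepsilon$ is too. Moreover $K_\varepsilon \in L^1(\Real^d)$, so its Fourier transform $\widehat{K_\varepsilon}$ is a bounded continuous function.

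The crucial step is $\widehat{K_\varepsilon} \geq 0$ pointwise. Starting from $\sum c_j \overline{c_k} K_\varepsilon(x_j - x_k) \geq 0$, I pass to the continuous inequality $\iint K_\varepsilon(x-y) f(x) \overline{f(y)} \, dx \, dy \geq 0$ for $f \in C_c(\Real^d)$ by Riemann approximation (justified by uniform continuity of $K_\varepsilon$). By Plancherel, this integral equals $\int \widehat{K_\varepsilon}(\xi) |\hat{f}(\xi)|^2 d\xi$, and choosing $\hat{f}$ concentrated near an arbitrary frequency $\xi_0$ extracts $\widehat{K_\varepsilon}(\xi_0) \geq 0$. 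Thus $d\mu_\varepsilon := \widehat{K_\varepsilon}(\xi) \, d\xi$ is a non-negative finite measure with total mass $K_\varepsilon(0) \leq K(0)$ by Fourier inversion.

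To finish, I let $\varepsilon \to 0$. The uniform mass bound combined with a tightness estimate based on averaging $1 - \Re K_\varepsilon$ against a small-ball probability density (a standard trick controlled by continuity of $K$ at $0$) enables a Prokhorov argument yielding a subsequential weak limit $\mu$. Passing to the limit in $K_\varepsilon(x) = \int e^{-2\pi i \langle x, \xi\rangle} d\mu_\varepsilon(\xi)$ via the continuous bounded test function $\xi \mapsto e^{-2\pi i \langle x, \xi\rangle}$ and continuity of $K$ in $\varepsilon$ gives $K(x) = \int e^{-2\pi i \langle x, \xi\rangle} d\mu(\xi)$. The main obstacle is the step $\widehat{K_\varepsilon} \geq 0$: converting the discrete positive-definiteness inequality into a pointwise frequency-domain statement is the analytic heart of the proof, with the Prokhorov tightness argument the key technical hurdle at the end.
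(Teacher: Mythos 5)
The paper does not actually prove this statement: Bochner's theorem appears only as a quoted classical result with a citation to Rudin's \emph{Fourier Analysis on Groups}, and is used as a black box to justify the random Fourier feature construction in the appendix. There is therefore no in-paper argument to compare against; what you have written is a genuine proof where the paper offers only a pointer to the literature. Your outline is a correct rendition of the standard argument: the forward direction via Fubini is exactly right, and the converse via Gaussian regularization, positivity of $\widehat{K_\varepsilon}$ through the quadratic-form/Plancherel identity, and a Prokhorov limit is one of the two canonical proofs (the other being the abstract route through locally compact abelian groups and the Gelfand/Krein theory, which is what Rudin's book actually does; your concrete $\Real^d$ argument is more elementary and self-contained, at the price of not generalizing beyond $\Real^d$). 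Two steps deserve one more line each if this were to be written out in full: (i) uniform continuity of $K$ does not follow from the $2\times 2$ Gram test alone --- you need the $3\times 3$ Gram inequality $|K(x)-K(y)|^2 \le 2\,K(0)\,\bigl(K(0)-\mathrm{Re}\,K(x-y)\bigr)$ to transfer continuity at the origin to the whole space; and (ii) identifying the total mass of $\mu_\varepsilon$ with $K_\varepsilon(0)$ presupposes $\widehat{K_\varepsilon}\in L^1$, which must be established first (e.g.\ by testing against Gaussian mollifiers and using monotone convergence together with $\widehat{K_\varepsilon}\ge 0$) before Fourier inversion applies. Both are routine repairs, and the overall structure is sound.
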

Bochner's theorem guarantees that the kernel's Fourier
transform $p(\omega)$ is a proper probability distribution. We have
\begin{align}
    K(\vect{x} - \vect{y})
    &=
    \int_{\Real^d} p(\omega) e^{i \langle \omega,  \vect{x} - \vect{y} \rangle} \, d\omega
    =
    \mathrm{E}_{\omega} \left[ e^{i \langle \omega,  \vect{x} - \vect{y} \rangle} \right]
    \\
    &\approx
    \tfrac{1}{D}
    \sum_{l=1}^{D} e^{i \langle \omega_l,  \vect{x} - \vect{y} \rangle}
    =
    \tfrac{1}{D}
    \sum_{l=1}^{D} \cos \left( \langle \omega_l,  \vect{x} - \vect{y} \rangle \right) + i \sin \left( \langle \omega_l,  \vect{x} - \vect{y} \rangle \right)
    \\
    &\approx
    \sum_{l=1}^{D} \cos \left( \langle \omega_l,  \vect{x} \rangle \right) \cos \left( \langle \omega_l, \vect{y} \rangle \right) + \sin \left( \langle \omega_l,  \vect{x} \rangle \right) \sin \left( \langle \omega_l, \vect{y} \rangle \right)
    ,
\end{align}
where $\omega_l \sim p$ and the last approximation holds because both $p(\omega)$ and $K(\vect{x} - \vect{y})$ are real. We can then define the feature $\varphi( \vect{x}) \in \Real^{2D}$
defined by
\begin{equation}
    \varphi_l ( \vect{x}) =
    \tfrac{1}{\sqrt{D}}
    \transpose{\left[
         \cos \left( \langle \omega_1,  \vect{x} \rangle \right) ,
         \dots ,
         \cos \left( \langle \omega_D,  \vect{x} \rangle \right) ,
         \sin \left( \langle \omega_1,  \vect{x} \rangle \right) ,
         \dots ,
         \sin \left( \langle \omega_D,  \vect{x} \rangle \right)
    \right]}
\end{equation}
and set
\begin{equation}
    K(\vect{x} - \vect{y})
    \approx
    \langle \varphi( \vect{x}), \varphi( \vect{y}) \rangle
    .
\end{equation}
We are only left with the specification of $p(\omega)$, which is the Fourier transform of the kernel $K$. For the RBF kernel with variance $\gamma^{-1}$, $p = N (0, \gamma)$.

We found that Random Fourier features~\cite{rahimiRandomFeaturesLarge2007} (RFF) can form useful embeddings. We could separately pre-train RFF using the loss function
\begin{equation}
   	- \iint 
   	p_{\text{in}} (\vect{y} | \vect{x})
   	\log
   	\frac{
   	    \langle \varphi( \vect{x}), \varphi( \vect{y}) \rangle
   	}{
   	    \int \langle \varphi( \vect{x}), \varphi( \vect{z}) \rangle
        \, d\vect{z}
   	}
   	\, d\vect{x} d\vect{y}
   	,
   	\label{eq:KL_loss_RFF}
\end{equation}
and the algorithm proposed in \cref{sec:optimization}. In this case, the only parameter is $\gamma$. Alternatively, we could learn an adaptive Fastfood transform~\cite{leFastfoodComputingKernel2013,yangDeepFriedConvnets2015}. Although initial tests were successful, we leave this exploration for future work. Of course, albeit their usefulness for machine learning, the neural inspiration is somewhat lost in \cref{eq:KL_loss_RFF}.

\section{Online computing with the successor representation: step by step derivation}
\label{sec:online_step_by_step}

Plugging \cref{eq:successor_representation_input_proba} into \cref{eq:neustrom_loss}, we get
\begin{subequations}
\begin{align}
	\mathcal{L}
	&= 
	- \iint p_{\text{in}} (\vect{y} | \vect{x})
	\log \frac{
	    \transpose{\vect{g}_\vect{x}} \vect{g}_{\vect{y}}
	}{
	    \int \transpose{\vect{g}_\vect{x}} \vect{g}_{\vect{z}}
        \, d\vect{v}_{\vect{z}}
	}
	\, d\vect{x} d\vect{y} .
	\\
	&\propto 
	- \iint \mathbb{E}_{\pi, p} \left[ \sum_{t=0}^{\infty} \gamma^t \indicator{\vect{x}_t = \vect{y}} \,\bigg|\, \vect{x}_0 = \vect{x} \right]
	\left(
	\log \frac{
	    \transpose{\vect{g}_\vect{x}} \vect{g}_{\vect{y}}
	}{
	    \int \transpose{\vect{g}_\vect{x}} \vect{g}_{\vect{z}}
        \, d\vect{v}_{\vect{z}}
	}
	\right) 
	\, d\vect{x} d\vect{y}
	\\
	&= 
	- \iint
	\mathbb{E}_{\pi, p}
	\left[
    	\sum_{t=0}^{\infty} \gamma^t \indicator{\vect{x}_t = \vect{y}} 
		\log \frac{
    	    \transpose{\vect{g}_\vect{x}} \vect{g}_{\vect{y}}
    	}{
    	    \int \transpose{\vect{g}_\vect{x}} \vect{g}_{\vect{z}}
            \, d\vect{v}_{\vect{z}}
    	}
    	\,\bigg|\, \vect{x}_0 = \vect{x}
	\right]
	\, d\vect{x} d\vect{y} .
	\\
	&=
	- \int
	\mathbb{E}_{\pi, p}
	\left[
	    \int
    	\sum_{t=0}^{\infty} \gamma^t \indicator{\vect{x}_t = \vect{y}}
		\log \frac{
    	    \transpose{\vect{g}_\vect{x}} \vect{g}_{\vect{y}}
    	}{
    	    \int \transpose{\vect{g}_\vect{x}} \vect{g}_{\vect{z}}
            \, d\vect{v}_{\vect{z}}
    	}
    	d\vect{y}
    	\,\bigg|\, \vect{x}_0 = \vect{x}
	\right]
	\, d\vect{x} .
	\\
	&= 
	- \int
	\mathbb{E}_{\pi, p}
	\left[
    	\sum_{t=0}^{\infty} \gamma^t
    	\int
    	\indicator{\vect{x}_t = \vect{y}}
		\log \frac{
    	    \transpose{\vect{g}_\vect{x}} \vect{g}_{\vect{y}}
    	}{
    	    \int \transpose{\vect{g}_\vect{x}} \vect{g}_{\vect{z}}
            \, d\vect{v}_{\vect{z}}
    	}
    	d\vect{y}
    	\,\bigg|\, \vect{x}_0 = \vect{x}
	\right]
	\, d\vect{x}
	\\
	&=
	- \int
	\mathbb{E}_{\pi, p}
	\left[
    	\sum_{t=0}^{\infty} \gamma^t
		\log \frac{
    	    \transpose{\vect{g}_\vect{x}} \vect{g}_{\vect{x}_{t}}
    	}{
    	    \int \transpose{\vect{g}_\vect{x}} \vect{g}_{\vect{z}}
            \, d\vect{v}_{\vect{z}}
    	}
    	\,\bigg|\, \vect{x}_0 = \vect{x}
	\right]
	\, d\vect{x}
	\\
	&\propto 
	- \int
	\sum_{\substack{
	    \tau_{\pi, p} = [ \vect{x}_{0}, \vect{x}_{1}, \dots, \vect{x}_{T} ]
	    \\
	    \vect{x}_{0} = \vect{x}
	}}
	\sum_{t=0}^{T} \gamma^t
	\log \frac{
	    \transpose{\vect{g}_\vect{x}} \vect{g}_{\vect{x}_{t}}
	}{
	    \int \transpose{\vect{g}_\vect{x}} \vect{g}_{\vect{z}}
        \, d\vect{v}_{\vect{z}}
	}
	\, d\vect{x}
	.
\end{align}
\label{eq:SR_neustrom_online_step_by_step}%
\end{subequations}
In the last equality we replaced the expectation by the empirical expectation computed from samples.

\section{Implementation specification and details}
\label{sec:implementation}

Unless specified, for simplicity, the input conditional probabilities are computed as a row-normalized kernel, see \cref{eq:kernel_input_proba}.
Specifically, we use an RBF kernel. When required to ensure manifold disentangling (i.e., avoiding links that do not follow the manifold geometry), we only compute the kernel values over a small set of nearest neighbors for each point~\cite{hintonStochasticNeighborEmbedding2003}.

At initialization, we set $\mat{M} = \mat{I}$ and the landmark matrix $\mat{W}$ to the $k$-means centroids of the set $\{ \vect{v}_{\vect{x}_i}\}_{i=1}^{n}$ (the ouput of the still untrained embedding network). When RFF are used, we initialize $\gamma=1$.

We use the PyTorch library (version 1.0.1) for our implementation. We optimize using the AMSGrad method~\cite{reddiConvergenceADAM2018}.
We observed that once the method is close to convergence (plateauing KL-divergence), doing a single additional round of $k$-means re-initialization helps further decrease the loss and improves the results. Stochastic optimization continues after this step.
During training, we gradually reduce the learning rate: we use the ReduceLROnPlateau scheduler (\url{https://pytorch.org/docs/stable/optim.html#torch.optim.lr_scheduler.ReduceLROnPlateau}) with parameters patience=10, cooldown=10, and threshold=$10^{-5}$. Once the learning rate is $10^{-8}$ or lower, we stop the optimization process.

\Cref{tab:architecture_specs,tab:training_specs} respectively provide the architecture and training specifications used in every example of this work.

\begin{table}[t]
    \caption{Architecture specifications. NN stands for nearest neighbors. In all cases the MLP for the embedding has two layers, with the first layer's dimensions specified in the table and the second layer $\mat{A}^{(2)} \in \Real^{d_1 \times 100}$, where $d_1$ is the output dimension of $\mat{A}^{(1)}$. The number of landmarks $r$ was chosen to roughly provide low KL divergence while keeping $r$ as small as possible. We intend to do systematic studies of $r$'s effect in future work.}
    \label{tab:architecture_specs}

    \centering
    \begin{small}
    \begin{tabu} to \textwidth {l *{6}{c}}
        \toprule
        & data dim. & Input prob. & \# NN & \# RFF & $\mat{A}^{(1)}$ & $r$ \\
    
        \midrule
    
        One circle (\cref{fig:synthetic_unsupervised}A) & 2 & RBF kernel ($\gamma=30$) & - & 100 & $100 \times 100$ & 40 \\
        Square Grid (\cref{fig:synthetic_unsupervised}B) & 2 & RBF kernel ($\gamma=30$) & - & 100 & $100 \times 100$ & 25 \\
        Two circles (\cref{fig:circles}B) & 2 & RBF kernel ($\gamma=30$) & - & 100 & $100 \times 100$ & 40 \\
        Digits (\cref{fig:supervised}A) & 64 & RBF kernel ($\gamma=30$) & 5 & 100 & $1000 \times 100$ & 100 \\
        MNIST (\cref{fig:supervised}B) & 784 & RBF kernel ($\gamma=30$) & 9 & - & $784 \times 300$ & 300 \\
        Teapots (\cref{fig:teapot}) & 23028 & RBF kernel ($\gamma=0.005$) & 3 & 1000 & $1000 \times 100$ & 80 \\
        Teapots (\cref{fig:teapot_nomad}) & 23028 & \cref{eq:sdp-kmeans} ($k=20$) & - & 1000 & $1000 \times 100$ & 80 \\
        
        \bottomrule
    \end{tabu}
    \end{small}
\end{table}

\begin{table}[t]
    \caption{Training specifications. For supervised learning, we change the batch size for different data sampling regimes to approximately ensure that a similar number of batches per epoch is processed in every case.}
    \label{tab:training_specs}

    \centering
    \begin{small}
    \begin{tabu} to \textwidth {l *{5}{c}}
        \toprule
        & learning mode & \# epochs & learning rate & batch size \\
    
        \midrule
    
        One circle (\cref{fig:synthetic_unsupervised}A) & unsupervised & 50 & $10^{-5}$ & 10 \\
        Square Grid (\cref{fig:synthetic_unsupervised}B) & unsupervised & 50 & $10^{-5}$ & 1000 \\
        Two circles (\cref{fig:circles}A) & unsupervised & 200 & $10^{-4}$ & 100 \\
        Two circles (\cref{fig:circles}B) & unsupervised & 100 & $10^{-3}$ & 20 \\
        Digits (\cref{fig:supervised}A) & unsupervised  & 500 & $10^{-4}$ & 1024 \\
        Digits (\cref{fig:supervised}A) & supervised (with 1\% of the data) & 1000 & $10^{-4}$ & 1 \\
        Digits (\cref{fig:supervised}A) & supervised (with 2\% of the data) & 1000 & $10^{-4}$ & 1 \\
        Digits (\cref{fig:supervised}A) & supervised (with 4\% of the data) & 1000 & $10^{-4}$ & 4 \\
        Digits (\cref{fig:supervised}A) & supervised (with 10\% of the data) & 1000 & $10^{-4}$ & 16 \\
        Digits (\cref{fig:supervised}A) & supervised (with 20\% of the data) & 1000 & $10^{-4}$ & 256 \\
        Digits (\cref{fig:supervised}A) & supervised (with 50\% of the data) & 1000 & $10^{-4}$ & 1024 \\
        MNIST (\cref{fig:supervised}B) & unsupervised & 500 & $10^{-4}$ & 1024 \\
        MNIST (\cref{fig:supervised}B) & supervised (with 1\% of the data) & 500 & $10^{-4}$ & 16 \\
        MNIST (\cref{fig:supervised}B) & supervised (with 2\% of the data) & 500 & $10^{-4}$ & 64 \\
        MNIST (\cref{fig:supervised}B) & supervised (with 4\% of the data) & 500 & $10^{-4}$ & 256 \\
        MNIST (\cref{fig:supervised}B) & supervised (with 10\% of the data) & 500 & $10^{-4}$ & 1024 \\
        Teapots (\cref{fig:teapot}) & unsupervised & 100 & $10^{-4}$ & 32 \\
        Teapots (\cref{fig:teapot_nomad}) & unsupervised & 500 & $10^{-4}$ & 256 \\
        
        \bottomrule
    \end{tabu}
    \end{small}
\end{table}

\section{Additional experimental results}
\label{sec:additional_results}

Additional results and plots are provided in \cref{fig:teapot,fig:teapot_nomad,fig:circles_additional,fig:digits_additional,fig:mnist_additional}.

\begin{figure}[t]
    \centering
    
    \begin{tabu} to \textwidth {X[45,c,m] X[16,c,m]}
        \includegraphics[width=\linewidth]{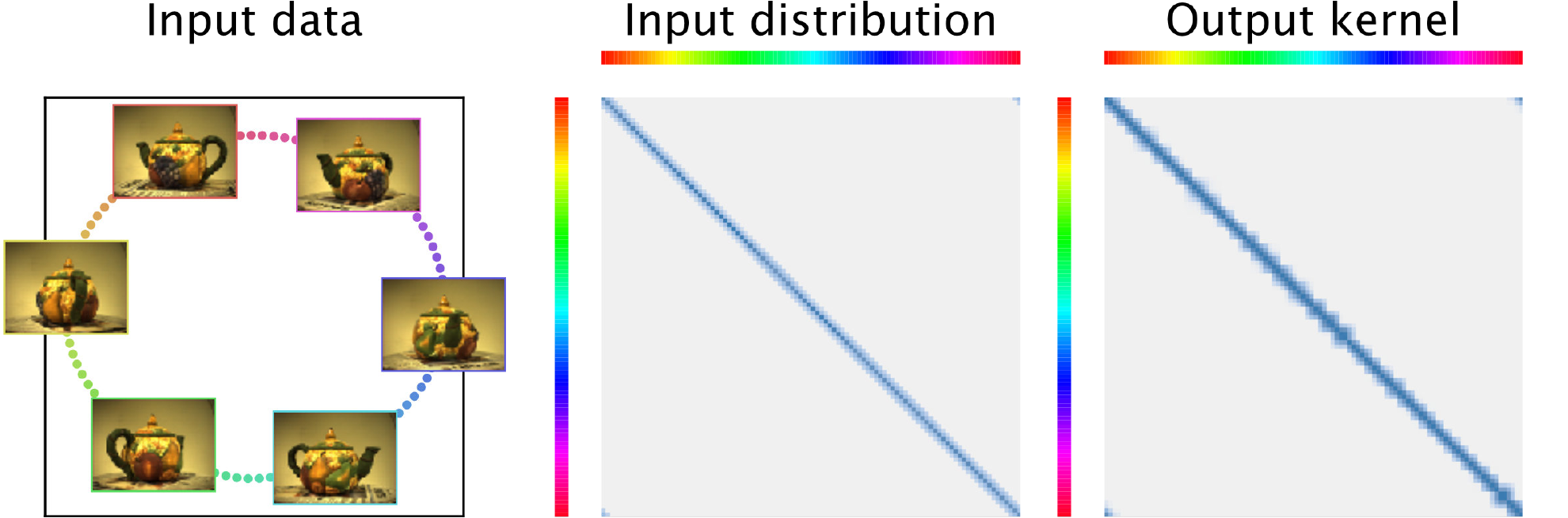}
        &
        \includegraphics[width=\linewidth]{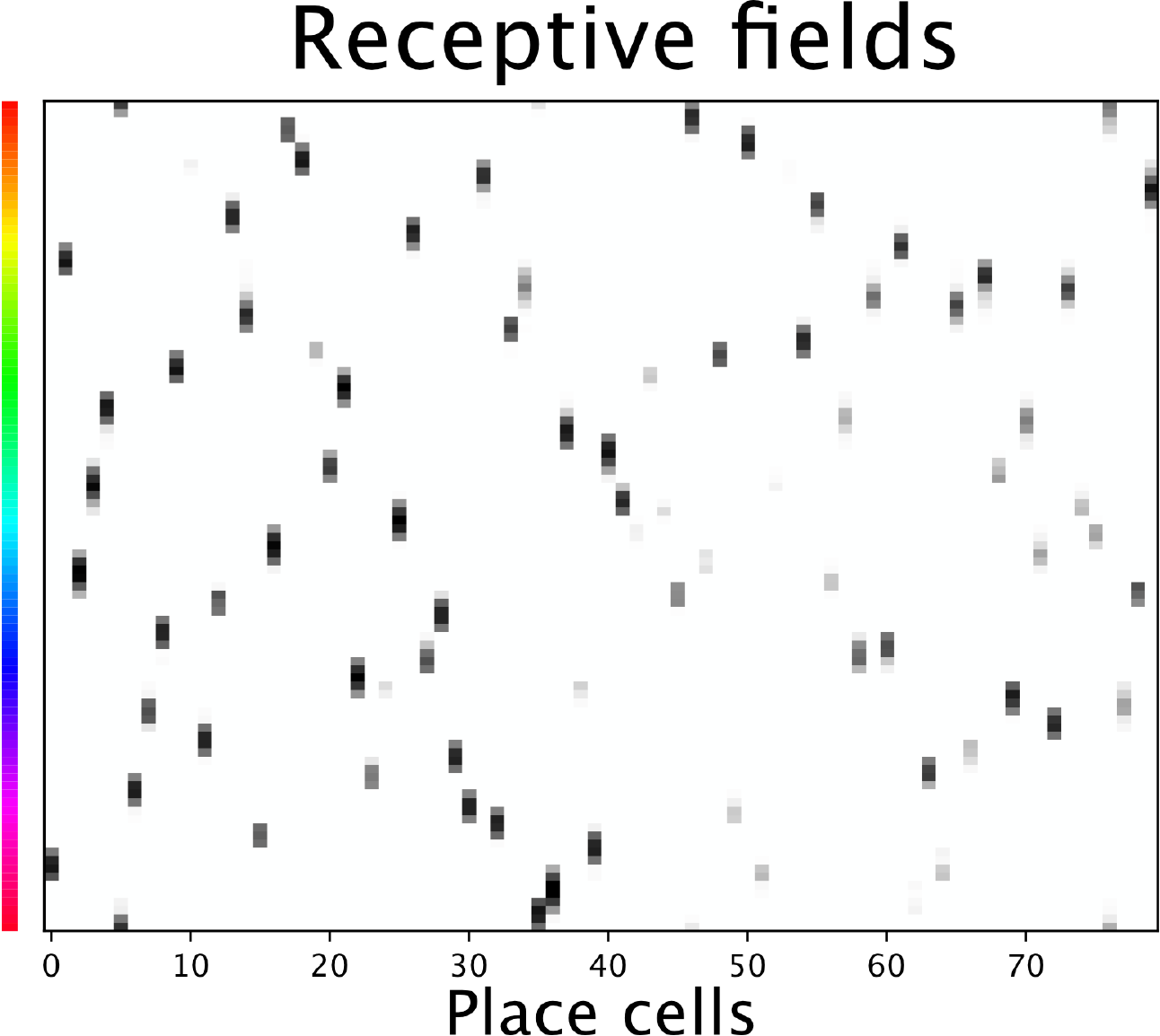}
        \\
    \end{tabu}
    \caption{
        Neural Nystr\"om can correctly represent a circular manifold in a high-dimensional space (23028 dimensions, i.e., $76 \times 101$ pixels, 3 color channels)~\cite{weinbergerIntroductionNonlinearDimensionality2006}. 
        The input conditional probabilities are computed using a row-normalized RBF kernel.
        On the left, we show the first two principal components of the input data.
        Let $\mat{G} = [\vect{g}_{\vect{x}_1}, \cdots, \vect{g}_{\vect{x}_n}] \in \Real^{r \times n}$, see \cref{eq:neustrom_net} and \cref{fig:neustrom_arch_hip_module}.
        The output kernel matrix is $\mat{G} \transpose{\mat{G}}$ and for the receptive fields we plot $\transpose{\mat{G}}$.
    }
    \label{fig:teapot}
\end{figure}

\begin{figure}[t]
    \centering
    
    \begin{tabu} to \textwidth {X[45,c,m] X[16,c,m]}
        \includegraphics[width=\linewidth]{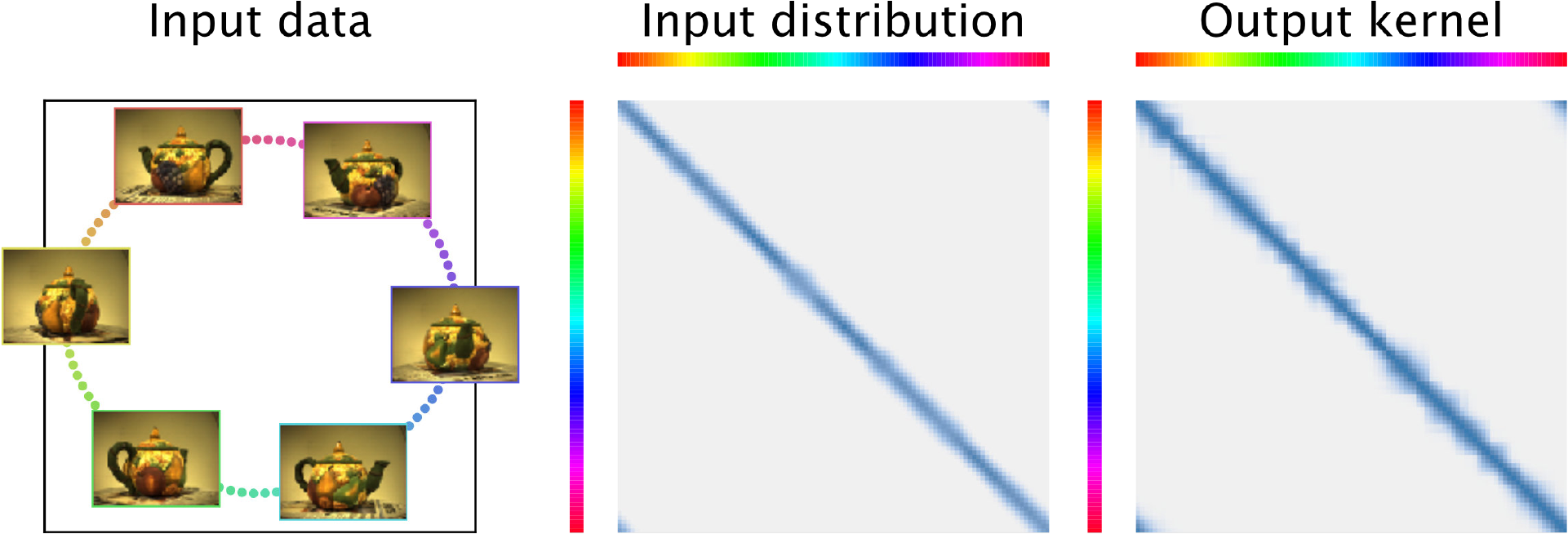}
        &
        \includegraphics[width=\linewidth]{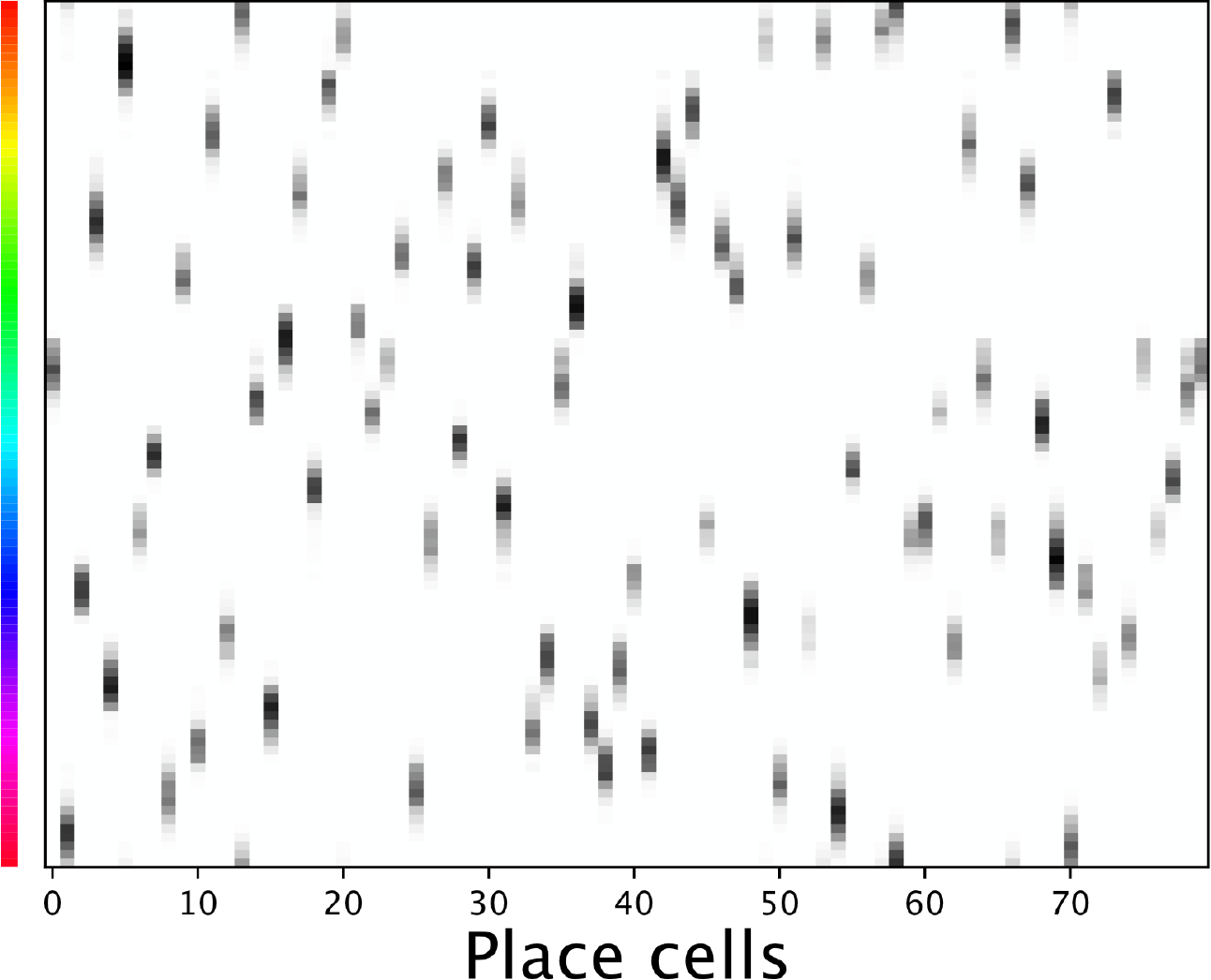}
        \\
    \end{tabu}
    \caption{
        Neural Nystr\"om can correctly represent a circular manifold in a high-dimensional space (23028 dimensions, i.e., $76 \times 101$ pixels, 3 color channels)~\cite{weinbergerIntroductionNonlinearDimensionality2006}.
        The input conditional probabilities are computed using \cref{eq:sdp-kmeans}.
        On the left, we show the first two principal components of the input data.
        Let $\mat{G} = [\vect{g}_{\vect{x}_1}, \cdots, \vect{g}_{\vect{x}_n}] \in \Real^{r \times n}$, see \cref{eq:neustrom_net} and \cref{fig:neustrom_arch_hip_module}.
        The output kernel matrix is $\mat{G} \transpose{\mat{G}}$ and for the receptive fields we plot $\transpose{\mat{G}}$.
    }
    \label{fig:teapot_nomad}
\end{figure}

\begin{figure}[t]
    \centering
    
    \begin{tabu} to .7\textwidth {X[20,c,m] X[12,c,m]}
        \includegraphics[width=\linewidth]{offline_synthetic/circles_land40_class1.pdf}
        &
        \includegraphics[width=\linewidth]{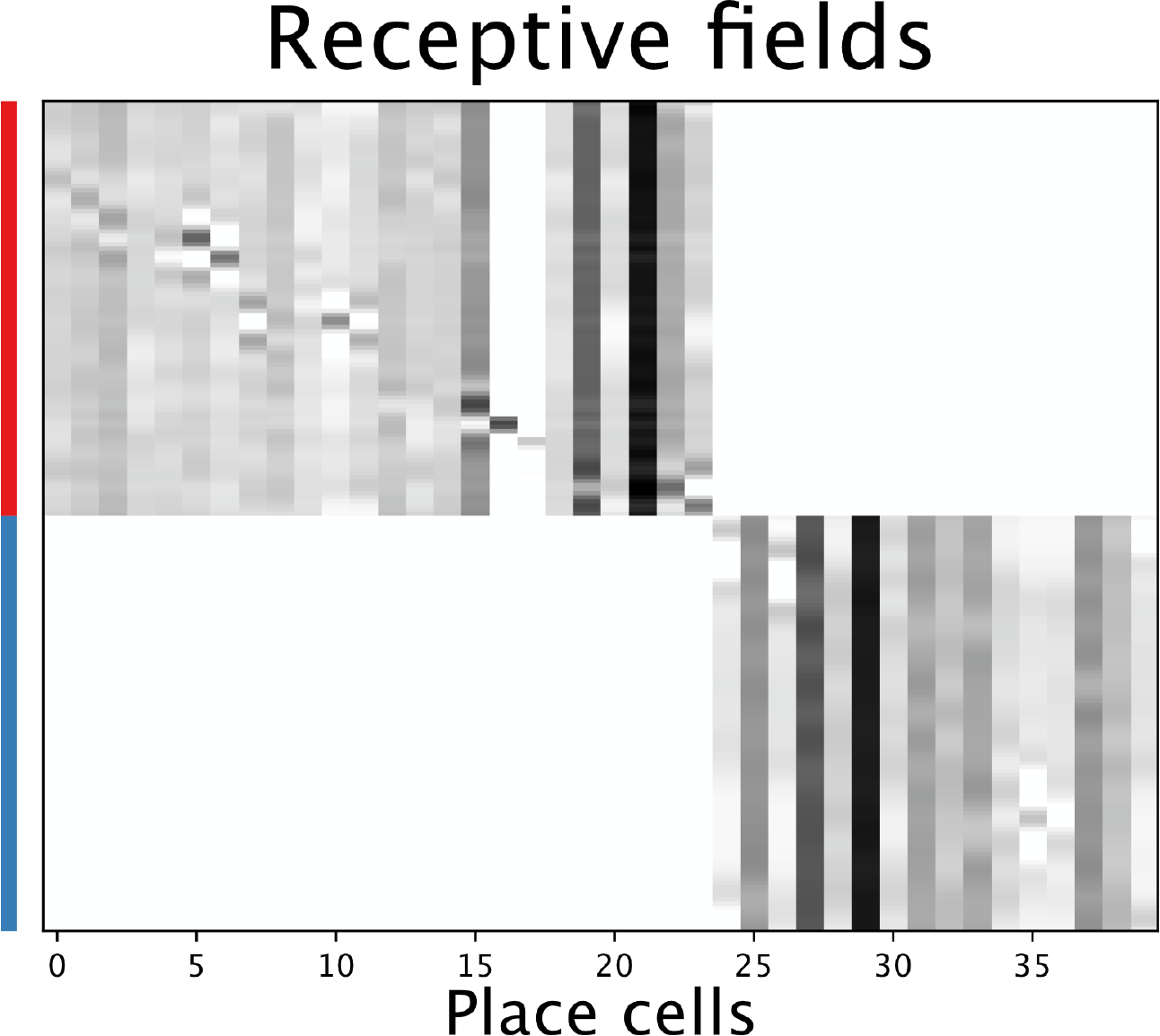}
        \\
        \\[-4pt]
        
        \includegraphics[width=\linewidth]{offline_synthetic/circles_land40_class2.pdf}
        &
        \includegraphics[width=\linewidth]{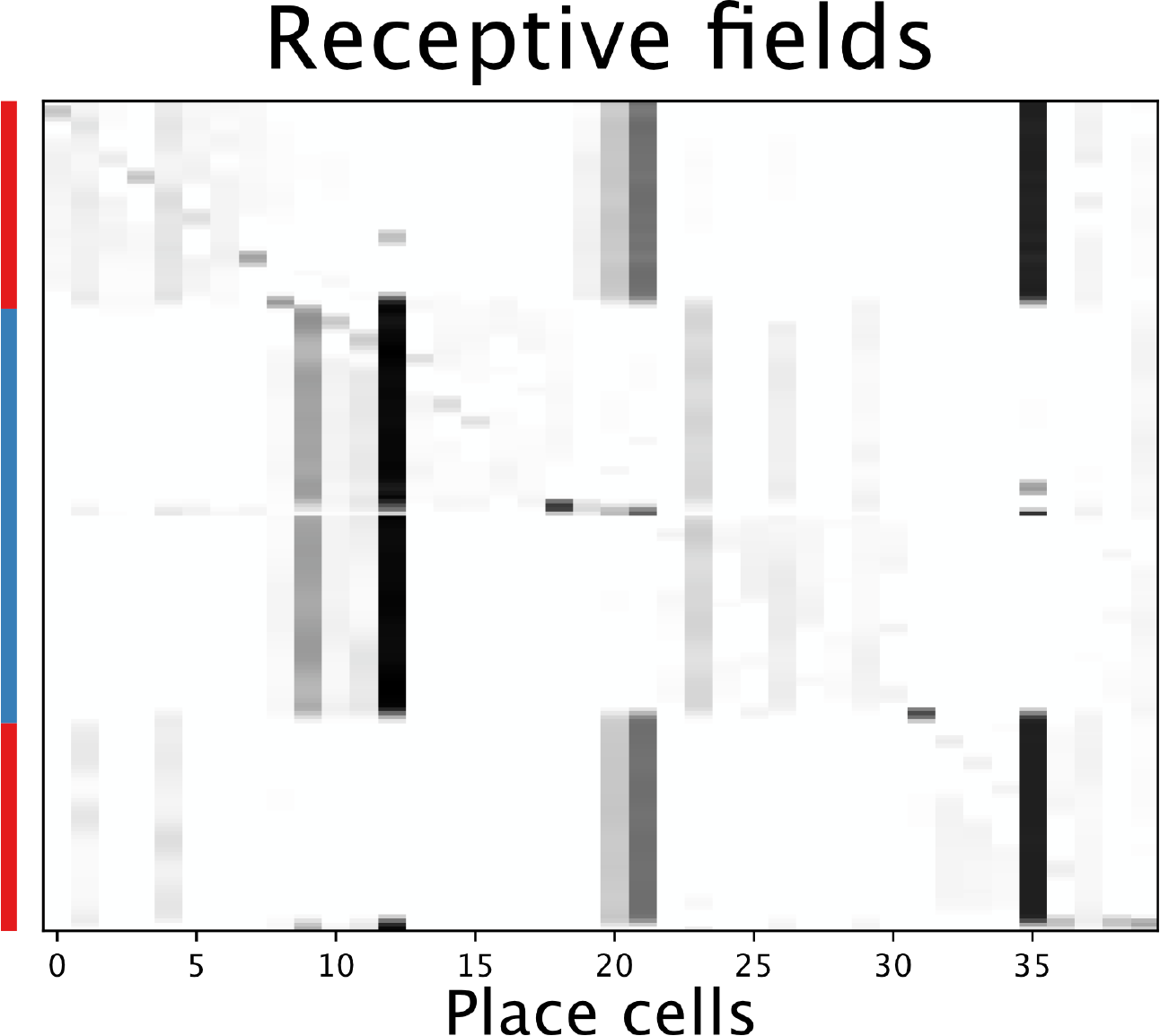}
        \\
    \end{tabu}
        
    \caption{Input data, output kernel, and receptive fields matrix for the experiment in \cref{fig:circles}B.}
    \label{fig:circles_additional}
\end{figure}

\begin{figure}[t]
    \centering
    
    \begin{tabu} to .7\textwidth {X[1,c,m] @{\hspace{4pt}} X[7,c,m] X[9,c,m]}
        &
        \hspace{12pt}Output kernel
        &
        Receptive fields
        \\

        \begin{sideways}
            Unsupervised
        \end{sideways}
        &
        \includegraphics[width=\linewidth]{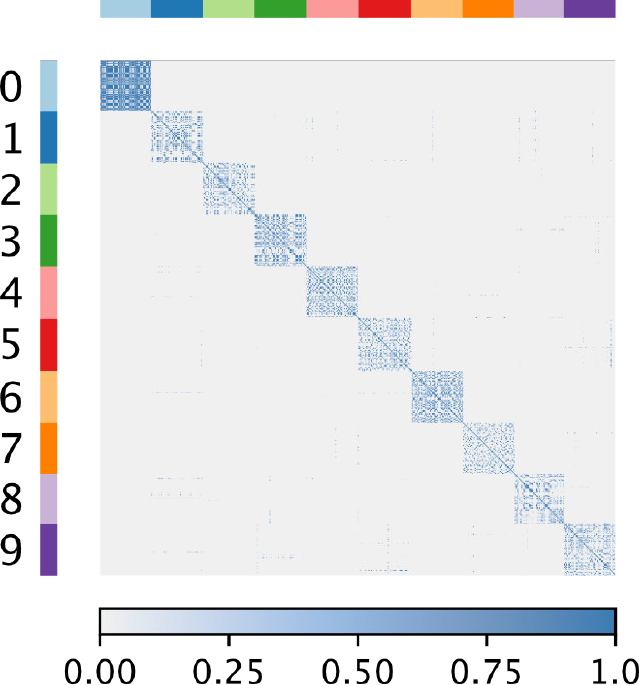}
        &
        \includegraphics[width=\linewidth]{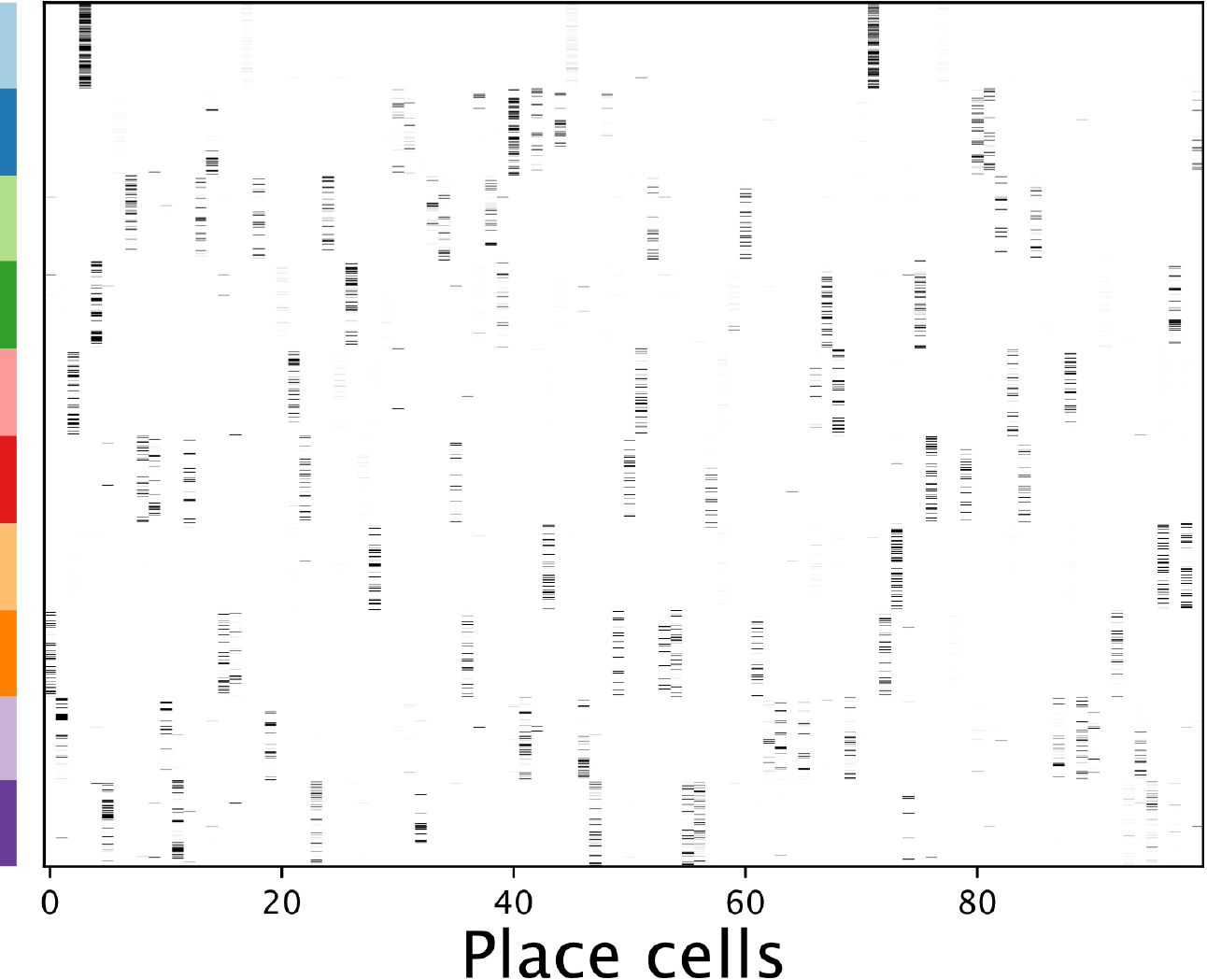}
        \\
        \\[-6pt]
        
        &
        \hspace{12pt}Output kernel
        &
        Receptive fields
        \\
        
        \begin{sideways}
            Supervised
        \end{sideways}
        &        
        \includegraphics[width=\linewidth]{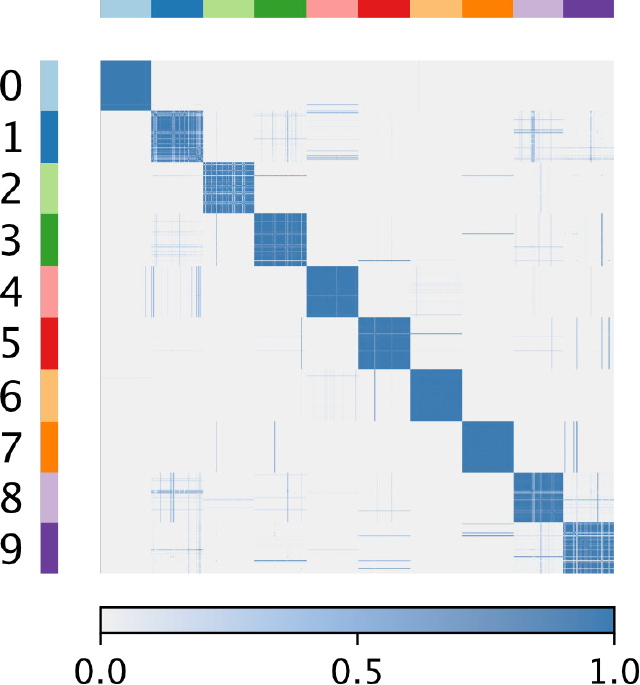}
        &
        \includegraphics[width=\linewidth]{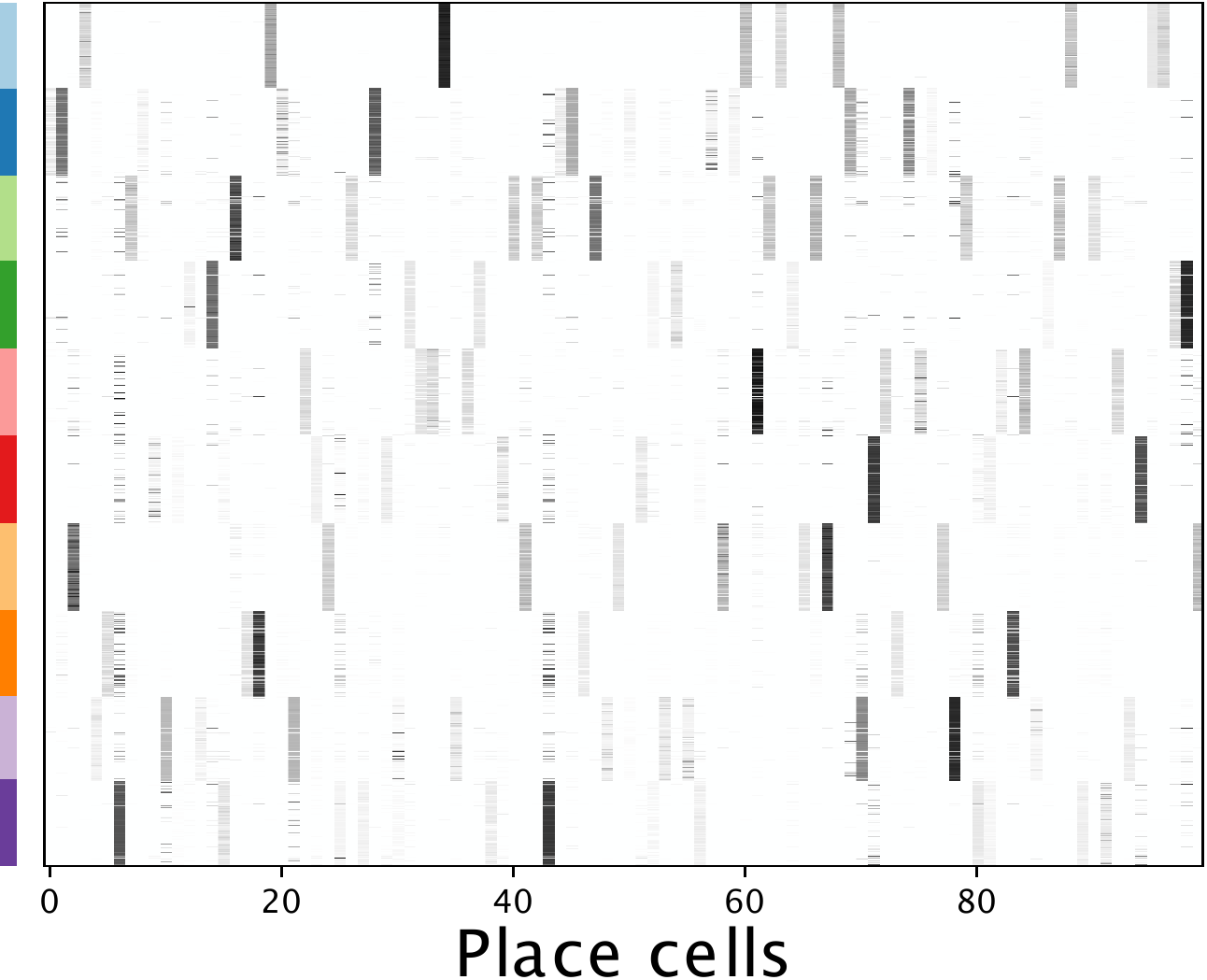}
        \\
    \end{tabu}
        
    \caption{Output kernel and receptive fields matrix for the experiment in \cref{fig:supervised}A, training the supervised layer with annotations on 10\% of the data.}
    \label{fig:digits_additional}
\end{figure}

\begin{figure}[t]
    \centering
    
    \begin{tabu} to .7\textwidth {X[1,c,m] @{\hspace{4pt}} X[7,c,m] X[9,c,m]}
        &
        \hspace{12pt}Output kernel
        &
        Receptive fields
        \\

        \begin{sideways}
            Unsupervised
        \end{sideways}
        &
        \includegraphics[width=\linewidth]{offline_images/digits_nc10_land100_unsupervised_kernel}
        &
        \includegraphics[width=\linewidth]{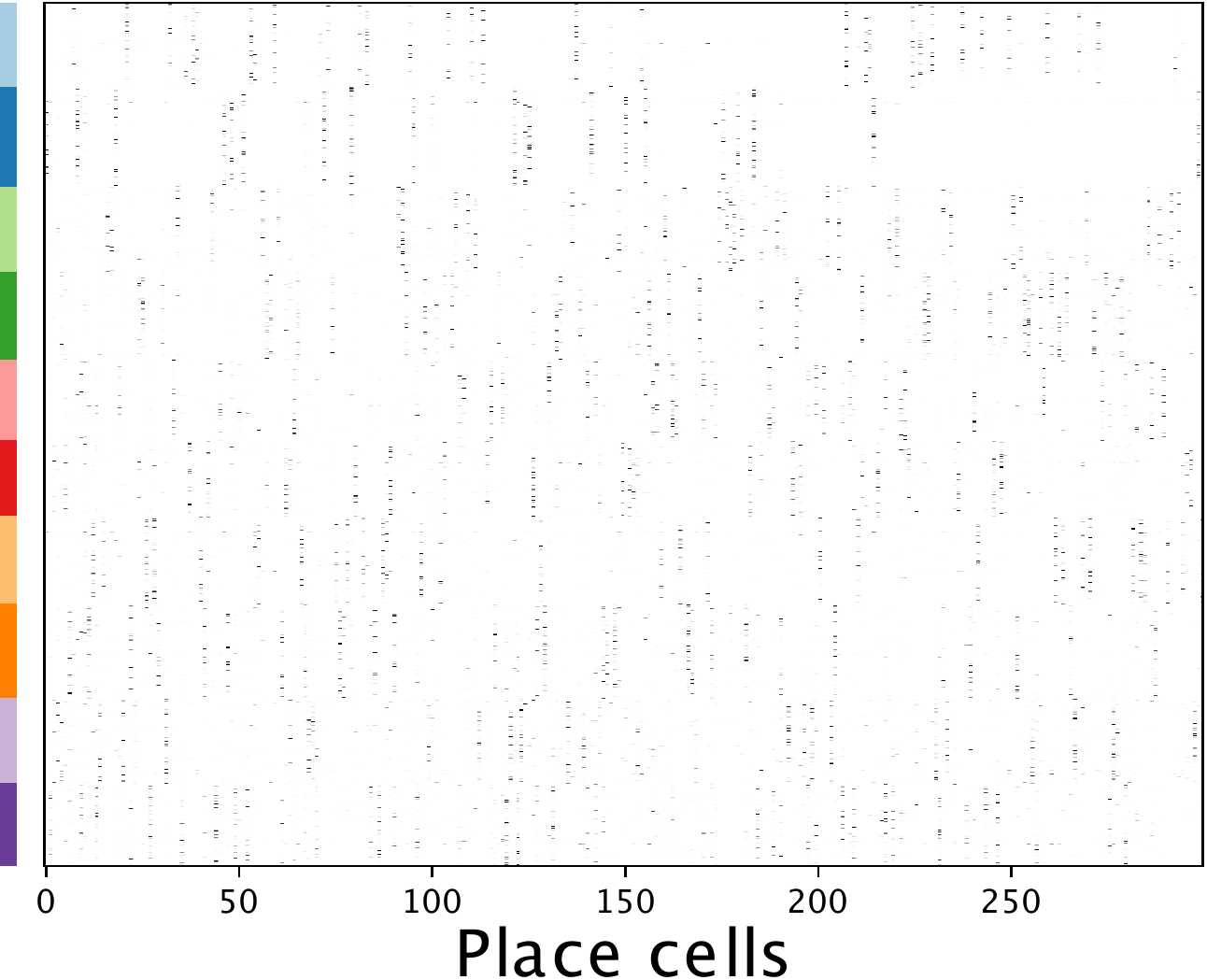}
        \\
        \\[-6pt]
        
        &
        \hspace{12pt}Output kernel
        &
        Receptive fields
        \\
        
        \begin{sideways}
            Supervised
        \end{sideways}
        &        
        \includegraphics[width=\linewidth]{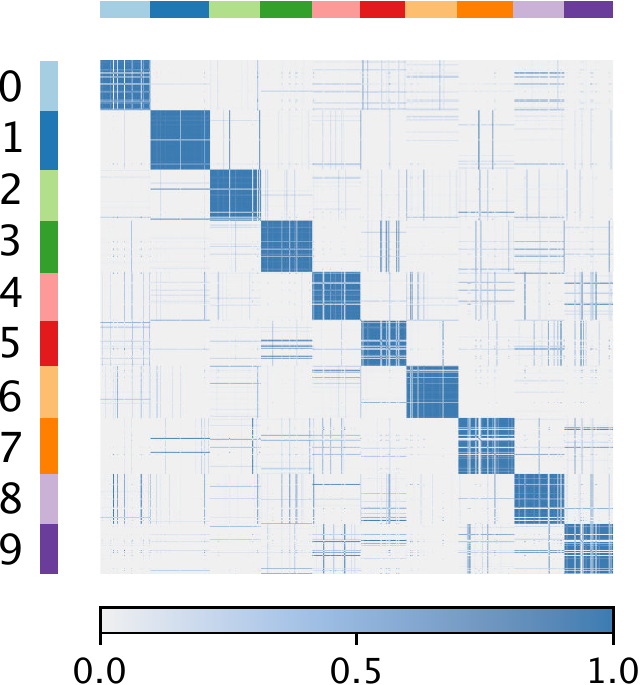}
        &
        \includegraphics[width=\linewidth]{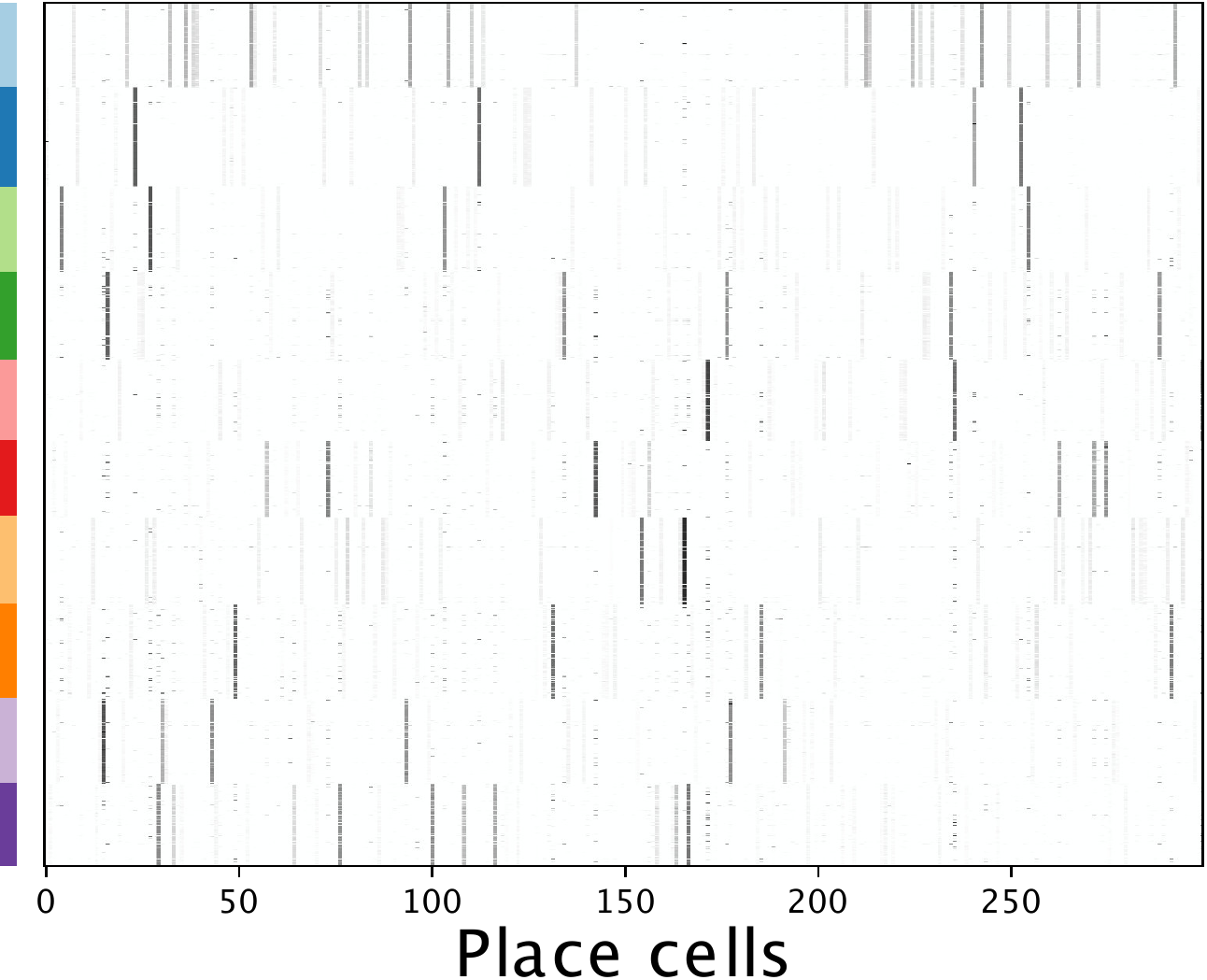}
        \\
    \end{tabu}
        
    \caption{Output kernel and receptive fields matrix for the experiment in \cref{fig:supervised}B, training the supervised layer with annotations on 10\% of the data.}
    \label{fig:mnist_additional}
\end{figure}

\end{document}